\documentclass[a4paper,11pt]{article}

\usepackage{amsmath,amsfonts,amssymb,amsthm}
\usepackage[margin=1in]{geometry}
\usepackage{parskip}
\usepackage{authblk}
\usepackage{bbm}
\usepackage{comment}
\usepackage{accents}
\usepackage{adjustbox}
\usepackage{booktabs}
\usepackage{caption}
\usepackage{tikz}
\usepackage{graphicx}
\usepackage{subcaption}
\usepackage{multirow}
\usepackage{overpic}
\usepackage{enumitem}
\usepackage{rotating}
\usepackage{hyperref}
\hypersetup{
    colorlinks,
    citecolor=black,
    filecolor=black,
    linkcolor=black,
    urlcolor=black
}
\usepackage{algorithm}
\usepackage{algorithmic}

\usepackage{amsmath}
\usepackage{amssymb}
\usepackage{mathtools}
\usepackage{amsthm}
\usepackage{natbib}
\usepackage{xcolor}

\usepackage[most]{tcolorbox}

\definecolor{sand}{RGB}{248,245,235}  
\definecolor{sandborder}{RGB}{220,215,200}
\theoremstyle{plain}
\newtheorem{definition}{Definition}
\newtheorem{example}{Example}

\newtheorem{proposition}{Proposition}
\newtheorem{lemma}{Lemma}
\newtheorem{theorem}{Theorem}

\newtheorem{corollary}{Corollary}

\newcommand{\E}{\mathbb{E}}

\newcommand{\R}{\mathbb{R}}

\usepackage[textsize=tiny]{todonotes}

\title{Is Flow Matching Just Trajectory Replay for Sequential Data?}

\author{
	Soon Hoe Lim,$^{1,2}$\thanks{Equal contribution.} \hspace{+0.3cm} 
    Shizheng Lin,$^{1,2}$$^*$ \hspace{+0.3cm} \\
    Michael W. Mahoney,$^{3,4,5}$ \hspace{+0.3cm} 
     N. Benjamin Erichson$^{4,5}$ \vspace{+0.5cm} \\ 
	$^1$ Department of Mathematics, KTH Royal Institute of Technology \\
	$^2$ Nordita, KTH Royal Institute of Technology and Stockholm University \\
    	$^3$ Department of Statistics, University of California at Berkeley\\
    $^4$ International Computer Science Institute \\
	$^5$ Lawrence Berkeley National Laboratory 
}

\date{}

\begin{document}
\maketitle

\begin{abstract}
Flow matching (FM) is increasingly used in scientific domains for time series generation and forecasting, where data often arise from underlying dynamical systems. However, it is not well-understood whether it learns transferable dynamical structure or simply performs an effective ``trajectory replay''. We study this question by deriving the velocity field targeted by the empirical FM objective on sequential data in the limit of perfect function approximation. For the Gaussian conditional paths commonly used in practice, we show that the implied sampler is an ODE whose dynamics constitutes a nonparametric, memory-augmented continuous-time dynamical system. The optimal field admits a closed-form expression as a similarity-weighted mixture of instantaneous velocities induced by observed transitions, making the dataset dependence explicit and interpretable. This characterization positions neural FM models as parametric surrogates of an ideal nonparametric solution and suggests practical approximation schemes for robust ODE-based generation. As a byproduct of our analysis, the resulting closed-form sampler, FreeFM, provides strong probabilistic forecasts on nonlinear dynamical system benchmarks directly from historical transitions, without  training.
\end{abstract}

\section{Introduction}

Continuous-time models, such as normalizing flows defined by ordinary differential equations (ODEs) \cite{chen2018neural,krishnapriyan2023learning}, have emerged as a powerful and flexible paradigm in generative modeling \cite{tomczak2022deep, lai2025principles}. 
These models specify a generative process through a velocity field $v_\theta(z,t)$ in the ODE, which is described by the process: $z(0) \sim p_0$,
\[
\frac{dz(t)}{dt} = v_\theta(t, z(t)), \quad z(t)\in\mathbb{R}^d,\ t\in[0,1],
\]
which transports a simple base distribution $p_0$ at $t=0$ to a complex data distribution $p_1$ at $t=1$. However, training such models remains challenging and often requires simulation-based objectives or the computationally expensive adjoint method.

Flow matching (FM) methods~\cite{lipman2022flow, lipman2024flow} (including closely related approaches such as Rectified Flow~\cite{liu2022flow} and Stochastic Interpolants~\cite{albergo2023stochastic}) address these challenges by providing a simulation-free regression objective for learning the velocity field. 
FM constructs a target probability path $p_t(z)$ and an associated guiding field $u(t, z)$ that transports samples along this path. 
The model is then trained with regression loss 
$\mathbb{E}\big[\|v_\theta(t, z)-u(t, z)\|^2\big]$, 
where $v_\theta$ denotes a neural network with parameter $\theta$,
leading to highly effective and practically scalable continuous-time generative~models.

Sequence modeling is a natural and increasingly important application domain for FM; see our discussion of related work in App. \ref{app:relatedwork}. 
Sequential data are often viewed as discretizations of underlying continuous-time processes. 
FM formulations can therefore be used to learn generative models of full trajectories using a memory bank of observed one-step transitions \cite{lim2024elucidating, zhang2024trajectory}. Despite promising results, the implicit bias of FM models is not well understood.
This raises a~question:

\begin{tcolorbox}[
  colback=sand,
  colframe=sandborder,
  boxrule=0.3pt,
  left=8pt,
  right=8pt,
  top=3pt,
  bottom=3pt
]
\emph{When FM is applied to sequence modeling, what is the  velocity field $\hat{v}^*(t, z)$ that a perfectly expressive neural network  would learn, given a finite number of data samples?}  
\end{tcolorbox}

\begin{figure*}[!t]
    \centering
    \includegraphics[width=0.98\textwidth]{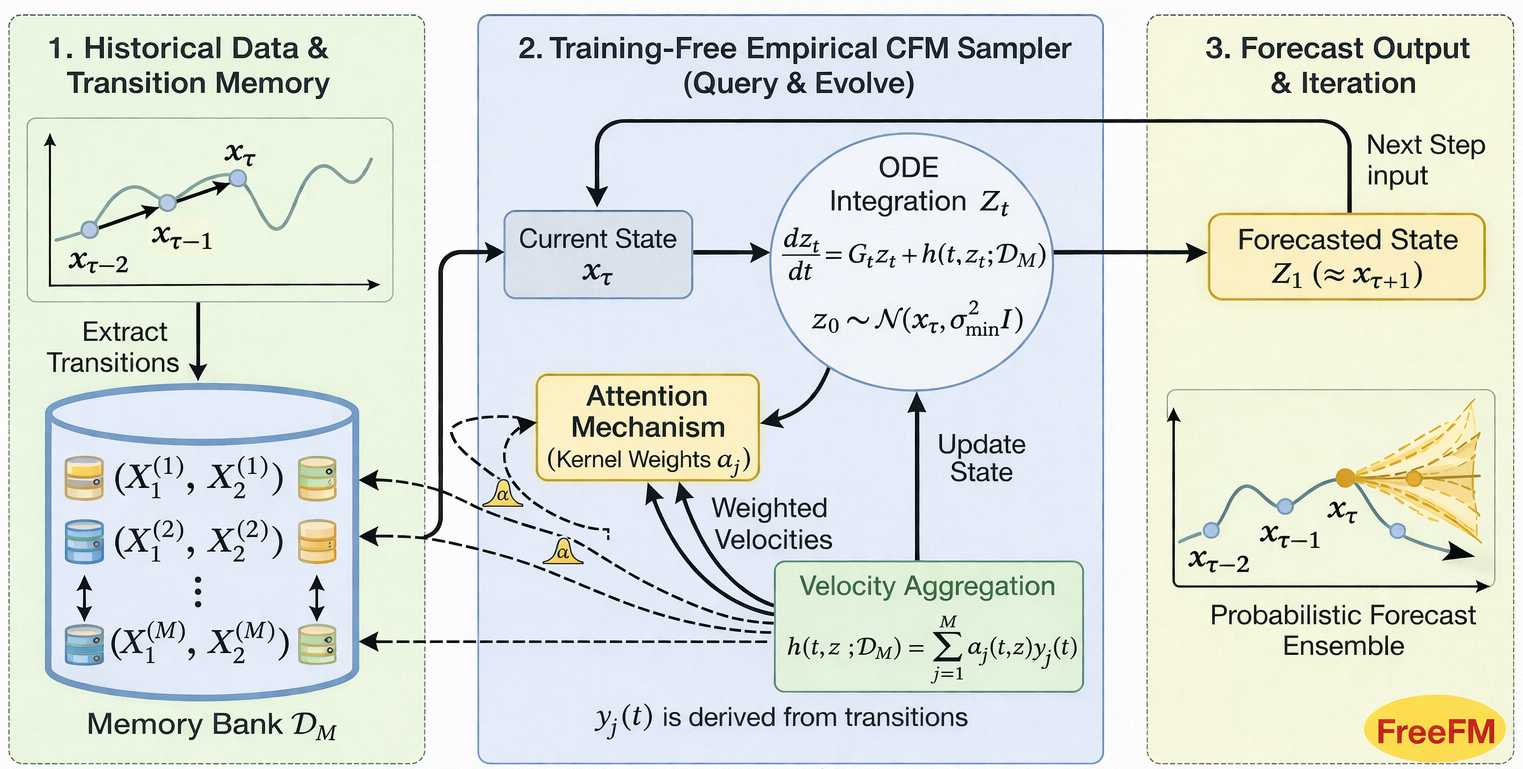}
\caption{{\bf What Dynamical System Are FM Time Series Forecasters Actually Sampling From?} For sequential data, optimal empirical FM induces a certain nonparametric,
memory-augmented ODE, enabling training-free forecasting.  This leads to an ODE sampler $\frac{dZ_t}{dt} = G_t Z_t + h(t, Z_t; \mathcal{D}_M)$ (see \eqref{eq:CFM-memory-ODE}), where the velocity field combines a global linear drift $G_t Z_t$ with a data-adaptive nonlinear memory term $h$. This nonlinear forcing is computed by attending to residual velocities $y_j(t)$, weighted by a kernel attention mechanism $\alpha_j(t,z)$. By initializing $Z_0$ from a Gaussian distribution around the current state $x_\tau$, integrating this ODE gives a next-step forecast $Z_1 \approx x_{\tau+1}$, and the method inherently supports generating an ensemble of forecasts to quantify uncertainty. }
    \label{fig:schematic}

\end{figure*}

In this work, we derive this optimal empirical model, studying its structure in detail, and exploring the implications. 
For the Gaussian path construction commonly used in FM, we show that $\hat{v}^*(t, z)$ admits a \emph{closed-form expression}. 
Remarkably, this gives us a training-free, interpretable sampler which can be viewed as a \emph{nonparametric, memory-augmented continuous-time dynamical system} defined directly by the dataset of historical transitions.
Through this lens, neural FM models trained on sequential data can be reinterpreted as parametric approximations of this ideal nonparametric  model. This perspective unifies continuous-time flow-based generative modeling with nonparametric dynamical systems, offering a principled and data-driven foundation for memory-based sequence modeling.

In more detail, our main contributions are as~follows:
\begin{itemize}[leftmargin=*]
     \item \textbf{Derivation of the Optimal Velocity Field.} 
     Extending the approach in 
    \cite{scarvelis2023closed, bertrand2025closed, li2026kinetic} to the sequential data setting, we derive the closed-form minimizer of the empirical FM objective for historical transition data (see Sec.~\ref{sec:method}). 
    We show that for Gaussian conditional paths, the optimal velocity depends on a similarity-weighted average of certain data-dependent instantaneous velocities. We further study the behavior of the resulting  sampler; see App. \ref{app:duhamel_dm}.
    
    \item \textbf{A Principled Training-Free Sampler.} 
     As a byproduct of our analysis, we propose a training-free  model (FreeFM), see Fig.~\ref{fig:schematic}, that takes advantage of the entire dataset as a memory bank. This sampler operates as a nonparametric dynamical system that blends historical dynamics based on proximity to the current state. We analyze the numerical properties of the ODE sampler, and propose approximation schemes to address computational scalability  and stability (see Sec.~\ref{sec:practical}). Proofs of the presented theoretical results  are provided in App. \ref{app:proof}. 
    
    \item \textbf{Empirical Validation.} Focusing on  data arising in nonlinear dynamics \cite{gilpin2023chaosinterpretablebenchmarkforecasting, berry2025limits}, 
    we validate the theoretical insight and demonstrate the effectiveness of FreeFM on standard benchmark tasks (see Sec.~\ref{sec:exp}). Our results show that FreeFM can perform competitively with trained neural nets (see Fig. \ref{fig:ConditionalForecast}), suggesting that it can serve as a simple and effective alternative to deep parameterization in some forecasting settings. For a broader evaluation, we also provide results on several low- and high-dimensional real-world datasets (see App. \ref{app_realworld}).  Source code is available at \url{https://github.com/shoelim/FreeFM}. 
\end{itemize}



\section{Training-Free Models for  Dynamical Systems} 
\label{sec:method}

In this section, we specialize FM and conditional FM (CFM) (see App. \ref{app:background} for the background) to the
setting of sequential data whose underlying dynamics come from a dynamical system.  
We then study empirical FM and use the optimal solution  to derive a training-free model for probabilistic forecasting. Finally, we discuss various interpretations of this model and connect it to existing approaches.


{\bf Setting.} We are given $N$ independent realizations of the trajectory of states in $\mathbb{R}^d$, each sampled at equidistant intervals:
\[
\big\{ x^{(n)}_\tau  \in \mathbb{R}^d :\ \tau = 0,\dots, T_n-1 \big\}, \ n=1, \dots, N, 
\]
where $x_\tau^{(n)}$ is the realization of  the  state $X_\tau$ at time index $\tau$.
From each realized trajectory, we extract all consecutive one-step transitions $(x_\tau^{(n)},\, x_{\tau + 1}^{(n)})$.
Collecting these transitions across all trajectories yields the \emph{transition dataset}:
$$\mathcal{D}_M := \big\{ X^{(j)} \big\}_{j=1}^{M}, \quad M = \sum_{n=1}^N (T_n - 1).$$
Here, we introduce a single transition index $j \in \{1,\dots,M\}$ via a bijective mapping
$j \longmapsto \big(\tau(j),\, n(j)\big)$. 
Each $j$ indexes a specific realized transition pair:
$$X^{(j)} = \big( x_{\tau(j)}^{(n(j))},\, x_{\tau(j)+1}^{(n(j))} \big) =: \big( X^{(j)}_1,\, X^{(j)}_2 \big),$$
realized from the random variable $X$.
This flattened representation $\mathcal{D}_M = \{ \big( X^{(j)}_1,\, X^{(j)}_2 \big)  \}_{j=1}^M$ serves
as the \emph{memory bank} for the empirical CFM sampler that we introduce later. The key idea is to construct a continuous-time velocity field whose flow interpolates between the neighborhoods of $X_1^{(j)}$ and $X_2^{(j)}$ for all transitions in $\mathcal{D}_M$. 
Note that the constructions of the memory bank  and the sampler rely only on observed transitions, and the core procedure holds without imposing any assumptions on the underlying data dynamics.



Next, we address the earlier question for the transition data setting and derive a training-free model driven by a closed-form velocity field. 

\subsection{General Affine Conditional Probability Paths}
\label{sec:ts-affine-general}

We instantiate the general FM framework for our transition dataset. Let the conditioning variable be a transition pair $X := (X_1, X_2) \sim P_X$. We associate with each transition a probability path on the state space $\mathbb{R}^d$. 

Let $Z \in \mathbb{R}^d$ be a base random variable with probability density function (PDF) $K$.
For each transition $X$, we define an affine conditional flow map $\psi_t(\cdot \mid X): \mathbb{R}^d \to \mathbb{R}^d$ as:
\begin{equation}
\psi_t(Z \mid X) = m_t(X) + \sigma_t(X)\, Z,
\end{equation}
where
$m_t : [0,1]\times\R^{2d} \to \R^{d}, $
$\sigma_t : [0,1]\times\R^{2d} \to (0,\infty)$
are differentiable in $t$. Defining the random variable $Z_t := \psi_t(Z \mid X)$ for $t \in [0,1]$, the induced conditional density is given by the change of variable formula: 
\begin{equation}
p_t(z \mid X)
=
\frac{1}{\sigma_t(X)^d}\,
K\!\left( \frac{z - m_t(X)}{\sigma_t(X)} \right).
\label{eq:ts-probpath-affine}
\end{equation}

This flow is generated by a vector field $v(t,z \mid X)$ that is affine with respect to $z$. 
More precisely, the (unique) vector field that generates $\psi_t$ via the ODE
\(\tfrac{d}{dt}\psi_t(Z\mid X) = v(t, \psi_t(Z\mid X)\mid X)\) is:
\begin{equation}
v(t,z\mid X) = a_t(X)\, z + b_t(X), \quad \text{ with:}
\end{equation}
\begin{equation*}
a_t(X) = \frac{\partial_t \sigma_t(X)}{\sigma_t(X)},
\quad
b_t(X) = \partial_t m_t(X) - a_t(X)\, m_t(X).
\label{eq:ts-ab}
\end{equation*}
The population mixture path is defined as the marginal density:
\begin{equation}
p_t(z)
=
\int p_t(z \mid X=x)\, dP_X(x),
\end{equation}
where $P_X$ is the transition distribution on the product space.
Our goal is to find a velocity field $v(t,z)$ that generates this marginal path, so that the constructed flow transports $p_0$ to $p_1$ along
$\{p_t\}_{t\in[0,1]}$.

\subsection{Empirical CFM and Closed-Form Solution}
\label{sec:ts-emp-affine}

The transition dataset $\mathcal D_M$ defines an empirical joint law
$\hat p_1 = \frac1M\sum_{j=1}^M \delta_{X^{(j)}}  \text{ on } \R^{2d}.$
Equivalently, we may introduce a discrete latent index $C \in  \{1, \dots, M\}$ with the uniform prior $\pi(j)=1/M$ and set $X=X^{(C)}$.
This choice corresponds to a paired (diagonal) coupling between consecutive states, in contrast to an independent or optimal transport  coupling between marginal endpoint samples.

Substituting the population measure $P_X$ with the empirical measure $\hat p_1$, the corresponding empirical marginal path becomes a mixture of the conditional densities:
\[
\hat p_t(z)
=
\frac{1}{M} \sum_{j=1}^M p_t(z \mid X^{(j)}).
\]
We define the \emph{empirical responsibilities} (or posterior weights) $w_j(t, z)$ as the normalized contribution of the $j$-th transition to the density at location $z$ and time $t$:
\begin{equation}
w_j(t, z)
=
\frac{
    p_t(z \mid X^{(j)})
}{
    \sum_{k=1}^M p_t(z \mid X^{(k)})
},
\quad j=1,\dots,M.
\label{eq:ts-weights-general}
\end{equation}
Let the conditioning variable $X \sim \hat{p}_1$ on $\R^D$, where $D=2d$ for the transition setting,  and let the conditional flow $p_t(z|X)$ be a path on $\R^d$.  The following theorem provides the analytic minimizer for the FM objective under this mixture model, answering the earlier question.

\begin{theorem}[Closed-Form Empirical  FM]
\label{thm:ts-emp-affine}
For the affine conditional flow generated by  $v(t,z|X) = a_t(X)z + b_t(X)$ (where $a_t: \R^D \to \R$, $b_t: \R^D \to \R^d$), 
the (unique) minimizer of the empirical CFM (equivalently FM) objective
\begin{align}
&\hat{\mathcal{L}}_{\mathrm{CFM}}[v'] =
\mathbb{E}_{t}\,
\mathbb{E}_{X}\,
\mathbb{E}_{Z_t}
\|v'(t,Z_t) - v(t,Z_t\mid X)\|^2,
\end{align}
where the expectation is over $t\sim\mathcal{U}[0,1]$, $X\sim\hat p_1$ and $Z_t\sim p_t(\cdot\mid X)$, 
admits the  closed-form:
\begin{equation} \label{eq_closedformv}
\hat{v}^*(t,z)
=
\sum_{j=1}^M
    w_j(t, z)\, \big(
        a_t(X^{(j)})\, z + b_t(X^{(j)})
    \big),
\end{equation}
where the weights $w_j(t, z)$ are given by \eqref{eq:ts-weights-general}.
\end{theorem}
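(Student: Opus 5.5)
The plan is to treat the objective as a weighted least-squares problem in $L^2$ and minimize it pointwise in $(t,z)$. First, I will rewrite the objective using the empirical measure explicitly. Since $X\sim\hat p_1$ is uniform on $\{X^{(j)}\}_{j=1}^M$, we have
\begin{equation*}
\hat{\mathcal{L}}_{\mathrm{CFM}}[v'] = \int_0^1 \int_{\R^d} \frac{1}{M}\sum_{j=1}^M p_t(z\mid X^{(j)})\,\|v'(t,z)-v(t,z\mid X^{(j)})\|^2\,dz\,dt .
\end{equation*}
This uses only Fubini/Tonelli, which is legitimate since the integrand is nonnegative. The key observation is that $v'$ enters only through its value at $(t,z)$, so the functional decouples across points. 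For fixed $(t,z)$ with $\hat p_t(z)>0$, the inner quantity $\sum_j p_t(z\mid X^{(j)})\|u - v(t,z\mid X^{(j)})\|^2$ is a strictly convex quadratic in $u\in\R^d$.

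Next, I will identify the pointwise minimizer by a bias–variance decomposition. Set $\bar u(t,z):=\sum_j w_j(t,z)\,v(t,z\mid X^{(j)})$ with $w_j$ as in \eqref{eq:ts-weights-general}. Expanding around $\bar u$ and noting that the cross term vanishes because $\sum_j w_j\,(v(t,z\mid X^{(j)})-\bar u)=0$, I get
\begin{equation*}
\hat{\mathcal{L}}_{\mathrm{CFM}}[v'] = \int_0^1\!\!\int \hat p_t(z)\,\|v'(t,z)-\bar u(t,z)\|^2\,dz\,dt + \hat{\mathcal{L}}_{\mathrm{CFM}}[\bar u].
\end{equation*}
The second term does not depend on $v'$. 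Hence $\bar u$ is a minimizer, and substituting $v(t,z\mid X)=a_t(X)z+b_t(X)$ gives exactly \eqref{eq_closedformv}. The same decomposition, with $\mathbb{E}[v(t,Z_t\mid X)\mid Z_t=z]=\bar u$, shows that the FM objective against the marginal field differs from the CFM objective by a constant. This justifies the parenthetical ``equivalently FM''. I would also note that $\bar u$ generates $\hat p_t$ via the continuity equation, by linearity of the continuity equation over the mixture.

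Finally, I will address uniqueness, which I expect to be the main subtle step. The decomposition shows that minimizers are unique only up to sets of $\hat p_t(z)\,dz\,dt$-measure zero, i.e.\ as elements of $L^2(\hat p_t\,dz\,dt)$. I will interpret ``unique'' in this sense. When $K$ is strictly positive (e.g.\ Gaussian), $\hat p_t>0$ everywhere, so the minimizer is unique a.e.\ on $[0,1]\times\R^d$, and the weights are well defined everywhere.

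A finiteness check is also needed so that the subtraction of $\hat{\mathcal{L}}_{\mathrm{CFM}}[\bar u]$ is valid. The conditional fields are affine with $Z_t=m_t+\sigma_t Z$, so finiteness reduces to $\int_0^1 (|a_t|^2\,\mathbb{E}\|Z_t\|^2+\|b_t\|^2)\,dt<\infty$. I would state this as a standing assumption: $K$ has finite second moment, and $a_t,b_t$ are square-integrable in $t$. If it fails (e.g.\ $\sigma_t\to0$ at an endpoint), I would restrict to $t\in[0,1-\epsilon]$.
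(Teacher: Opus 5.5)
Your proposal is correct and follows essentially the paper's route: minimize the strictly convex quadratic pointwise in $(t,z)$ to get the conditional expectation $\E[v(t,z\mid X^{(C)})\mid Z_t=z]$ with Bayes-rule weights $w_j$, then substitute the affine form; your bias--variance decomposition is just the Pythagorean identity behind the paper's $L^2$-projection step. Your explicit treatment of $\hat p_t\,dz\,dt$-a.e.\ uniqueness and of finiteness tightens details the paper leaves implicit, though your finiteness condition is stronger than needed: $a_t(X)Z_t+b_t(X)=\dot m_t(X)+\dot\sigma_t(X)\,Z$, so $\int_0^1(\|\dot m_t\|^2+\dot\sigma_t^2\,\E\|Z\|^2)\,dt<\infty$ for each $X^{(j)}$ already suffices, and no endpoint restriction is needed when $\sigma_t\to 0$ (e.g.\ rectified flow).
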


The optimal velocity field $\hat{v}^*$ is a weighted mixture of the affine velocity fields per-transition attached to each observed transition, with weights determined by the  posterior probability that the point $z$ at time $t$ belongs to the conditional path originating from $X^{(j)}$. 
The formula \eqref{eq_closedformv} allows us to evaluate the vector field at any point $(t,z)$ simply by summing the data set, without training neural nets.

\subsection{Gaussian Bridge Conditional Path}

We now specialize the general framework to the Gaussian  path  proposed in \cite{lim2024elucidating}; see Fig. \ref{fig:bb} for an illustration. Although the specific probability path is a modeling choice, we take it as our canonical model due to its principled motivation from dynamical optimal transport.

For each transition $X^{(j)} := (X_1^{(j)}, X_2^{(j)})$, we define
\begin{equation} \label{eq_ourbbpath}
Z_t^{(j)} = (1-t)X^{(j)}_1 + t X^{(j)}_2 + c_t \xi,
\quad
\xi \sim\mathcal N(0,I_d),
\end{equation}
where
$c_t^2 = \sigma_{\min}^2 + \sigma^2 t(1-t)$ with $\sigma \geq 0$ and  $\sigma_{\min}>0$ (which prevents degeneracy of the posterior responsibilities near
$t=0,1$).
Applying Theorem \ref{thm:ts-emp-affine} to this specific Gaussian path gives the following result.

\begin{proposition} 
\label{prop:freefm}
For the Gaussian path \eqref{eq_ourbbpath}, the optimal empirical CFM velocity field is given by:
\begin{equation}
\label{eq:bb-vstar}
\hat v^*(t,z)
=
G_t z
+ h(t, z; \mathcal{D}_M), 
\ \  \hspace{1cm}
   h(t, z; \mathcal{D}_M) =  \sum_{j=1}^M
\alpha_j(t,z) y_j(t).
\end{equation}
Here,  $y_j(t) = \dot m_t^{(j)} - G_t m_t^{(j)}$, where
$m_t^{(j)} = (1-t)X^{(j)}_1 + t X^{(j)}_2$, 
$\ \ \dot m_t^{(j)} = X^{(j)}_2 - X^{(j)}_1,$
\begin{equation}
G_t
= g(t) I_d := \frac{\sigma^2(1-2t)}{2(\sigma_{\min}^2 + \sigma^2 t(1-t))}\, I_d,
\end{equation}
\begin{equation}
w_j(t, z) = \alpha_j(t,z)
=
\frac{
\exp\!\left(-\frac{\|z-m_t^{(j)}\|^2}{2 c_t^2}\right)
}{
\sum_{k=1}^M 
\exp\!\left(-\frac{\|z-m_t^{(k)}\|^2}{2 c_t^2}\right)
}.
\end{equation}    
\end{proposition}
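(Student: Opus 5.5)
The plan is to apply Theorem~\ref{thm:ts-emp-affine} directly, after putting the Gaussian bridge path \eqref{eq_ourbbpath} in the affine form $\psi_t(Z\mid X)=m_t(X)+\sigma_t(X)Z$ with $K$ the standard Gaussian density on $\R^d$. We set $m_t(X)=(1-t)X_1+tX_2$ and $\sigma_t(X)=c_t=\sqrt{\sigma_{\min}^2+\sigma^2t(1-t)}$. The scale is independent of $X$, strictly positive because $\sigma_{\min}>0$, and differentiable on $[0,1]$. So the hypotheses of the theorem hold, and \eqref{eq_closedformv} gives $\hat v^*(t,z)=\sum_j w_j(t,z)\bigl(a_t(X^{(j)})z+b_t(X^{(j)})\bigr)$.

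The next step is to compute the coefficients $a_t,b_t$. Since $c_t^2=\sigma_{\min}^2+\sigma^2t(1-t)$, we have $2c_t\dot c_t=\sigma^2(1-2t)$. Hence
\[
a_t=\frac{\dot c_t}{c_t}=\frac{\sigma^2(1-2t)}{2(\sigma_{\min}^2+\sigma^2t(1-t))}=g(t).
\]
This quantity does not depend on $j$. Also $b_t(X^{(j)})=\dot m_t^{(j)}-g(t)m_t^{(j)}=y_j(t)$, where $\dot m_t^{(j)}=X_2^{(j)}-X_1^{(j)}$. Because the weights sum to one, $\sum_j w_j\, g(t)z=G_tz$. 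The remaining sum is $\sum_j w_j y_j(t)=h(t,z;\mathcal D_M)$, which gives \eqref{eq:bb-vstar}.

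Finally, we identify the weights. By \eqref{eq:ts-probpath-affine}, $p_t(z\mid X^{(j)})=(2\pi c_t^2)^{-d/2}\exp\!\bigl(-\|z-m_t^{(j)}\|^2/(2c_t^2)\bigr)$. The prefactor $(2\pi c_t^2)^{-d/2}$ is common to all $j$, so it cancels in the ratio \eqref{eq:ts-weights-general}. This leaves exactly the softmax expression $\alpha_j(t,z)$.

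There is no real obstacle here. The only point needing care is that $\sigma_t$ is independent of $X$: this is what lets the linear part $a_t z$ factor out of the mixture as the global drift $G_tz$, and what makes the Gaussian normalizing constants cancel in the weights.
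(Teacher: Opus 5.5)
Your proposal is correct and follows essentially the paper's own route: the paper obtains this proposition by specializing Theorem~\ref{thm:ts-emp-affine} to the Gaussian path with $m_t=(1-t)X_1+tX_2$ and $\sigma_t=c_t$. As you do, this gives $a_t=\dot c_t/c_t=g(t)$ and $b_t(X^{(j)})=y_j(t)$, and the $X$-independent Gaussian normalizing constant cancels in the weights, leaving the softmax $\alpha_j$.
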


This velocity decomposes into a global linear term $G_t z$ and a local
data-dependent nonlinear term $h(t, z; \mathcal{D}_M)$ weighted by Gaussian-kernel attention. In general, different choices of the conditional path give rise to a different form of $\hat{v}_t^*$ and thus to a different sampler. This allows us to engineer paths tailored to specific data and sampler properties.


\noindent {\bf Continuous-time memory-based sampler.}
The closed-form empirical CFM velocity $\hat v^*(t,z)$ in Proposition \ref{prop:freefm} yields
a \emph{training-free sampler} that uses the entire dataset as a memory bank.
Given an initial state $x_\tau \in \R^d$ at discrete time $\tau$, we evolve
the continuous-time state $Z_t$ according to
\begin{equation}
\label{eq:CFM-memory-ODE}
\frac{dZ_t}{dt} = G_t Z_t
+ h(t, Z_t; \mathcal{D}_M), 
\ 
Z_0 \sim \mathcal N(x_\tau, \sigma_{\text{min}}^2 I_d),
\end{equation}
for some $\sigma_{\text{min}} > 0$, to obtain an estimate $Z_1$ for the next state $x_{\tau+1}$. 
Iterating this map produces a multi-step forecaster,
\begin{equation}
\Phi_{0\to 1}\!\big(
\Phi_{0\to 1}(\cdots \Phi_{0\to 1}(x_\tau)\cdots)
\big) \approx x_{\tau+m},
\end{equation}
where $\Phi_{0\to 1}$ is the flow induced by the ODE
\eqref{eq:CFM-memory-ODE}. Using this training-free, closed-form model, we can generate  ensembles of new samples  by numerically integrating the ODE. Importantly, by performing  Monte-Carlo multi-step generation by propagating multiple particles through the  ODE, we obtain both mean predictions and full predictive distributions for uncertainty estimation. 

This sampler operates as a \emph{nonparametric memory-augmented dynamical system}:
the velocity field at each $(t,z)$ is computed by applying soft attention to
the stored transitions, effectively blending historical dynamics
based on proximity to the current ODE state (see also App. \ref{subsec:ts_unified_interpretation}). 
Moreover, we can connect the velocity field with the empirical score function $\hat{s}^*(t,z) := \nabla_z \log \hat{p}_t(z)$:
\begin{equation} \label{eq_decomp}
    \hat{v}^\ast(t, z) = \sum_{j=1}^M w_j(t,z)\, \dot{m}_t(X^{(j)}) - \frac{\sigma^2(1-2t)}{2} \hat{s}^*(t, z).
\end{equation}
This  shows that the optimal CFM drift is the sum of the data's velocity mixture and a force that pushes the flow along the steepest ascent of the log-density (the score function).

\noindent {\bf Trajectory replay vs. score-based correction.}
The  sampler evolves by continuously
averaging and replaying intrinsic velocity segments of stored transitions, with
kernel weights determined by proximity in state space at the current sampler time.
In the small-bandwidth regime, the forcing approaches nearest-neighbor replay of
individual transition segments, recovering a local model of the dynamics. Larger bandwidths produce smoother, more global averaging. More precisely, from \eqref{eq_decomp}:
\begin{tcolorbox}[
  colback=sand,
  colframe=sandborder,
  boxrule=0.3pt,
  left=8pt,
  right=8pt,
  top=1pt,
  bottom=1pt
]
\[
\hat v^*(t,z)
=
\underbrace{\sum_{j=1}^M \alpha_j(t,z)\,(X_2^{(j)}-X_1^{(j)})}_{\text{transition replay}}
\;+\;
\underbrace{G_t\,\Big(z-\sum_{j=1}^M \alpha_j(t,z)\,m_t^{(j)}\Big)}_{\text{mixture score correction}},
\]
\end{tcolorbox}
and the optimal empirical FM velocity field decomposes into two structurally distinct components.  

The first term is a \emph{transition replay} term, which averages observed one-step transition vectors $X_2^{(j)}-X_1^{(j)}$ using responsibilities $\alpha_j(t,z)$ that measure how close $z$ is to each interpolated mean $m_t^{(j)}$. This term acts as a soft nearest-neighbor dynamical model.  In the limit $\sigma \to 0$, the Gaussian kernels collapse and the responsibilities converge to hard assignments to the closest bridge segment, so the replay term reduces to exact nearest-neighbor transition lookup. Thus, trajectory replay forms a continuum between generalization and memorization: positive $\sigma$ yields kernel-averaged transition dynamics, while $\sigma \to 0$ recovers nearest-neighbor memorization as a limiting case.
 
The second term is a \emph{mixture score correction} (which tends to zero as $\sigma \to 0$), which pulls $z$ toward the responsibility-weighted mean $\sum_j \alpha_j(t,z)m_t^{(j)}$. Since
$\nabla_z \log \hat p_t(z)
=
-\frac{1}{\sigma_t^2}\Big(z-\sum_j \alpha_j(t,z)m_t^{(j)}\Big),$
this correction is proportional to the score of the Gaussian mixture marginal $\hat p_t$, and acts to align the learned flow with the evolving empirical density.

Together, the two terms reveal that empirical FM induces a memory-based dynamical system augmented by a score-like regularization. Moreover, the  parameter $\sigma$ controls a trade-off: as $\sigma \to 0$ the learned dynamics overfits by performing hard transition lookup (memorization), whereas positive $\sigma$ induces kernel smoothing and score-based regularization. Thus, unlike vanilla closed-form FM samplers, which may simply memorize training samples \cite{bertrand2025closed}, FreeFM is best viewed as a nonparametric  model whose useful regime lies between hard
trajectory replay and parametric neural FM models.


\section{Practical Considerations}
\label{sec:practical}

In this section, we address the practical challenges associated with  the training-free sampler.

\subsection{Numerical Stiffness and Integration Schemes}

For  $t\in[0,1]$, the (spatial) Lipschitz constant of a map
$f(t,\cdot):\R^d\to\R^d$ is defined as (App. \ref{app_lipschitzproof}):
\[
\operatorname{Lip}_z(f)(t) :=\sup_{z\neq z'}\frac{\|f(t,z)-f(t,z')\|}{\|z-z'\|}\in[0,\infty].
\]
The following spatial Lipschitz bound for the velocity field exposes a stiffness issue associated with numerical integration of the proposed ODE sampler.

\begin{proposition}[Lipschitz bound] \label{prop_grads}
Let $t \in [0,1]$ and $\mathcal{D}_M$ be given.
Assume that $\sigma > 0$ and, for all $j, t$, $\|\dot{m}_t^{(j)}\| \le R_1$ and $\|m_t^{(j)}\| \le R_m$. Then,  the $z$-Lipschitz constant of $h$  is dominated by $c_t^{-4}$, as $c_t \to 0$:
\begin{equation}
\operatorname{Lip}_z(h)(t) \leq \sup_{z \in \R^d} \|\nabla_z h(t, z; \mathcal{D}_M)\| = O(c_t^{-4}).
\end{equation}
Moreover, $\operatorname{Lip}_z(\hat{v}^\ast)(t) \leq \sup_{z \in \R^d} \|\nabla_z \hat{v}^\ast\| = O(c_t^{-4}).$
\end{proposition}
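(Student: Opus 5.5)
The plan is to differentiate $h(t,z;\mathcal{D}_M)=\sum_j \alpha_j(t,z)\,y_j(t)$ directly in $z$ and bound the Jacobian uniformly in $z$. The first inequality, $\operatorname{Lip}_z(h)(t)\le\sup_z\|\nabla_z h\|$, follows from the mean value inequality, since $h(t,\cdot)$ is smooth: the softmax weights are smooth in $z$ and $c_t\ge\sigma_{\min}>0$. The work is therefore to estimate $\nabla_z h$.

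\textbf{Step 1 (softmax gradient).} Write $\alpha_j=\exp(\ell_j)/\sum_k\exp(\ell_k)$ with $\ell_j=-\|z-m_t^{(j)}\|^2/(2c_t^2)$, so that $\nabla_z\ell_j=-(z-m_t^{(j)})/c_t^2$. The standard softmax identity gives
\[
\nabla_z\alpha_j=\alpha_j\Big(\nabla_z\ell_j-\sum_k\alpha_k\nabla_z\ell_k\Big)=\frac{\alpha_j}{c_t^2}\Big(m_t^{(j)}-\bar m_t(z)\Big),\qquad \bar m_t(z):=\sum_k\alpha_k(t,z)\,m_t^{(k)} .
\]
The $z$-dependence cancels here. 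Because $\bar m_t$ is a convex combination, $\|m_t^{(j)}-\bar m_t\|\le 2R_m$, and hence
\[
\nabla_z h=\frac{1}{c_t^2}\sum_j\alpha_j\,y_j(t)\big(m_t^{(j)}-\bar m_t\big)^{\top}
\quad\Longrightarrow\quad
\|\nabla_z h\|\le \frac{2R_m}{c_t^2}\max_j\|y_j(t)\|,
\]
uniformly in $z$, using $\sum_j\alpha_j=1$.

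\textbf{Step 2 (size of $y_j$).} From Proposition~\ref{prop:freefm}, $y_j(t)=\dot m_t^{(j)}-g(t)m_t^{(j)}$, so $\|y_j\|\le R_1+|g(t)|R_m$. Since $g(t)=\sigma^2(1-2t)/(2c_t^2)$ and $|1-2t|\le1$, we get $|g(t)|\le\sigma^2/(2c_t^2)=O(c_t^{-2})$. Combining this with Step 1 gives $\sup_z\|\nabla_z h\|\le \frac{2R_m}{c_t^2}\big(R_1+\frac{\sigma^2R_m}{2c_t^2}\big)=O(c_t^{-4})$ as $c_t\to0$, with the $c_t^{-4}$ term dominating. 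This is the main bookkeeping point: the $c_t^{-4}$ rate comes from the product of the $c_t^{-2}$ in the softmax derivative and the $c_t^{-2}$ hidden in $G_t$ inside $y_j$. One has to track this product rather than treat $y_j$ as bounded.

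\textbf{Step 3 (full field).} Since $\hat v^*=g(t)z+h$, we have $\nabla_z\hat v^*=g(t)I_d+\nabla_z h$. Then $\|\nabla_z\hat v^*\|\le|g(t)|+\sup_z\|\nabla_z h\|=O(c_t^{-2})+O(c_t^{-4})=O(c_t^{-4})$, and the mean value inequality again gives the Lipschitz bound.

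The only subtlety I expect is interpreting ``$c_t\to0$'' given $c_t\ge\sigma_{\min}$. I would state the bound as an explicit inequality valid for every $t$, which makes the result nonasymptotic, and read the $O(\cdot)$ as describing its dependence on $c_t$ (e.g.\ small $\sigma_{\min}$ near $t=0,1$). The constants depend only on $R_1$, $R_m$ and $\sigma$, and not on $M$ or $z$.
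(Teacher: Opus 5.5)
Your proof is correct and follows the paper's argument. It uses the same softmax-gradient identity $\nabla_z\alpha_j=\alpha_j\,(m_t^{(j)}-\bar m_t(z))/c_t^2$, the same $2R_m$ convex-combination bound, and the same $c_t^{-2}\cdot c_t^{-2}$ bookkeeping through $G_t$. The only cosmetic difference is that the paper splits $h$ into $\sum_j\alpha_j\dot m_t^{(j)}$ and $G_t\bar m_t(z)$ and bounds the two parts separately, whereas you bound $\|y_j(t)\|\le R_1+|g(t)|R_m$ directly. Both routes give the identical constant $\frac{2R_m}{c_t^2}\big(R_1+\|G_t\|R_m\big)$.
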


See also Theorem \ref{prop_grads_long} (in Appendix~\ref{app:proof}) for a  detailed version.

As $c_t \to 0$ (equivalently, as $\sigma_{\min}, \sigma \to 0$), the term $O(c_t^{-4})$ dominates. This shows that the interaction between the data-dependent weights $\alpha_j$ and the data-dependent means $m_t^{(j)}$ creates a source of stiffness that is stronger than the linear term $G_t z$.
As the proposed sampler is based on the Gaussian-bridge CFM with $c_t^2=\sigma_{\min}^2+\sigma^2 t(1-t)$, if $\sigma_{\min}=0$ then $c_t\sim\sqrt{t}$ near $t=0$ and  
\(
\|G_t\|\sim 1/t,
\)
producing strong stiffness at the endpoint.

Stiffness is directly controllable via the variance floor $\sigma_{\min}$. Choosing  $\sigma_{\min}$ involves a trade-off: smaller values improve endpoint matching but increase stiffness, while larger values improve numerical stability at the cost of smoothing. In practice, we use a small step size and tune $\sigma, \sigma_{\min}$ via a simple grid search  to achieve a sweet spot. Algorithm \ref{alg:sampler} in App. \ref{app:algorithm} provides a detailed algorithm that describes the ODE sampler (FreeFM) for probabilistic prediction.   We note that while an exponential Euler scheme fits naturally with the structure of the ODE, it is insufficient to overcome the severe stiffness $O(c_t^{-4})$ introduced by the highly nonlinear dependence of the weights $\alpha_j(t,z)$ on  $z$. The use of a tuned $\sigma_{\min}$ and/or very small step size remains necessary.

\subsection{Scalability and Approximation Schemes}

Evaluating the empirical CFM velocity 
$\hat v_t^*(z) := \hat{v}(t, z)$ at a given $(t,z)$  requires computing $M$ responsibilities
\(
\alpha_j(t,z)
\propto
\exp\!\big(-\|z-m_t^{(j)}\|^2/(2c_t^2)\big).
\)
Thus, the naive cost of one velocity evaluation is $O(Md)$, which becomes prohibitively costly for large transition sets. 
A simple solution to mitigate the cost is top--$R$ posterior truncation.
The responsibilities $\alpha_j(t,z)$ often concentrate sharply near the
transitions whose interpolated locations $m_t^{(j)}$ are closest to $z$.
Let $\mathcal{I}_R(z,t)$ denote the indices of the $R$ largest softmax weights:
$\mathcal{I}_R(z,t)
=
\operatorname*{arg\,topR}_{j}
\; \alpha_j(t,z).$ Define the truncated velocity estimator
\begin{equation}
\hat v_{t,R}(z)
:=
G_t z
+
\frac{
\sum_{j\in\mathcal{I}_R(z,t)}
    \alpha_j(t,z)\, y_j(t) 
}{
\sum_{k\in\mathcal{I}_R(z,t)}\alpha_k(t, z)
}.
\end{equation}
The following result quantifies the truncation error. 
\begin{proposition}[Truncation error] \label{prop_truncate} 
Let $t \in [0,1]$ and $\mathcal{D}_M$ be given. 
Suppose that  $\|B_t^{(j)}\|\le C$ for all $j, t$ for some constant $C>0$. Then
\[
\big\|
\hat v_t^*(z) - \hat v_{t,R}(z)
\big\|
\;\le\;
2 C 
\bigg(
1 - \sum_{j\in\mathcal{I}_R(z,t)}\alpha_j(t,z)
\bigg).
\]
\end{proposition}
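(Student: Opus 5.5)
The plan is to reduce the statement to an elementary fact about convex combinations. I read $B_t^{(j)}$ as the per-transition forcing vector $y_j(t)=\dot m_t^{(j)}-G_t m_t^{(j)}$, that is, the quantity averaged in $h$. The global linear term $G_t z$ appears identically in $\hat v_t^*(z)$ and in $\hat v_{t,R}(z)$, so it cancels. What remains is
\[
\hat v_t^*(z)-\hat v_{t,R}(z)=\sum_{j=1}^M \alpha_j y_j-\frac{\sum_{j\in\mathcal I_R}\alpha_j y_j}{S},\qquad S:=\sum_{k\in\mathcal I_R(z,t)}\alpha_k(t,z).
\]
Here $S>0$, since every softmax weight is strictly positive because $c_t>0$. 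Write $\varepsilon:=1-S=\sum_{j\notin\mathcal I_R}\alpha_j$.

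Next I would split the full sum into the retained indices and the tail. Combining the retained part with the renormalized estimator gives $\sum_{j\in\mathcal I_R}\alpha_j y_j\,(1-1/S)$, so the difference equals
\[
\sum_{j\notin\mathcal I_R}\alpha_j y_j-\frac{1-S}{S}\sum_{j\in\mathcal I_R}\alpha_j y_j .
\]

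Then I bound each piece with the triangle inequality and $\|y_j\|\le C$. The tail term is at most $C\sum_{j\notin\mathcal I_R}\alpha_j=C\varepsilon$. For the second term, $\big\|\sum_{j\in\mathcal I_R}\alpha_j y_j\big\|\le CS$, so its norm is at most $\frac{1-S}{S}\cdot CS=C\varepsilon$. Adding the two gives $2C\varepsilon$, which is the claim.

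The only step needing care is the cancellation of $S$ in the second term. One must keep the unnormalized retained sum, whose norm is at most $CS$, rather than bounding the normalized average by $C$ first; the latter would leave a $1/S$ factor and a weaker bound. Equivalently, the difference can be viewed as $\varepsilon$ times the difference of two convex averages of vectors in the ball of radius $C$: the tail-normalized average and the head-normalized average. Such a difference has norm at most $2C$, which gives the same bound.
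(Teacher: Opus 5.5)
Your proof is correct and is essentially the paper's argument: the paper writes $h=S_R\bar B_R+(1-S_R)\bar B_{\neg R}$, where $\bar B_R$ and $\bar B_{\neg R}$ are the head- and tail-normalized averages. The error is then $(1-S_R)(\bar B_{\neg R}-\bar B_R)$, of norm at most $2C(1-S_R)$, which is exactly your closing reformulation; your unnormalized bookkeeping merely avoids dividing by the tail mass, so the trivial case $R=M$ needs no separate mention.

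One small correction to your aside: bounding the normalized head average by $C$ first loses nothing, since $\frac{1-S}{S}\sum_{j\in\mathcal{I}_R}\alpha_j y_j=(1-S)\bar B_R$ has norm at most $(1-S)C$. A spurious $1/S$ would appear only if you bounded the unnormalized head sum by $C$ rather than $CS$.
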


Since Gaussian weights $\alpha_j(t,z)$ decay exponentially with distance, the majority of weight mass is captured by a small $R$. This allows $O(R)$ computational complexity per step, instead of $O(M)$. In principle, the truncation error in Proposition~\ref{prop_truncate} could be mapped directly to the trajectory of the sampler using the Grönwall inequality. Such an argument provides a theoretical guarantee that the deviation of the discretized ODE remains bounded by the discarded mass.

\section{Empirical Results}
\label{sec:exp}

In this section, we validate the theoretical insight and evaluate the effectiveness of FreeFM on nonlinear dynamics forecasting tasks (see App.~\ref{app:experiments} for details and additional results). 
While the framework is applicable in principle more broadly to sequential data, we focus on this setting to provide a clear and controlled evaluation. 
For a broader evaluation, we also provide empirical results on real-world datasets, covering  low- and high-dimensional settings, in App. \ref{app_realworld}.

\begin{figure}[!t]
    \centering
    \begin{subfigure}{0.4\textwidth}
        \centering
        \includegraphics[width=\textwidth]{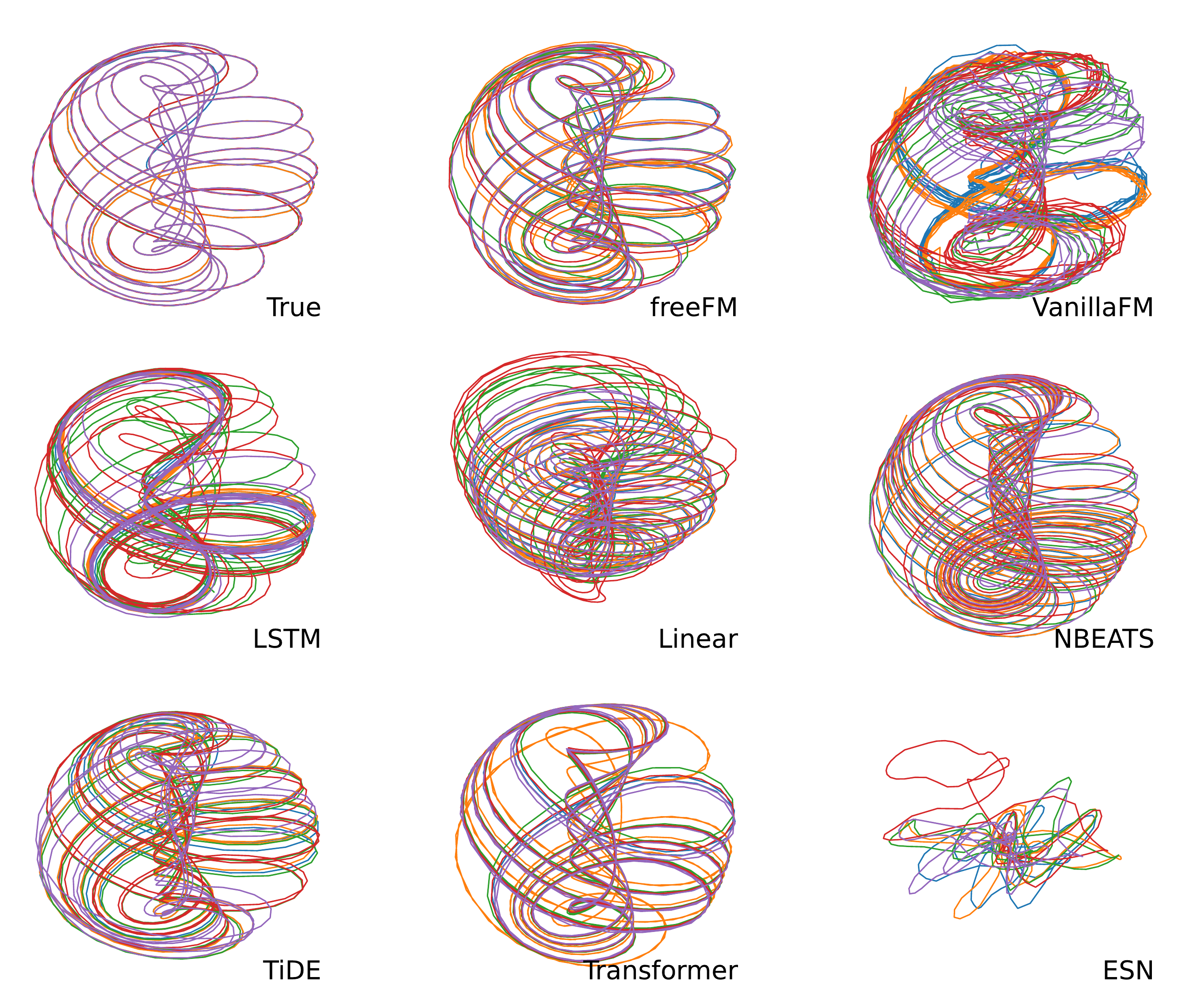}
        \caption{Conditional Forecast Trajectory for Aizawa Attractor}
        \label{fig:ConditionalForecastSub1}
    \end{subfigure}
    \hfill
    \begin{subfigure}{0.48\textwidth}
        \centering
        \includegraphics[width=\textwidth]{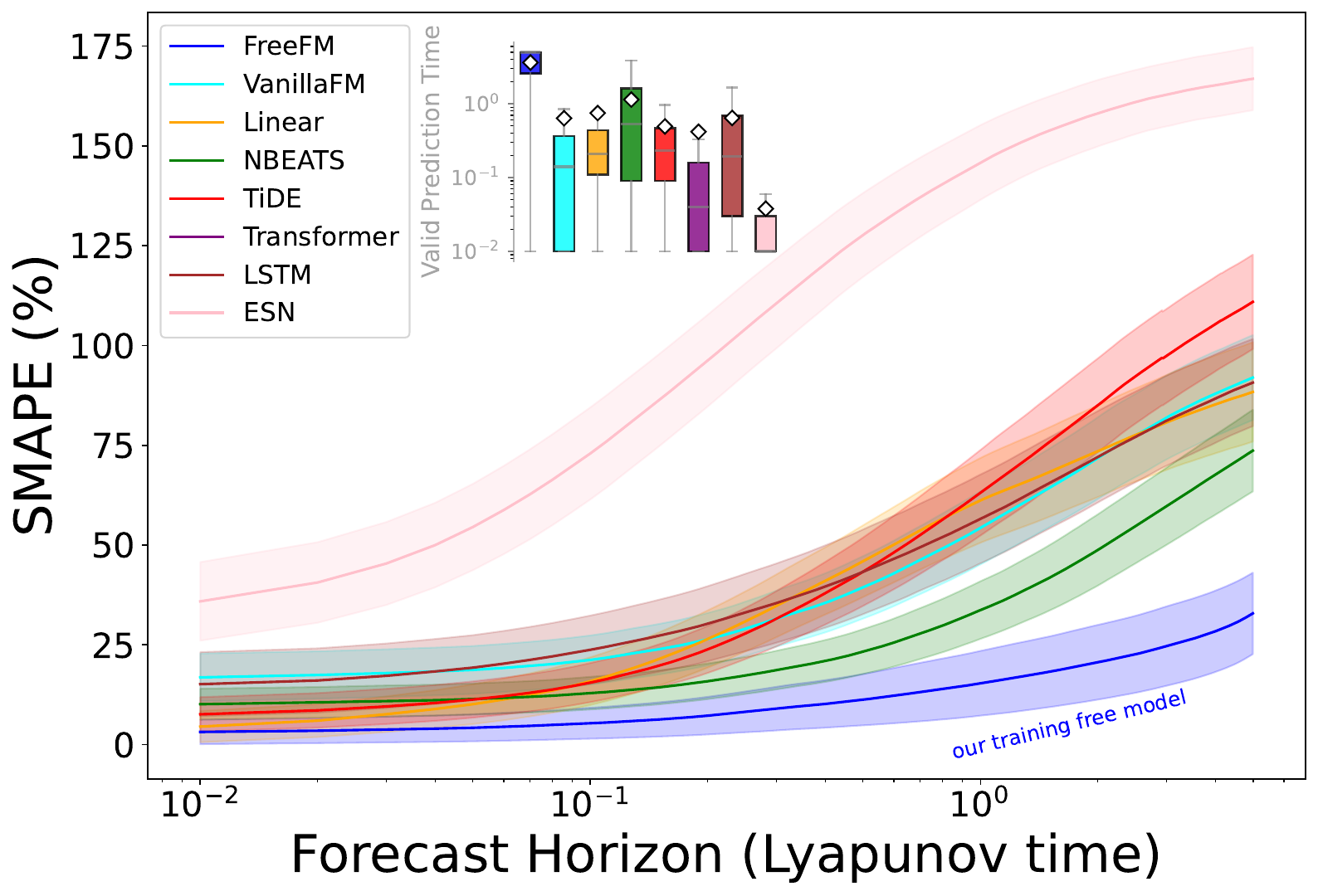}
        \caption{sMAPE and VPT Comparison}
        \label{fig:ConditionalForecastSub2}
    \end{subfigure}
    \caption{\textbf{Conditional Forecast.} (a) Examples of conditional forecasts generated by FreeFM and baseline models for 20 trajectories from the Aizawa attractor. Each trajectory originates from a different initial condition. (b) sMAPE and VPT of conditional forecast results from FreeFM and baseline models. Shaded regions indicate ±0.5 standard error over 135 dynamical systems, each with 20 trajectories originating from randomly sampled initial conditions.}
    \label{fig:ConditionalForecast}

\end{figure}


\subsection{Dataset Settings}

We use \texttt{dysts} as a synthetic dataset~\cite{gilpin2023chaosinterpretablebenchmarkforecasting,gilpin2023modelscaleversusdomain}. 
It is a benchmark comprising 135 chaotic systems between 3 and 6 dimensions. The systems are described by ODEs that are aligned with respect to dominant timescales and integration procedures.

The sensitivity to initial conditions of a chaotic system is quantified by its largest Lyapunov exponent $\lambda$, which measures how quickly nearby trajectories diverge or converge \cite{ott2002chaos}. 
Stable systems or systems approaching periodic orbits have zero or negative Lyapunov exponents, while chaotic systems have positive Lyapunov exponents, implying that trajectories diverge exponentially with small changes in initial conditions. 
The Lyapunov time is the characteristic timescale of predictability, defined as the time required for an initial error to grow by a factor of $e$ ($\tau\equiv\lambda^{-1}$). 
Generally, after 3-5 Lyapunov times, the system becomes effectively unpredictable. 
For our experiments, following \cite{zhang2024zero}, we integrate all systems using an implicit Runge-Kutta scheme and uniformly downsample all time series. Unlike~\cite{zhang2024zero}, we choose a finer granularity of 100 time points per Lyapunov time~$\tau$.

\subsection{Baseline Settings}

For the conditional forecast experiment and the long term attractor reconstruction experiment, we select seven widely used models in dynamical systems forecasting as our baselines: 
(1) N-BEATS~\cite{oreshkin2020nbeatsneuralbasisexpansion}, a deep neural network that uses interpretable basis expansion to decompose predictions into trend and seasonality components; (2) TiDE~\cite{das2024longtermforecastingtidetimeseries}, an efficient MLP-based encoder-decoder model for multivariate time series forecasting; (3) Echo State Networks (ESNs)~\cite{jaeger2004harnessing}, a reservoir computing approach that has been shown to perform well on chaotic dynamical systems; (4) Transformer~\cite{vaswani2017attention}, an attention-based architecture capable of capturing long-range dependencies in sequential data; (5) LSTM~\cite{hochreiter1997long}, a recurrent neural network with gating mechanisms designed to model long-term temporal dependencies; (6) linear regression~\cite{bammann2006statistical}, a simple baseline that serves as a lower bound for model performance; and (7) a vanilla flow matching model~\cite{lipman2022flow}. 
For the probabilistic forecasting experiment, we compare our training-free model with a fully-trained vanilla flow matching model. For simplicity, we only tune one critical hyperparameter; see the App.~\ref{subsec:BaselineModelDetails} for details on the hyperparameter ranges.

\subsection{Conditional Forecasting}
\label{subsec:ConditionalForecast}

We generate 20 trajectories for each of the 135 chaotic systems, with a length of 812 time points and a granularity of 100 points per Lyapunov time. This corresponds to approximately 8.12 Lyapunov times per trajectory, placing the system firmly in the chaotic regime where sensitive dependence on initial conditions is fully manifested. Since chaotic systems typically become effectively unpredictable after 3--5 Lyapunov times, our trajectories are long enough to exhibit the characteristic exponential divergence of nearby trajectories, rather than operating in a regime where chaotic behavior has not yet developed. Each trajectory originates from a different, randomly sampled initial condition. We evaluate our model and the baselines at a prediction horizon of $5\tau$, corresponding to 500 time points. We used the first 312 time points as an initial condition to forecast the remaining 500 time points.

We use the symmetric mean absolute percentage error (sMAPE) and the valid prediction time (VPT) (see App.~\ref{subsec:EvaluationMetrics}) to evaluate the forecast results.  The results are presented in Fig.~\ref{fig:ConditionalForecast}. Compared to all fully-trained baseline models, our training-free model outperforms all baselines on average across the 135 chaotic systems. In particular, our model achieves an average VPT greater than 1, exceeding the highest VPT among all baselines, even with a relatively tight threshold for VPT computation.

\subsection{Probabilistic Forecasting}

\begin{figure}[!t]
    \centering
    \begin{subfigure}{0.42\textwidth}
        \centering
        \includegraphics[width=\textwidth]{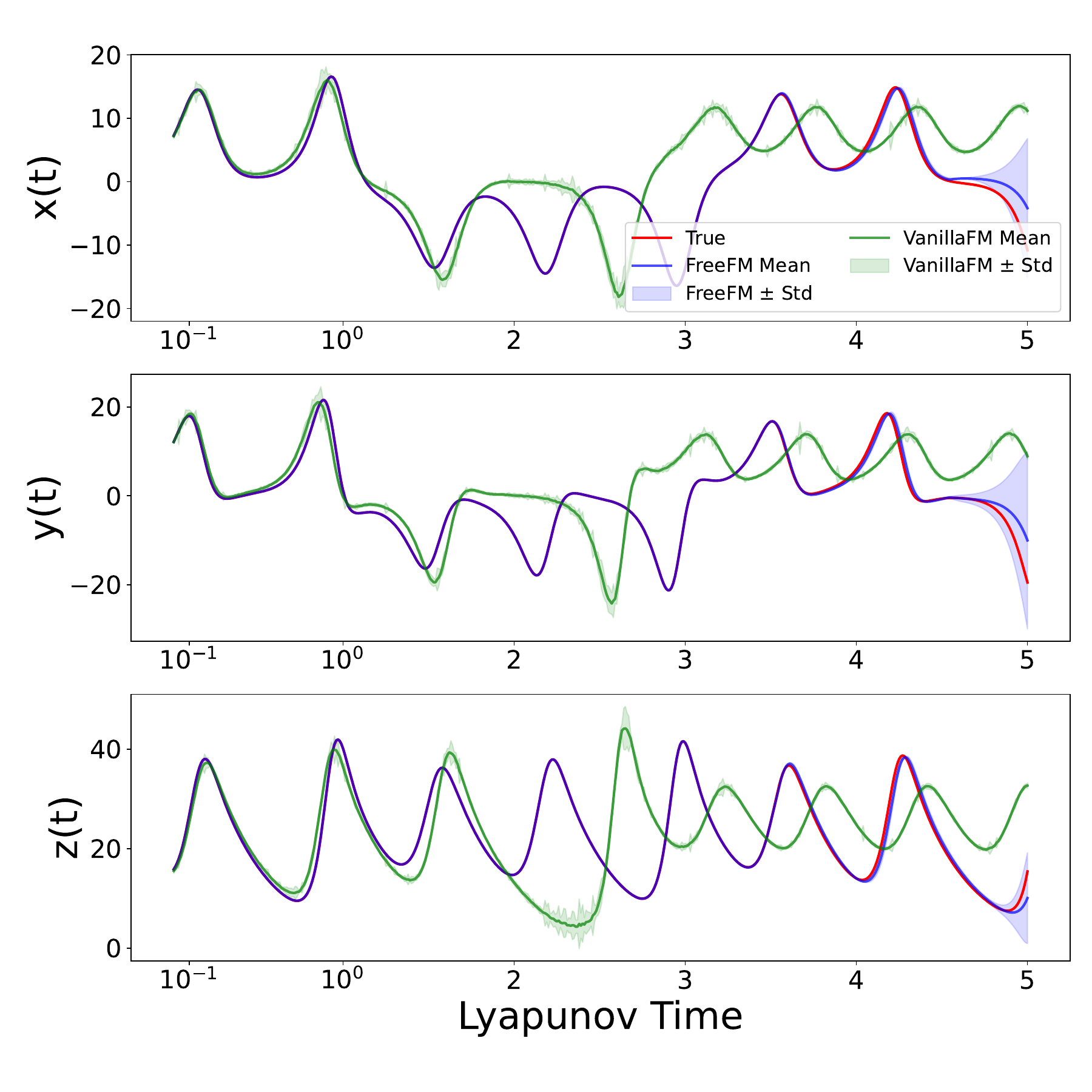}
        \caption{Probabilistic Forecast Trajectory for Lorenz-63 (2D)}
        \label{fig:ProbabilisticForecastSub1}
    \end{subfigure}
    \hfill
    \begin{subfigure}{0.5\textwidth}
        \centering
        \includegraphics[width=\textwidth]{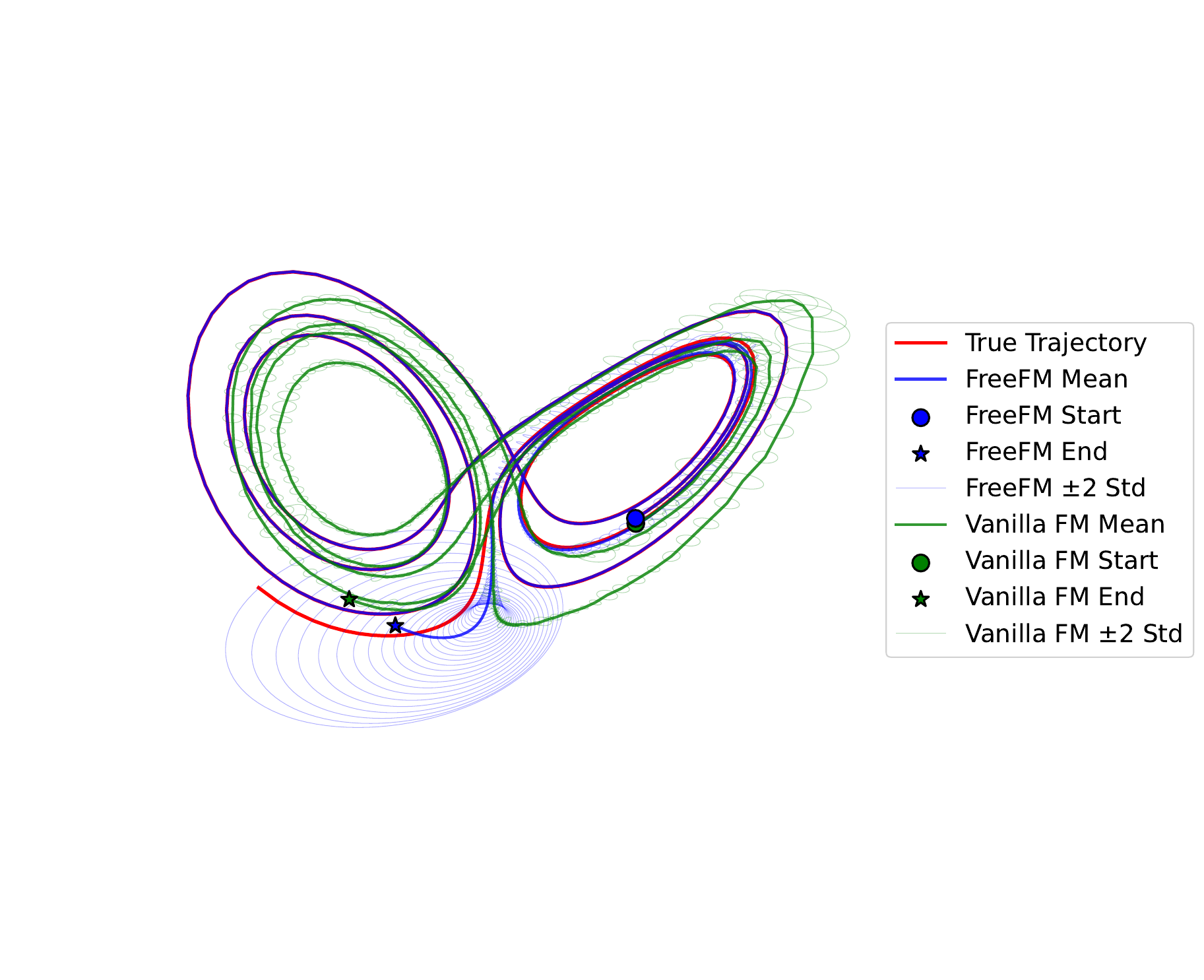}
        \caption{Probabilistic Forecast Trajectory for Lorenz-63 (3D)}
        \label{fig:ProbabilisticForecastSub2}
    \end{subfigure}
    \hfill
    \begin{subfigure}{0.44\textwidth}
        \centering
        \includegraphics[width=\textwidth]{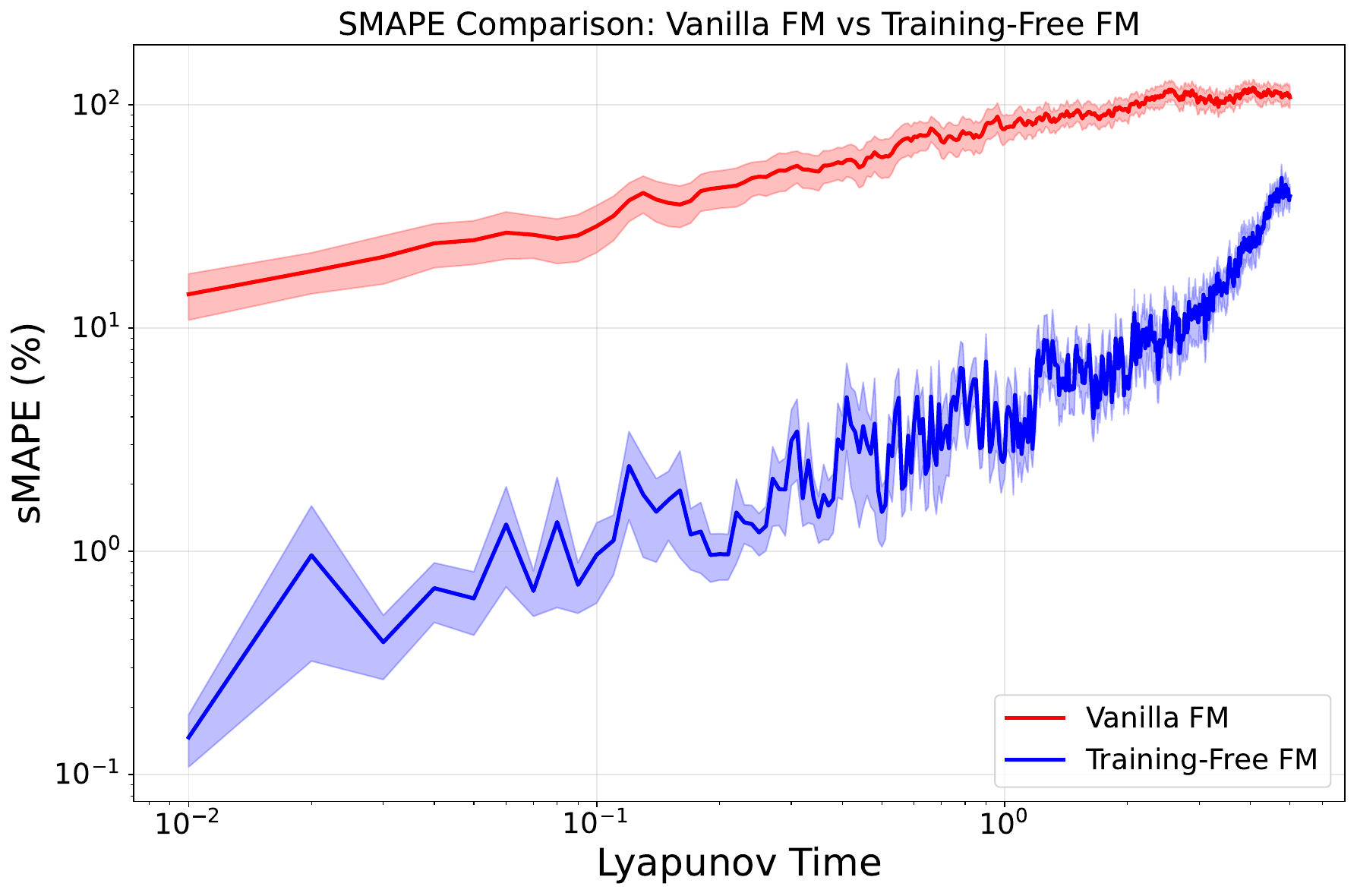}
        \caption{sMAPE Comparison}
        \label{fig:ProbabilisticForecastSub3}
    \end{subfigure}
    \hfill
    \begin{subfigure}{0.44\textwidth}
        \centering
        \includegraphics[width=\textwidth]{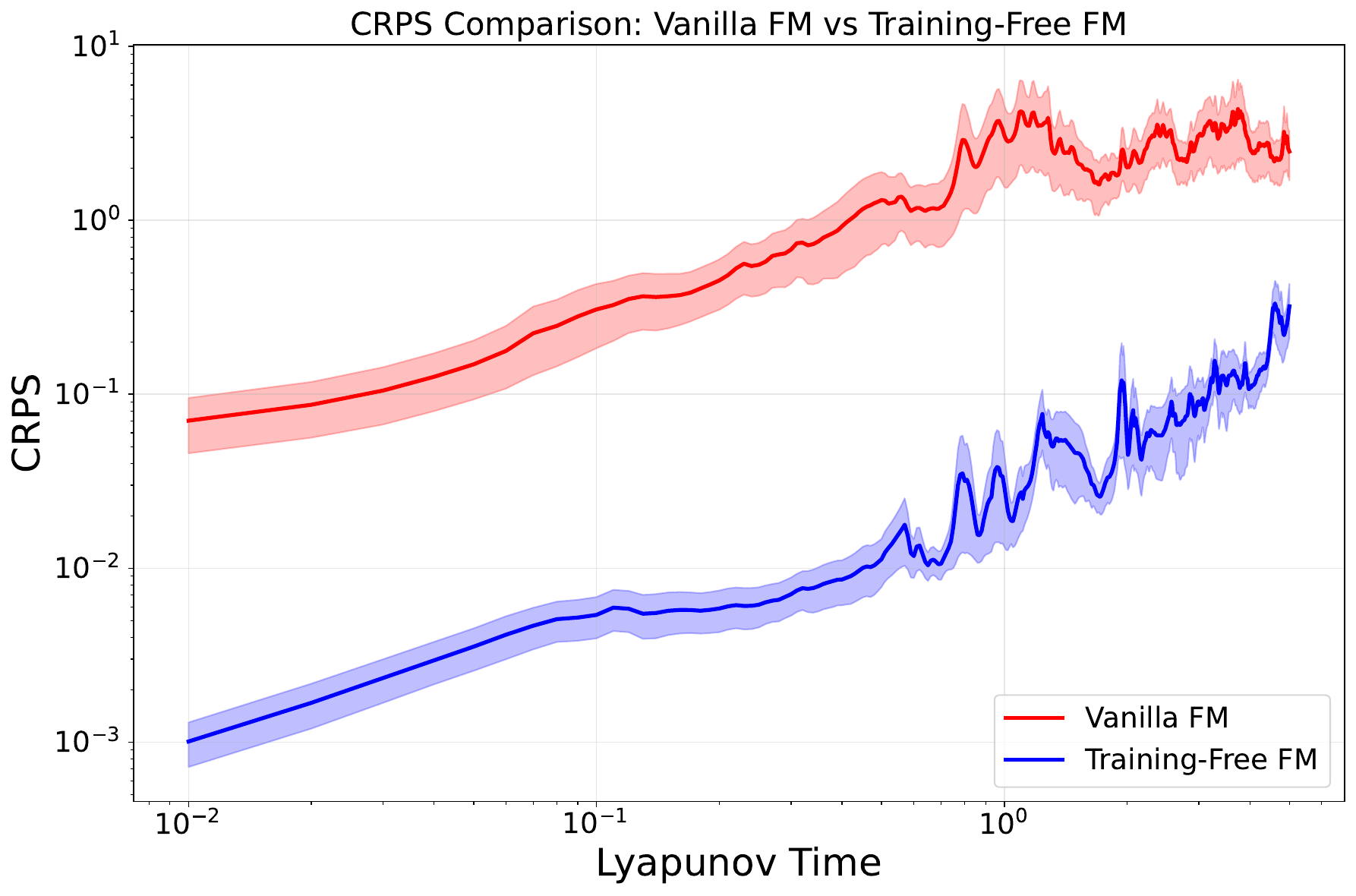}
        \caption{CRPS Comparison}
        \label{fig:ProbabilisticForecastSub3}
    \end{subfigure}
    \caption{\textbf{Probabilistic Forecast.} (a)-(b) Examples of probabilistic forecast generated by FreeFM and fully trained vanilla flow matching model for time series from the Lorenz-63. Error shadows are standard error over 50 Monte-Carlo simulations. (c) sMAPE of probabilistic forecast results from FreeFM and fully trained vanilla flow matching model. (d) CRPS of probabilistic forecast results from FreeFM and fully trained vanilla flow matching model. Error shadows are 0.5 standard error over 135 dynamical systems with 20 random initial conditions and 50 Monte-Carlo simulations.}
    \label{fig:ProbabilisticForecast}
    \vspace{-0.4cm}
\end{figure}

Next, we demonstrate that our model performs well in probabilistic forecasting. A key advantage of generative models  is that they naturally incorporate uncertainty into predictions, making probabilistic forecasting a well-suited task. We compare our model with a fully trained vanilla flow matching model \cite{lipman2022flow}. For the data settings, we follow the conditional generation setting to generate 20 trajectories of length 812 with a granularity of 100 points per Lyapunov time from 135 chaotic systems in \texttt{dysts}. The trajectories are divided into 312 observed time points and 500 testing points. For each trajectory, we generate 50 different predictions for probabilistic evaluation. 

To evaluate the quality of probabilistic forecasts, we use sMAPE and the Continuous Ranked Probability Score (CRPS). 
Examples and results of the probabilistic forecast are presented in Fig.~\ref{fig:ProbabilisticForecast}. 
Our training free model outperforms the fully trained vanilla flow matching models in terms of CRPS. 
From Fig.~\ref{fig:ProbabilisticForecastSub1} and Fig.~\ref{fig:ProbabilisticForecastSub2}, we can see that our training free models still have good forecast quality until 3 Lyapunov times, and give reasonable probabilistic forecast result after 4 Lyapunov times. We will evaluate the long term forecasting ability of our training free model in Sec.~\ref{subsec:LongTermAttractorReconstruction}.

\subsection{Long Term Attractor Reconstruction}
\label{subsec:LongTermAttractorReconstruction}

Here, we evaluate our model's ability to reconstruct the attractors in the long term, beyond where point forecasts fail. 
We quantify this ability using the correlation dimension, a non-parametric measure that characterizes the fractal dimension of strange attractors in chaotic dynamical systems~\cite{grassberger1983measuring}. 
The long-term dynamics of chaotic systems evolve on a strange attractor, and the correlation dimension characterizes how the attractor fills the phase space by measuring the scaling behavior of nearby point pairs. 
We compute the correlation dimension for both predicted and ground-truth trajectories, and we evaluate the results using the root mean square error (RMSE) between them. 
Following prior studies~\cite{hess2023generalizedteacherforcinglearning,göring2024outofdomaingeneralizationdynamicalsystems}, we also compute the Kullback–Leibler (KL) divergence between the true and reconstructed attractors~\cite{kullback1951information}. 
\begin{figure}[!t]
    \centering
    \begin{subfigure}{0.45\textwidth}
        \centering
        \includegraphics[width=\textwidth]{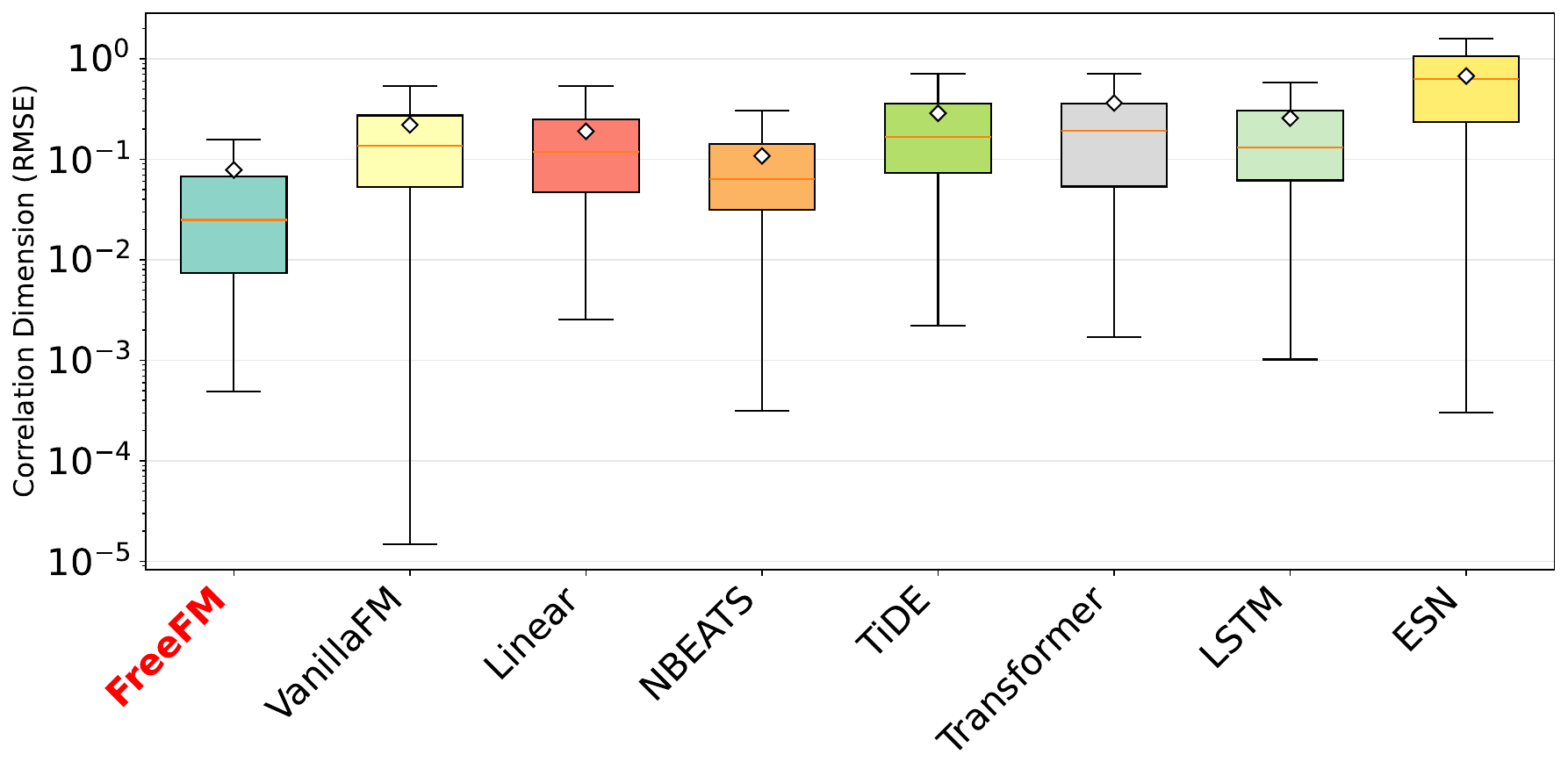}
        \caption{Correlation Dimension Comparison}
        \label{fig:LongTermAttractorReconstructionSub1}
    \end{subfigure}
    \hfill
    \begin{subfigure}{0.45\textwidth}
        \centering
        \includegraphics[width=\textwidth]{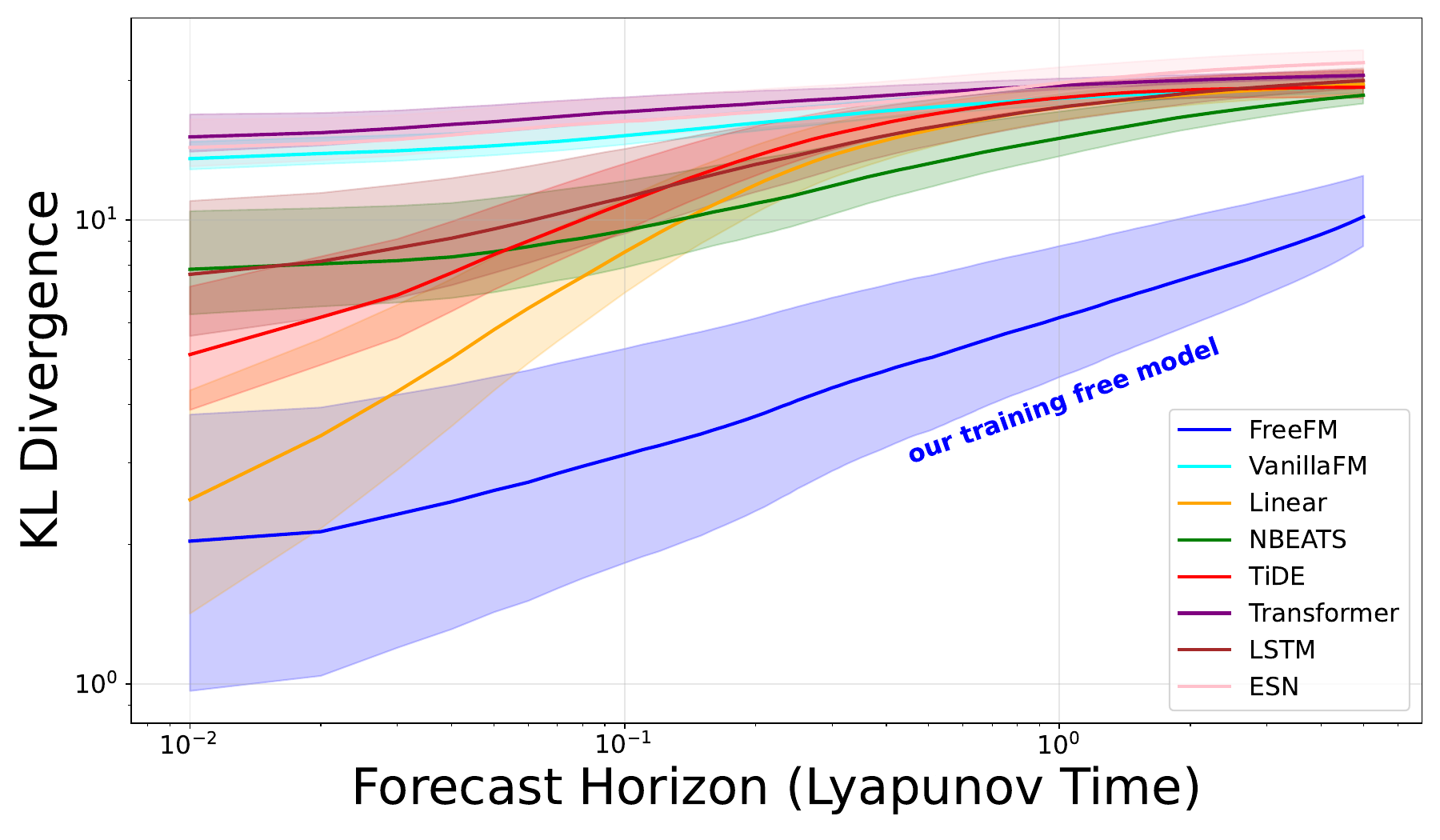}
        \caption{KL Divergence Comparison}
        \label{fig:LongTermAttractorReconstructionSub2}
    \end{subfigure}
    \caption{\textbf{Long Term Attractor Reconstruction.} (a) Correlation dimensions of, and (b) KL divergence of  long term attractor reconstruction result from FreeFM and baseline models.  Error shadows are 0.5 standard error. Both results are presented over 135 dynamical systems, each has 20 trajectories originated from 20 random initial conditions.}
    \label{fig:LongTermAttractorReconstruction}
\end{figure}
The results are shown in Fig.~\ref{fig:LongTermAttractorReconstruction}. In terms of both the correlation dimension and the KL divergence, our training-free model outperforms all baselines. 

\section{Conclusion}

By considering what velocity field a perfectly expressive FM model learns when applied to finite sequential data, we have shown that, under a common choice of probability path, the optimal empirical solution admits a training-free, closed-form realization as a nonparametric, memory-augmented ODE that enables forecasting by balancing replaying of historical transitions and injecting score-based regularization.  
Different choices of probability paths induce fundamentally different training-free dynamics, highlighting the role of the path as a key design choice. 
At the same time, the nonparametric formulation scales poorly to high-dimensional systems and may struggle in distribution-shifted settings, where past transitions become unreliable. 
These observations motivate future work on developing scalable schemes for high-dimensional dynamics and designing hybrid models that balance nonparametric memory with parametric structure to improve generative performance. 
It would also be interesting in future work to explore how our training-free model performs for other families of sequential~data.

\subsection*{Acknowledgments}
The computations were enabled by resources provided by the National Academic Infrastructure for Supercomputing in Sweden (NAISS), partially funded by the Swedish Research Council through grant agreement no. 2022-06725 (NAISS2025-5-358). 
SHL would like to acknowledge support from the Wallenberg Initiative on Networks and Quantum Information (WINQ) and the Swedish Research Council (VR/2021-03648).
NBE would like to acknowledge support from the U.S. Department of Energy, Office of Science, Office of Advanced Scientific Computing Research, EXPRESS: 2025 Exploratory Research for Extreme-Scale Science program, under Contract
Number DE-AC02-05CH11231 at Lawrence Berkeley National Laboratory. 
MWM would like to acknowledge support from DARPA, NSF, the DOE Competitive Portfolios grant, and the DOE SciGPT grant.

\bibliography{arxiv/reference}

@misc{hess2023generalizedteacherforcinglearning,
      title={Generalized Teacher Forcing for Learning Chaotic Dynamics}, 
      author={Florian Hess and Zahra Monfared and Manuel Brenner and Daniel Durstewitz},
      year={2023},
      eprint={2306.04406},
      archivePrefix={arXiv},
      primaryClass={cs.LG},
      url={https://arxiv.org/abs/2306.04406}, 
}

@article{tire2024retrieval,
  title={Retrieval augmented time series forecasting},
  author={Tire, Kutay and Taga, Ege Onur and Ildiz, Muhammed Emrullah and Oymak, Samet},
  journal={arXiv preprint arXiv:2411.08249},
  year={2024}
}

@article{zhao2016analog,
  title={Analog forecasting with dynamics-adapted kernels},
  author={Zhao, Zhizhen and Giannakis, Dimitrios},
  journal={Nonlinearity},
  volume={29},
  number={9},
  pages={2888--2939},
  year={2016},
  publisher={IOP Publishing}
}

@article{koehler2001asymmetry,
  title={The asymmetry of the s{A}{P}{E} measure and other comments on the {M}3-competition},
  author={Koehler, Anne B},
  journal={International Journal of Forecasting},
  volume={17},
  number={4},
  pages={570--574},
  year={2001},
  publisher={ELSEVIER SCIENCE BV PO BOX 211, 1000 AE AMSTERDAM, NETHERLANDS}
}

@article{kullback1951information,
  title={On information and sufficiency},
  author={Kullback, Solomon and Leibler, Richard A},
  journal={The Annals of Mathematical Statistics},
  volume={22},
  number={1},
  pages={79--86},
  year={1951},
  publisher={JSTOR}
}

@inproceedings{gilpin2021chaos,
title={Chaos as an interpretable benchmark for forecasting and data-driven modelling},
author={William Gilpin},
booktitle={Thirty-fifth Conference on Neural Information Processing Systems Datasets and Benchmarks Track (Round 2)},
year={2021},
url={https://openreview.net/forum?id=enYjtbjYJrf}
}

@misc{brinke2025flowmatchinggeometrictrajectory,
      title={Flow Matching for Geometric Trajectory Simulation}, 
      author={Kiet Bennema ten Brinke and Koen Minartz and Vlado Menkovski},
      year={2025},
      eprint={2505.18647},
      archivePrefix={arXiv},
      primaryClass={cs.LG},
      url={https://arxiv.org/abs/2505.18647}, 
}

@misc{göring2024outofdomaingeneralizationdynamicalsystems,
      title={Out-of-Domain Generalization in Dynamical Systems Reconstruction}, 
      author={Niclas Göring and Florian Hess and Manuel Brenner and Zahra Monfared and Daniel Durstewitz},
      year={2024},
      eprint={2402.18377},
      archivePrefix={arXiv},
      primaryClass={cs.LG},
      url={https://arxiv.org/abs/2402.18377}, 
}

@misc{gilpin2023modelscaleversusdomain,
      title={Model scale versus domain knowledge in statistical forecasting of chaotic systems}, 
      author={William Gilpin},
      year={2023},
      eprint={2303.08011},
      archivePrefix={arXiv},
      primaryClass={cs.LG},
      url={https://arxiv.org/abs/2303.08011}, 
}

@article{grassberger1983measuring,
  title={Measuring the strangeness of strange attractors},
  author={Grassberger, Peter and Procaccia, Itamar},
  journal={Physica D: Nonlinear Phenomena},
  volume={9},
  number={1-2},
  pages={189--208},
  year={1983},
  publisher={Elsevier}
}

@misc{bammann2006statistical,
  title={Statistical {M}odels: {T}heory and {P}ractice},
  author={Bammann, Karin},
  year={2006},
  publisher={Oxford University Press}
}

@article{hochreiter1997long,
  title={Long short-term memory},
  author={Hochreiter, Sepp and Schmidhuber, J{\"u}rgen},
  journal={Neural Computation},
  volume={9},
  number={8},
  pages={1735--1780},
  year={1997},
  publisher={MIT press}
}

@article{vaswani2017attention,
  title={Attention is all you need},
  author={Vaswani, Ashish and Shazeer, Noam and Parmar, Niki and Uszkoreit, Jakob and Jones, Llion and Gomez, Aidan N and Kaiser, {\L}ukasz and Polosukhin, Illia},
  journal={Advances in Neural Information Processing Systems},
  volume={30},
  year={2017}
}

@article{herzen2022darts,
  title={Darts: User-friendly modern machine learning for time series},
  author={Herzen, Julien and L{\"a}ssig, Francesco and Piazzetta, Samuele Giuliano and Neuer, Thomas and Tafti, L{\'e}o and Raille, Guillaume and Van Pottelbergh, Tomas and Pasieka, Marek and Skrodzki, Andrzej and Huguenin, Nicolas and others},
  journal={Journal of Machine Learning Research},
  volume={23},
  number={124},
  pages={1--6},
  year={2022}
}

@article{li2026kinetic,
  title={A Kinetic-Energy Perspective of Flow Matching},
  author={Li, Ziyun and Hu, Huancheng and Lim, Soon Hoe and Li, Xuyu and Gao, Fei and Diao, Enmao and Ding, Zezhen and Vazirgiannis, Michalis and Bostrom, Henrik},
  journal={arXiv preprint arXiv:2602.07928},
  year={2026}
}

@inproceedings{lai2018modeling,
  title={Modeling long-and short-term temporal patterns with deep neural networks},
  author={Lai, Guokun and Chang, Wei-Cheng and Yang, Yiming and Liu, Hanxiao},
  booktitle={The 41st International ACM SIGIR Conference on Research \& Development in Information Retrieval},
  pages={95--104},
  year={2018}
}

@article{jaeger2004harnessing,
  title={Harnessing nonlinearity: Predicting chaotic systems and saving energy in wireless communication},
  author={Jaeger, Herbert and Haas, Harald},
  journal={Science},
  volume={304},
  number={5667},
  pages={78--80},
  year={2004},
  publisher={American Association for the Advancement of Science}
}

@misc{das2024longtermforecastingtidetimeseries,
      title={Long-term Forecasting with {T}i{D}E: Time-series Dense Encoder}, 
      author={Abhimanyu Das and Weihao Kong and Andrew Leach and Shaan Mathur and Rajat Sen and Rose Yu},
      year={2024},
      eprint={2304.08424},
      archivePrefix={arXiv},
      primaryClass={stat.ML},
      url={https://arxiv.org/abs/2304.08424}, 
}

@misc{oreshkin2020nbeatsneuralbasisexpansion,
      title={{N-BEATS}: Neural basis expansion analysis for interpretable time series forecasting}, 
      author={Boris N. Oreshkin and Dmitri Carpov and Nicolas Chapados and Yoshua Bengio},
      year={2020},
      eprint={1905.10437},
      archivePrefix={arXiv},
      primaryClass={cs.LG},
      url={https://arxiv.org/abs/1905.10437}, 
}

@InProceedings{godahewa2021monash,
  author = "Godahewa, Rakshitha and Bergmeir, Christoph and Webb, Geoffrey I. and Hyndman, Rob J. and Montero-Manso, Pablo",
  title = "Monash Time Series Forecasting Archive",
  booktitle = "Neural Information Processing Systems Track on Datasets and Benchmarks",
  year = "2021"
}

@misc{gilpin2023chaosinterpretablebenchmarkforecasting,
      title={Chaos as an interpretable benchmark for forecasting and data-driven modelling}, 
      author={William Gilpin},
      year={2023},
      eprint={2110.05266},
      archivePrefix={arXiv},
      primaryClass={cs.LG},
      url={https://arxiv.org/abs/2110.05266}, 
}

@article{brunick2013mimicking,
  title={Mimicking an {I}t{\^o} process by a solution of a stochastic differential equation},
  author={Brunick, Gerard and Shreve, Steven}, 
  volume = {23},
    journal = {The Annals of Applied Probability},
    number = {4},
    publisher = {Institute of Mathematical Statistics},
    pages = {1584 -- 1628},
    year = {2013},
    doi = {10.1214/12-AAP881},
    }

@article{dupire1994pricing,
  title={Pricing with a smile},
  author={Dupire, Bruno and others},
  journal={Risk},
  volume={7},
  number={1},
  pages={18--20},
  year={1994}
}

@phdthesis{bentata2012markovian,
  title={Markovian Projection of Stochastic Processes},
  author={Bentata, Amel},
  year={2012},
  school={Universit{\'e} Pierre et Marie Curie-Paris VI}
}

@article{hopfield1984neurons,
  title={Neurons with graded response have collective computational properties like those of two-state neurons.},
  author={Hopfield, John J},
  journal={Proceedings of the National Academy of Sciences},
  volume={81},
  number={10},
  pages={3088--3092},
  year={1984}
}

@article{jaeger2001echo,
  title={The “echo state” approach to analysing and training recurrent neural networks-with an erratum note},
  author={Jaeger, Herbert},
  journal={Bonn, Germany: German national research center for information technology gmd technical report},
  volume={148},
  number={34},
  pages={13},
  year={2001},
  publisher={Bonn}
}

@article{gu2021efficiently,
  title={Efficiently modeling long sequences with structured state spaces},
  author={Gu, Albert and Goel, Karan and R{\'e}, Christopher},
  journal={arXiv preprint arXiv:2111.00396},
  year={2021}
}

@article{marzouk2024distribution,
  title={Distribution learning via neural differential equations: a nonparametric statistical perspective},
  author={Marzouk, Youssef and Ren, Zhi Robert and Wang, Sven and Zech, Jakob},
  journal={Journal of Machine Learning Research},
  volume={25},
  number={232},
  pages={1--61},
  year={2024}
}

@article{benton2023error,
  title={Error bounds for flow matching methods},
  author={Benton, Joe and Deligiannidis, George and Doucet, Arnaud},
  journal={arXiv preprint arXiv:2305.16860},
  year={2023}
}

@article{tsimpos2025optimal,
  title={Optimal Scheduling of Dynamic Transport},
  author={Tsimpos, Panos and Ren, Zhi and Zech, Jakob and Marzouk, Youssef},
  journal={arXiv preprint arXiv:2504.14425},
  year={2025}
}

@book{durrett2019probability,
  title={Probability: Theory and Examples},
  author={Durrett, Rick},
  edition={5},
  year={2019},
  publisher={Cambridge University Press}
}

@book{hartman2002ordinary,
  title={Ordinary Differential Equations},
  author={Hartman, Philip},
  year={2002},
  publisher={SIAM}
}

@article{harvey2012kernel,
  title={Kernel density estimation for time series data},
  author={Harvey, Andrew and Oryshchenko, Vitaliy},
  journal={International Journal of Forecasting},
  volume={28},
  number={1},
  pages={3--14},
  year={2012},
  publisher={Elsevier}
}

@misc{piterbarg2006markovian,
  title={Markovian projection method for volatility calibration},
  author={Piterbarg, Vladimir},
  year={2006},
  publisher={SSRN}
}

@article{duong2025coarse,
  title={Coarse graining of stochastic differential equations: averaging and projection method},
  author={Duong, Manh Hong and Hartmann, Carsten and Ottobre, Michela},
  journal={arXiv preprint arXiv:2506.14939},
  year={2025}
}

@book{lafon2004diffusion,
  title={Diffusion Maps and Geometric Harmonics},
  author={Lafon, St{\'e}phane S},
  year={2004},
  publisher={Yale University}
}

@article{jones2024diffusion,
  title={Diffusion geometry},
  author={Jones, Iolo},
  journal={arXiv preprint arXiv:2405.10858},
  year={2024}
}

@book{bakry2013analysis,
  title={Analysis and Geometry of {M}arkov Diffusion Operators},
  author={Bakry, Dominique and Gentil, Ivan and Ledoux, Michel},
  volume={348},
  year={2013},
  publisher={Springer Science \& Business Media}
}

@inproceedings{zhouerror,
  title={An Error Analysis of Flow Matching for Deep Generative Modeling},
  author={Zhou, Zhengyu and Liu, Weiwei},
  booktitle={Forty-second International Conference on Machine Learning}
}

@article{nadaraya1964estimating,
  title={On estimating regression},
  author={Nadaraya, Elizbar A},
  journal={Theory of Probability \& Its Applications},
  volume={9},
  number={1},
  pages={141--142},
  year={1964},
  publisher={SIAM}
}

@article{theodoropoulos2025momentum,
  title={Momentum Multi-Marginal {S}chr{\"o}dinger Bridge Matching},
  author={Theodoropoulos, Panagiotis and Saravanos, Augustinos D and Theodorou, Evangelos A and Liu, Guan-Horng},
  journal={arXiv preprint arXiv:2506.10168},
  year={2025}
}

@article{gruver2023large,
  title={Large language models are zero-shot time series forecasters},
  author={Gruver, Nate and Finzi, Marc and Qiu, Shikai and Wilson, Andrew G},
  journal={Advances in Neural Information Processing Systems},
  volume={36},
  pages={19622--19635},
  year={2023}
}

@article{lacour2008nonparametric,
  title={Nonparametric estimation of the stationary density and the transition density of a {M}arkov chain},
  author={Lacour, Claire},
  journal={Stochastic Processes and their Applications},
  volume={118},
  number={2},
  pages={232--260},
  year={2008},
  publisher={Elsevier}
}

@article{zhao2023data,
  title={Data-driven probability density forecast for stochastic dynamical systems},
  author={Zhao, Meng and Jiang, Lijian},
  journal={Journal of Computational Physics},
  volume={492},
  pages={112422},
  year={2023},
  publisher={Elsevier}
}

@article{roussas1969nonparametric,
  title={Nonparametric estimation of the transition distribution function of a {M}arkov process},
  author={Roussas, George G},
  journal={The Annals of Mathematical Statistics},
  pages={1386--1400},
  year={1969},
  publisher={JSTOR}
}

@book{chan2013chaos,
  title={Chaos: A Statistical Perspective},
  author={Chan, Kung-Sik and Tong, Howell},
  year={2013},
  publisher={Springer Science \& Business Media}
}

@article{vogt2012nonparametric,
  title={Nonparametric regression for locally stationary time series},
  author={Vogt, Michael},
  year={2012}
}

@book{bosq2012nonparametric,
  title={Nonparametric Statistics for Stochastic Processes: Estimation and Prediction},
  author={Bosq, Denis},
  volume={110},
  year={2012},
  publisher={Springer Science \& Business Media}
}

@article{wu2010kernel,
  title={Kernel estimation for time series: An asymptotic theory},
  author={Wu, Wei Biao and Huang, Yinxiao and Huang, Yibi},
  journal={Stochastic Processes and their Applications},
  volume={120},
  number={12},
  pages={2412--2431},
  year={2010},
  publisher={Elsevier}
}

@book{ott2002chaos,
  title={Chaos in Dynamical Systems},
  author={Ott, Edward},
  year={2002},
  publisher={Cambridge University Press}
}

@article{huke2006embedding,
  title={Embedding nonlinear dynamical systems: A guide to {T}akens' theorem},
  author={Huke, Jeremy P},
  year={2006},
  publisher={Manchester Institute for Mathematical Sciences, University of Manchester}
}

@article{ratas2024application,
  title={Application of next-generation reservoir computing for predicting chaotic systems from partial observations},
  author={Ratas, Irmantas and Pyragas, Kestutis},
  journal={Physical Review E},
  volume={109},
  number={6},
  pages={064215},
  year={2024},
  publisher={APS}
}

@article{liu2025sundial,
  title={Sundial: A family of highly capable time series foundation models},
  author={Liu, Yong and Qin, Guo and Shi, Zhiyuan and Chen, Zhi and Yang, Caiyin and Huang, Xiangdong and Wang, Jianmin and Long, Mingsheng},
  journal={arXiv preprint arXiv:2502.00816},
  year={2025}
}

@article{ansari2025chronos,
  title={Chronos-2: From univariate to universal forecasting},
  author={Ansari, Abdul Fatir and Shchur, Oleksandr and K{\"u}ken, Jaris and Auer, Andreas and Han, Boran and Mercado, Pedro and Rangapuram, Syama Sundar and Shen, Huibin and Stella, Lorenzo and Zhang, Xiyuan and others},
  journal={arXiv preprint arXiv:2510.15821},
  year={2025}
}

@article{jin2023time,
  title={Time-{L}{L}{M}: Time series forecasting by reprogramming large language models},
  author={Jin, Ming and Wang, Shiyu and Ma, Lintao and Chu, Zhixuan and Zhang, James Y and Shi, Xiaoming and Chen, Pin-Yu and Liang, Yuxuan and Li, Yuan-Fang and Pan, Shirui and others},
  journal={arXiv preprint arXiv:2310.01728},
  year={2023}
}

@article{naiman2024utilizing,
  title={Utilizing image transforms and diffusion models for generative modeling of short and long time series},
  author={Naiman, Ilan and Berman, Nimrod and Pemper, Itai and Arbiv, Idan and Fadlon, Gal and Azencot, Omri},
  journal={Advances in Neural Information Processing Systems},
  volume={37},
  pages={121699--121730},
  year={2024}
}

@article{jahn2025trajectory,
  title={Trajectory Generator Matching for Time Series},
  author={Jahn, T and Chemseddine, J and Hagemann, P and Wald, C and Steidl, G},
  journal={arXiv preprint arXiv:2505.23215},
  year={2025}
}

@article{naiman2023generative,
  title={Generative modeling of regular and irregular time series data via {K}oopman {VAEs}},
  author={Naiman, Ilan and Erichson, N Benjamin and Ren, Pu and Mahoney, Michael W and Azencot, Omri},
  journal={arXiv preprint arXiv:2310.02619},
  year={2023}
}

@article{yuan2024diffusion,
  title={Diffusion-{T}{S}: Interpretable diffusion for general time series generation},
  author={Yuan, Xinyu and Qiao, Yan},
  journal={arXiv preprint arXiv:2403.01742},
  year={2024}
}

@inproceedings{neklyudov2023action,
  title={Action matching: Learning stochastic dynamics from samples},
  author={Neklyudov, Kirill and Brekelmans, Rob and Severo, Daniel and Makhzani, Alireza},
  booktitle={International conference on machine learning},
  pages={25858--25889},
  year={2023},
  organization={PMLR}
}

@article{li2024fourier,
  title={From {F}ourier to neural {ODE}s: Flow matching for modeling complex systems},
  author={Li, Xin and Zhang, Jingdong and Zhu, Qunxi and Zhao, Chengli and Zhang, Xue and Duan, Xiaojun and Lin, Wei},
  journal={arXiv preprint arXiv:2405.11542},
  year={2024}
}

@article{islam2025longitudinal,
  title={Longitudinal Flow Matching for Trajectory Modeling},
  author={Islam, Mohammad Mohaiminul and Kuipers, Thijs P and Vadgama, Sharvaree and de Vente, Coen and Khan, Afsana and S{\'a}nchez, Clara I and Bekkers, Erik J},
  journal={arXiv preprint arXiv:2510.03569},
  year={2025}
}

@article{lim2024elucidating,
  title={Elucidating the design choice of probability paths in flow matching for forecasting},
  author={Lim, Soon Hoe and Wang, Yijin and Yu, Annan and Hart, Emma and Mahoney, Michael W and Li, Xiaoye S and Erichson, N Benjamin},
  journal={Transaction on Machine Learning Research},
  year={2025}
}

@article{zhang2024trajectory,
  title={Trajectory Flow Matching with Applications to Clinical Time Series Modeling},
  author={Zhang, Xi and Pu, Yuan and Kawamura, Yuki and Loza, Andrew and Bengio, Yoshua and Shung, Dennis L and Tong, Alexander},
  journal={arXiv preprint arXiv:2410.21154},
  year={2024}
}

@article{matheson1976scoring,
  title={Scoring rules for continuous probability distributions},
  author={Matheson, James E and Winkler, Robert L},
  journal={Management Science},
  volume={22},
  number={10},
  pages={1087--1096},
  year={1976},
  publisher={INFORMS}
}

@TECHREPORT{annan_tuning_TR,
  author =       {A. Yu and D. Lyu and S. H. Lim and M. W. Mahoney and N. B. Erichson},
  title =        {Tuning Frequency Bias of State Space Models},
  number =       {Preprint: arXiv:2410.02035},
  year =         {2024},
}

@TECHREPORT{Erichson_Koopman1_20_TR,
  author =       {O. Azencot and N. B. Erichson and V. Lin and M. W. Mahoney},
  title =        {Forecasting Sequential Data using {C}onsistent {K}oopman {A}utoencoders},
  number =       {Preprint: arXiv:2003.02236},
  year =         {2020},
}

@article{chen2024probabilistic,
  title={Probabilistic Forecasting with Stochastic Interpolants and {F}\"{o}llmer Processes},
  author={Chen, Yifan and Goldstein, Mark and Hua, Mengjian and Albergo, Michael S and Boffi, Nicholas M and Vanden-Eijnden, Eric},
  journal={arXiv preprint arXiv:2403.13724},
  year={2024}
}

@article{majeedi2025lets,
  title={LETS Forecast: Learning Embedology for Time Series Forecasting},
  author={Majeedi, Abrar and Gajjala, Viswanatha Reddy and GNVV, Satya Sai Srinath Namburi and Elkordi, Nada Magdi and Li, Yin},
  journal={arXiv preprint arXiv:2506.06454},
  year={2025}
}

@article{shi2025stochastic,
  title={Stochastic process learning via operator flow matching},
  author={Shi, Yaozhong and Ross, Zachary E and Asimaki, Domniki and Azizzadenesheli, Kamyar},
  journal={arXiv preprint arXiv:2501.04126},
  year={2025}
}

@article{lee2025multi,
  title={Multi-marginal stochastic flow matching for high-dimensional snapshot data at irregular time points},
  author={Lee, Justin and Moradijamei, Behnaz and Shakeri, Heman},
  journal={arXiv preprint arXiv:2508.04351},
  year={2025}
}

@article{kollovieh2024flow,
  title={Flow matching with {G}aussian process priors for probabilistic time series forecasting},
  author={Kollovieh, Marcel and Lienen, Marten and L{\"u}dke, David and Schwinn, Leo and G{\"u}nnemann, Stephan},
  journal={arXiv preprint arXiv:2410.03024},
  year={2024}
}

@article{bartosh2025sde,
  title={{SDE Matching}: Scalable and Simulation-Free Training of Latent Stochastic Differential Equations},
  author={Bartosh, Grigory and Vetrov, Dmitry and Naesseth, Christian A},
  journal={arXiv preprint arXiv:2502.02472},
  year={2025}
}

@book{fan2003nonlinear,
  title={Nonlinear Time Series: Nonparametric and Parametric Methods},
  author={Fan, Jianqing and Yao, Qiwei},
  year={2003},
  publisher={Springer}
}

@article{gilpin2024generative,
  title={Generative learning for nonlinear dynamics},
  author={Gilpin, William},
  journal={Nature Reviews Physics},
  volume={6},
  number={3},
  pages={194--206},
  year={2024},
  publisher={Nature Publishing Group UK London}
}

@article{sugihara1990nonlinear,
  title={Nonlinear forecasting as a way of distinguishing chaos from measurement error in time series},
  author={Sugihara, George and May, Robert M},
  journal={Nature},
  volume={344},
  number={6268},
  pages={734--741},
  year={1990},
  publisher={Nature Publishing Group UK London}
}

@article{brunton2021modern,
  title={Modern {K}oopman theory for dynamical systems},
  author={Brunton, Steven L and Budi{\v{s}}i{\'c}, Marko and Kaiser, Eurika and Kutz, J Nathan},
  journal={arXiv preprint arXiv:2102.12086},
  year={2021}
}

@article{lipman2024flow,
  title={Flow matching guide and code},
  author={Lipman, Yaron and Havasi, Marton and Holderrieth, Peter and Shaul, Neta and Le, Matt and Karrer, Brian and Chen, Ricky TQ and Lopez-Paz, David and Ben-Hamu, Heli and Gat, Itai},
  journal={arXiv preprint arXiv:2412.06264},
  year={2024}
}

@article{gottwald2025stable,
  title={Stable generative modelling using {S}chr{\"o}dinger bridges},
  author={Gottwald, Georg A and Li, Fengyi and Marzouk, Youssef and Reich, Sebastian},
  journal={Philosophical Transactions A},
  volume={383},
  number={2299},
  pages={20240332},
  year={2025},
  publisher={The Royal Society}
}

@article{hang2018kernel,
  title={Kernel density estimation for dynamical systems},
  author={Hang, Hanyuan and Steinwart, Ingo and Feng, Yunlong and Suykens, Johan AK},
  journal={Journal of Machine Learning Research},
  volume={19},
  number={35},
  pages={1--49},
  year={2018}
}

@article{zhang2024zero,
  title={Zero-shot forecasting of chaotic systems},
  author={Zhang, Yuanzhao and Gilpin, William},
  journal={arXiv preprint arXiv:2409.15771},
  year={2024}
}

@inproceedings{rohbeck2025modeling,
  title={Modeling complex system dynamics with flow matching across time and conditions},
  author={Rohbeck, Martin and De Brouwer, Edward and Bunne, Charlotte and Huetter, Jan-Christian and Biton, Anne and Chen, Kelvin Y and Regev, Aviv and Lopez, Romain},
  booktitle={The Thirteenth International Conference on Learning Representations},
  year={2025}
}

@article{krishnapriyan2023learning,
  title={Learning continuous models for continuous physics},
  author={Krishnapriyan, Aditi S and Queiruga, Alejandro F and Erichson, N Benjamin and Mahoney, Michael W},
  journal={Communications Physics},
  volume={6},
  number={1},
  pages={319},
  year={2023},
  publisher={Nature Publishing Group UK London}
}

@article{albergo2024learning,
  title={Learning to sample better},
  author={Albergo, Michael S and Vanden-Eijnden, Eric},
  journal={Journal of Statistical Mechanics: Theory and Experiment},
  volume={2024},
  number={10},
  pages={104014},
  year={2024},
  publisher={IOP Publishing}
}

@article{scarvelis2023closed,
  title={Closed-form diffusion models},
  author={Scarvelis, Christopher and Borde, Haitz S{\'a}ez de Oc{\'a}riz and Solomon, Justin},
  journal={arXiv preprint arXiv:2310.12395},
  year={2023}
}

@article{wald2025flow,
  title={Flow Matching: {M}arkov kernels, stochastic processes and transport plans},
  author={Wald, Christian and Steidl, Gabriele},
  journal={Variational and Information Flows in Machine Learning and Optimal Transport},
  pages={185--254},
  year={2025},
  publisher={Springer}
}

@article{mena2025statistical,
  title={Statistical Properties of Rectified Flow},
  author={Mena, Gonzalo and Kuchibhotla, Arun Kumar and Wasserman, Larry},
  journal={arXiv preprint arXiv:2511.03193},
  year={2025}
}

@article{lai2025principles,
  title={The Principles of Diffusion Models},
  author={Lai, Chieh-Hsin and Song, Yang and Kim, Dongjun and Mitsufuji, Yuki and Ermon, Stefano},
  journal={arXiv preprint arXiv:2510.21890},
  year={2025}
}

@article{bertrand2025closed,
  title={On the Closed-Form of Flow Matching: Generalization Does Not Arise from Target Stochasticity},
  author={Bertrand, Quentin and Gagneux, Anne and Massias, Mathurin and Emonet, R{\'e}mi},
  journal={arXiv preprint arXiv:2506.03719},
  year={2025}
}

@article{kunkel2025minimax,
  title={On the minimax optimality of Flow Matching through the connection to kernel density estimation},
  author={Kunkel, Lea and Trabs, Mathias},
  journal={arXiv preprint arXiv:2504.13336},
  year={2025}
}

@misc{hu2025flowtstimeseriesgeneration,
      title={Flow{T}{S}: Time Series Generation via Rectified Flow}, 
      author={Yang Hu and Xiao Wang and Zezhen Ding and Lirong Wu and Huatian Zhang and Stan Z. Li and Sheng Wang and Jiheng Zhang and Ziyun Li and Tianlong Chen},
      year={2025},
      eprint={2411.07506},
      archivePrefix={arXiv},
      primaryClass={cs.LG},
      url={https://arxiv.org/abs/2411.07506}, 
}

@article{kidger2022neural,
  title={On neural differential equations},
  author={Kidger, Patrick},
  journal={arXiv preprint arXiv:2202.02435},
  year={2022}
}

@article{kollovieh2024predict,
  title={Predict, refine, synthesize: Self-guiding diffusion models for probabilistic time series forecasting},
  author={Kollovieh, Marcel and Ansari, Abdul Fatir and Bohlke-Schneider, Michael and Zschiegner, Jasper and Wang, Hao and Wang, Yuyang Bernie},
  journal={Advances in Neural Information Processing Systems},
  volume={36},
  year={2024}
}

@article{chen2018neural,
  title={Neural ordinary differential equations},
  author={Chen, Ricky TQ and Rubanova, Yulia and Bettencourt, Jesse and Duvenaud, David K},
  journal={Advances in Neural Information Processing Systems},
  volume={31},
  year={2018}
}

@article{liu2022flow,
  title={Flow straight and fast: Learning to generate and transfer data with rectified flow},
  author={Liu, Xingchao and Gong, Chengyue and Liu, Qiang},
  journal={arXiv preprint arXiv:2209.03003},
  year={2022}
}

@article{liu2025navigation,
  title={From Navigation to Refinement: Revealing the Two-Stage Nature of Flow-based Diffusion Models through Oracle Velocity},
  author={Liu, Haoming and Liu, Jinnuo and Li, Yanhao and Bai, Liuyang and Ji, Yunkai and Guo, Yuanhe and Wan, Shenji and Wen, Hongyi},
  journal={arXiv preprint arXiv:2512.02826},
  year={2025}
}

@article{lipman2022flow,
  title={Flow matching for generative modeling},
  author={Lipman, Yaron and Chen, Ricky TQ and Ben-Hamu, Heli and Nickel, Maximilian and Le, Matt},
  journal={arXiv preprint arXiv:2210.02747},
  year={2022}
}

@article{tong2023improving,
  title={Improving and generalizing flow-based generative models with minibatch optimal transport},
  author={Tong, Alexander and Malkin, Nikolay and Huguet, Guillaume and Zhang, Yanlei and Rector-Brooks, Jarrid and Fatras, Kilian and Wolf, Guy and Bengio, Yoshua},
  journal={arXiv preprint arXiv:2302.00482},
  year={2023}
}

@book{tomczak2022deep,
  title={Deep Generative Modeling},
  author={Tomczak, Jakub M},
  year={2022},
  publisher={Springer Nature}
}

@article{albergo2023stochastic,
  title={Stochastic interpolants: A unifying framework for flows and diffusions},
  author={Albergo, Michael S and Boffi, Nicholas M and Vanden-Eijnden, Eric},
  journal={arXiv preprint arXiv:2303.08797},
  year={2023}
}

@article{auer2025tirex,
  title={Ti{R}ex: Zero-Shot Forecasting Across Long and Short Horizons with Enhanced In-Context Learning},
  author={Auer, Andreas and Podest, Patrick and Klotz, Daniel and B{\"o}ck, Sebastian and Klambauer, G{\"u}nter and Hochreiter, Sepp},
  journal={arXiv preprint arXiv:2505.23719},
  year={2025}
}

@misc{bilos2023modelingtemporaldatacontinuous,
      title={Modeling Temporal Data as Continuous Functions with Stochastic Process Diffusion}, 
      author={Marin Biloš and Kashif Rasul and Anderson Schneider and Yuriy Nevmyvaka and Stephan Günnemann},
      year={2023},
      eprint={2211.02590},
      archivePrefix={arXiv},
      primaryClass={cs.LG},
      url={https://arxiv.org/abs/2211.02590}, 
}

@article{leonard2013survey,
  title={A survey of the {S}chr\"{o}dinger problem and some of its connections with optimal transport},
  author={L{\'e}onard, Christian},
  journal={arXiv preprint arXiv:1308.0215},
  year={2013}
}

@article{li2021markov,
  title={Markov neural operators for learning chaotic systems},
  author={Li, Zongyi and Kovachki, Nikola and Azizzadenesheli, Kamyar and Liu, Burigede and Bhattacharya, Kaushik and Stuart, Andrew and Anandkumar, Anima},
  journal={arXiv preprint arXiv:2106.06898},
  pages={2--3},
  year={2021}
}

@article{kovachki2023neural,
  title={Neural operator: Learning maps between function spaces with applications to {P}{D}{E}s},
  author={Kovachki, Nikola and Li, Zongyi and Liu, Burigede and Azizzadenesheli, Kamyar and Bhattacharya, Kaushik and Stuart, Andrew and Anandkumar, Anima},
  journal={Journal of Machine Learning Research},
  volume={24},
  number={89},
  pages={1--97},
  year={2023}
}

@article{lu2019deeponet,
  title={Deep{O}net: Learning nonlinear operators for identifying differential equations based on the universal approximation theorem of operators},
  author={Lu, Lu and Jin, Pengzhan and Karniadakis, George Em},
  journal={arXiv preprint arXiv:1910.03193},
  year={2019}
}

@article{gyongy1986mimicking,
  title={Mimicking the one-dimensional marginal distributions of processes having an {I}t{\^o} differential},
  author={Gy{\"o}ngy, Istv{\'a}n},
  journal={Probability Theory and Related Fields},
  volume={71},
  number={4},
  pages={501--516},
  year={1986},
  publisher={Springer}
}

@article{mcgoff2015statistical,
  title={Statistical inference for dynamical systems: A review},
  author={McGoff, Kevin and Mukherjee, Sayan and Pillai, Natesh},
  year={2015}
}

@article{berry2025limits,
  title={Limits of learning dynamical systems},
  author={Berry, Tyrus and Das, Suddhasattwa},
  journal={SIAM Review},
  volume={67},
  number={1},
  pages={107--137},
  year={2025},
  publisher={SIAM}
}

@article{coifman2006diffusion,
  title={Diffusion maps},
  author={Coifman, Ronald R and Lafon, St{\'e}phane},
  journal={Applied and Computational Harmonic Analysis},
  volume={21},
  number={1},
  pages={5--30},
  year={2006},
  publisher={Elsevier}
}

@article{nadler2006diffusion,
  title={Diffusion maps, spectral clustering and reaction coordinates of dynamical systems},
  author={Nadler, Boaz and Lafon, St{\'e}phane and Coifman, Ronald R and Kevrekidis, Ioannis G},
  journal={Applied and Computational Harmonic Analysis},
  volume={21},
  number={1},
  pages={113--127},
  year={2006},
  publisher={Elsevier}
}

@article{lim2025hidden,
  title={On The Hidden Biases of Flow Matching Samplers},
  author={Lim, Soon Hoe},
  journal={arXiv preprint arXiv:2512.16768},
  year={2025}
}

@book{harlim2018data,
  title={Data-Driven Computational Methods: Parameter and Operator Estimations},
  author={Harlim, John},
  year={2018},
  publisher={Cambridge University Press}
}

@article{berry2015nonparametric,
  title={Nonparametric forecasting of low-dimensional dynamical systems},
  author={Berry, Tyrus and Giannakis, Dimitrios and Harlim, John},
  journal={Physical Review E},
  volume={91},
  number={3},
  pages={032915},
  year={2015},
  publisher={APS}
}

@TECHREPORT{neurde_TR,
  author =       {J. A. L. Benitez and J. Guo and K. Hegazy and I. Dokmanic and M. W. Mahoney and M. V. de Hoop},
  title =        {Neural equilibria for long-term prediction of nonlinear conservation laws},
  number =       {Preprint: arXiv:2501.06933},
  year =         {2025},
}
\bibliographystyle{plain}

\newpage
\appendix
\onecolumn
\section*{Appendix}
This appendix is organized as follows. In App. \ref{app:relatedwork}, we discuss related work in detail. In App. \ref{app:background}, we provide background on flow matching (FM), conditional flow matching (CFM), and their empirical counterparts, as well as related discussion. In App. \ref{app:algorithm}, we provide a detailed algorithm describing the proposed training-free model for probabilistic forecasting, as well as further interpretations. In App. \ref{app:proof}, we provide proof of the main theoretical results presented in the main paper.  
In App. \ref{app:duhamel_dm}, we offer additional theoretical results and insights on the proposed ODE model. In App. \ref{app:experiments}, we provide experimental details and additional empirical results. In App. \ref{app_realworld}, we provide a broader empirical evaluation on several real-world datasets.

\section{Related Work} 
\label{app:relatedwork}

\subsection{Sequence Modeling}
Sequence modeling has a long history that spans dynamical systems, statistical time series analysis, and data-driven learning. Classical approaches model temporal evolution through dynamical system formalisms such as ODEs and SDEs, as well as through statistical models including ARIMA, ARFIMA and their nonlinear extensions \cite{fan2003nonlinear, chan2013chaos}. These perspectives support principled inference in challenging regimes, including partially observed systems \cite{ratas2024application}, irregular sampling, and likelihood-based or general statistical inference for dependent data \cite{mcgoff2015statistical}. While such methods provide interpretability and theoretical structure, they often rely on parametric assumptions or explicit model specification that can be restrictive when the governing dynamics are unknown or highly complex or chaotic.

Motivated by these limitations, a broad line of work studies nonparametric approaches that estimate dynamics directly from data. This includes nonparametric methods for stationary processes, such as kernel-based density and transition estimation \cite{wu2010kernel,harvey2012kernel, hang2018kernel}, and locally stationary processes \cite{vogt2012nonparametric}, as well as nonparametric estimation for Markov chains \cite{lacour2008nonparametric} and transition distribution functionals of Markov processes \cite{roussas1969nonparametric}, with general treatments for stochastic processes \cite{bosq2012nonparametric}.
Closely related in spirit is Empirical Dynamical Modeling (EDM), which uses delay embeddings and memory-based local regression (motivated by Takens' theorem) to forecast nonlinear dynamics directly from historical trajectories \cite{sugihara1990nonlinear}. Our results connect to this tradition by revealing that, at the empirical optimum and under commonly used conditional paths \cite{lim2024elucidating}, flow matching induces a continuous-time, memory-augmented dynamical system whose vector field aggregates information from past transitions.

A complementary data-driven view is provided by Koopman operator methods and related lifting-based approaches, which represent nonlinear dynamics through linear evolution in a (infinite-dimensional) function space \cite{Erichson_Koopman1_20_TR,brunton2021modern, zhao2023data}. Whereas many nonparametric forecasting schemes emphasize local similarity and memory, Koopman-based approaches aim to create global representations that support prediction, system identification, and control. These perspectives are not mutually exclusive; rather, they emphasize different inductive biases (local memory versus global structure), a distinction that becomes salient when interpreting the mechanisms induced by modern generative objectives.

Finally, neural sequence models have become central due to their expressive parametric function approximation and scalability, including echo state and reservoir computing methods \cite{jaeger2004harnessing}, recurrent architectures \cite{hochreiter1997long}, Transformers \cite{vaswani2017attention, jaeger2001echo}, State-Space Models \cite{gu2021efficiently,annan_tuning_TR}, as well as continuous-time neural models such as Neural ODEs and Neural SDEs \cite{chen2018neural, bartosh2025sde, kidger2022neural,krishnapriyan2023learning}.  Many recent works further develop hybrid approaches that combine the mechanistic structure with learned components. Complementary operator-learning approaches, such as various forms of neural operators \cite{kovachki2023neural, li2021markov,lu2019deeponet}, study function-to-function representations of dynamics in high-dimensional settings, addressing a modeling regime that is orthogonal to the empirical transition-level dynamics analyzed here.  In contrast to proposing a new parametric architecture, our work characterizes the \emph{optimal empirical} velocity field targeted by flow matching on sequential data, thereby providing an interpretable reference point for what expressive neural models trained with stochastic optimization are implicitly approximating.

\subsection{Generative Modeling}
Modern generative modeling approaches for sequential data often define dynamics implicitly through learned transport or score fields rather than explicit transition models. Flow matching has emerged as a promising framework, offering simple objectives and ODE-based sampling~\cite{lipman2022flow}. A growing body of work develops extensions and applications of flow matching across settings relevant to sequential data, including improved training and sampling \cite{tong2023improving}, connections to action and control perspectives \cite{neklyudov2023action}, trajectory and sequence constructions \cite{zhang2024trajectory, jahn2025trajectory}, stochastic variants \cite{shi2025stochastic}, longitudinal and irregularly structured data \cite{islam2025longitudinal}, Fourier and spectral parameterizations \cite{li2024fourier}, multi-marginal and multi-structure formulations \cite{lee2025multi}, as well as navigation- and latent/hidden-state-related viewpoints \cite{liu2025navigation, lim2025hidden}, among others \cite{kollovieh2024flow}. 

While these works demonstrate the flexibility and empirical strength of flow matching, they typically treat the learned velocity field as an implicit neural object. In contrast, extending the empirical approach of \cite{scarvelis2023closed, bertrand2025closed, li2026kinetic} to the sequential data setting, we ask what velocity field is learned at the \emph{empirical optimum} given finite sequential data, and we show that under a commonly used choice of conditional probability path the optimal field admits a training-free, closed-form characterization with explicit dependence on historical transitions.

Diffusion- and score-based generative models have also been widely adapted to sequential and continuous-time settings, often through SDE formulations \cite{yuan2024diffusion, bilos2023modelingtemporaldatacontinuous, gilpin2024generative}, including approaches that emphasize temporal structure and forecasting \cite{naiman2023generative, naiman2024utilizing, kollovieh2024predict}. These methods learn score fields that define stochastic sampling dynamics; our analysis is complementary in that it exposes when flow matching yields an \emph{explicit} empirical optimum—interpretable as a similarity-weighted replay of past transitions—highlighting a distinct mechanism from purely parametric score~learning.

Several related probabilistic transport formalisms further connect generative modeling and dynamical systems. Stochastic interpolants \cite{chen2024probabilistic} provide a general perspective on the construction of stochastic paths and associated dynamics, while Schr\"odinger bridge formulations characterize entropic optimal transport trajectories between distributions \cite{theodoropoulos2025momentum, gottwald2025stable}. More broadly, existing diffusion- and flow-based generative approaches often employ probability paths that transport a base distribution (e.g., a Gaussian) to an entire trajectory viewed as a single object \cite{hu2025flowtstimeseriesgeneration, naiman2024utilizing, bilos2023modelingtemporaldatacontinuous}, whereas the paths considered here are explicitly aligned with temporal structure \cite{lim2024elucidating, zhang2024trajectory, jahn2025trajectory}, yielding velocity fields that decompose naturally into sequential transitions. Our results emphasize that in the flow matching setting for sequential data, the \emph{choice of probability path} can fundamentally determine the induced training-free dynamics at the empirical optimum, suggesting a modeling axis that parallels (but is distinct from) choices of interpolation or regularization in these related frameworks.

In parallel, large language models and foundation models have recently been explored for time series and sequence forecasting, including large-scale pretraining and zero-shot transfer \cite{gruver2023large, jin2023time, zhang2024zero, liu2025sundial, ansari2025chronos, auer2025tirex}. These approaches emphasize generalization across tasks and datasets through scale, whereas our work studies the structure of the \emph{finite-data empirical optimum} for flow matching, clarifying the extent to which the induced dynamics reflect general dynamical principles versus dataset-dependent replay.

\noindent {\bf Positioning of This Work.}
Rather than proposing a new architecture or objective, we provide an analytical characterization of the velocity field  learned by a perfectly expressive flow matching model trained on finite sequential data. Our approach builds on the closed-form perspective \cite{scarvelis2023closed,bertrand2025closed, li2026kinetic}.  Under a commonly used choice of conditional probability path \cite{lim2024elucidating}, we show that the optimal empirical solution induces a nonparametric, memory-augmented ODE whose vector field admits a closed-form expression, and enables training-free forecasting by balancing the act of replaying historical transitions and injecting score-based correction. This perspective bridges nonparametric sequence modeling and modern generative learning (see also Table \ref{tab:freefm_related_families}), positions neural flow matching models trained by stochastic optimization as parametric surrogates of an ideal nonparametric solution.

We believe our results provide valuable insights for the time series community; see the paragraph before App. \ref{app:fm_time_series}. Flow-based generative modeling is gaining increasing attention in sequence modeling, yet its foundations in sequential settings remain relatively underexplored. Our work helps fill this gap by uncovering connections between FM, memory effects, and nonparametric dynamical systems, and raises questions about the role of model parameterization.

\begin{table}[t]
\centering
\small
\begin{tabular}{lccccc}
\hline
Method family & Local & Nonparametric & Cont.-time & Probabilistic & Neural \\
\hline
Empirical dynamic modeling \cite{sugihara1990nonlinear}  & \checkmark & \checkmark & $\times$ & $\sim$ & $\times$ \\
Kernel analog forecasting \cite{zhao2016analog} & \checkmark & \checkmark & $\times$ & $\sim$ & $\times$ \\
Kernel conditional estimators \cite{nadaraya1964estimating} & \checkmark & \checkmark & \texttimes & \checkmark & \texttimes \\
Diffusion-map based methods \cite{coifman2006diffusion} & \checkmark & \checkmark & \texttimes & \texttimes & \texttimes \\
CNF/FM models \cite{lim2024elucidating} & \texttimes & \texttimes & \checkmark & \checkmark & \checkmark \\
Retrieval-based models \cite{tire2024retrieval} & \checkmark & $\sim$ & \texttimes & $\sim$ & $\sim$ \\
\textbf{FreeFM} & \checkmark & \checkmark & \checkmark & \checkmark & \texttimes \\
\hline \\
\end{tabular}
\caption{{\bf Comparison of FreeFM with related model families.} Neural FM models implicitly optimize for FreeFM, which  combines local retrieval, nonparametric transition modeling, continuous-time flow-based forecasting, and probabilistic prediction in a single framework, without learning a neural  model. Here, $\sim$ indicates that the property may be present depending on the specific method instantiation.}
\label{tab:freefm_related_families}
\end{table}

\section{Background and Related Discussion}
\label{app:background}

The main goal of  generative modeling is to use finitely many samples from a distribution to construct a sampling scheme capable of generating new samples from the same distribution. 

Among the families of existing generative models, flow matching (FM) is notable for its flexibility and simplicity. Given a target probability distribution, FM uses a parametric model (e.g., neural network) to learn the velocity vector field that defines a deterministic, continuous transformation (a normalizing flow) and transports a source probability distribution (e.g., standard Gaussian) to the target distribution. 

From now on, we assume that all the probability distributions (except the empirical data  distribution) of the random variables considered are absolutely continuous (i.e., they have densities with respect to the Lebesgue measure), in which case we shall abuse the notation and use the same symbol to denote both the distribution and the density, unless stated otherwise.

\begin{figure*}[!t]
    \centering
    \includegraphics[width=0.75\textwidth]{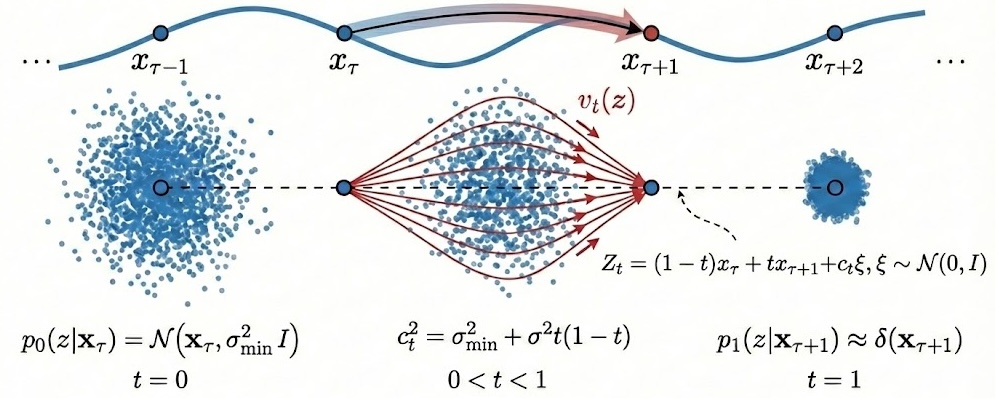}
\caption{{\bf Illustration of Dynamical Measure Transport via Flow Matching (FM).} 
The schematic depicts the continuous transport of a probability measure from a source to a target distribution. (Left) The process initializes with a standard Gaussian source measure $p_0(z) = \mathcal{N}(0, I)$. (Middle) The FM objective defines a vector field $v_t(z)$ that drives the transport. The resulting ODE flow $dz/dt = v_t(z)$ pushes the probability mass along time-dependent trajectories, creating a probability path $p_t(z)$ that undergoes a topological bifurcation (splitting from one mode to two). (Right) The measure is successfully transported to the target bimodal density $p_1(z)$, with samples settling at the modes $\pm m$. }
    \label{fig:fm}
\end{figure*}

\subsection{Flow Matching (FM)}
The goal of FM is to find a velocity field $v: [0,1] \times  \R^d \to \R^d$, such that, if we solve the ODE:
$$\frac{dz(t)}{dt} = v(t, z(t)) =: v_t(z(t)), \  z(0) = z_0 \in \R^d,$$ then the law of $z(1)$ when $z_0 \sim p_0$ is $p_1$ (in which case we say that $v$ drives $p_0$ to $p_1$). The law of $z(t)$ for $t \in [0,1]$ is described by  a probability path $p: [0,1] \times \R^d \to \R$, denoted $p_t(z)$, that evolves from $p_0$ at $t=0$ to $p_1$ at $t=1$. If we know $v$, then we can first sample $z_0 \sim p_0$ and then evolve the ODE from $t=0$ to $t=1$ to generate new samples. See Fig. \ref{fig:fm} for an illustration.

The velocity field $v$ generates the continuous flow $\psi: [0,1] \times \R^d$ given as $\psi_t(z) = z(t)$, and the probability path via the push-forward distributions: $p_t = [\psi_t]_{\#} p_0$, i.e., $\psi_t(Z) \sim p_t$ for $Z \sim p_0$. In particular, $Z \sim p_0$ implies that $\psi_1(Z) \sim p_1$, i.e., $\psi_t$ can be viewed as a dynamical transport map. The ODE corresponds to the Lagrangian description (the $v$-generated trajectories viewpoint), and a change of variables link it to the Eulerian description (the evolving probability path $p_t$ viewpoint). Indeed, under standard regularity assumptions, a necessary and sufficient condition for $v$ to generate $p_t$ is given by the continuity equation \cite{albergo2024learning, wald2025flow,neurde_TR}: 
\begin{equation} \label{eq_continuity}
    \frac{\partial p_t}{\partial t} + \nabla \cdot (p_t v) = 0,
\end{equation}
where $\nabla \cdot$ denotes the divergence operator. 
This equation ensures that the flow defined by $v$ conserves the mass (or probability) described by $p_t$. In general, even for simple choice of path, the velocity field does not admit a closed-form expression when  $p_0$ and $p_1$ are known, except in special cases such as Gaussians, mixture of Gaussians and uniform distributions \cite{mena2025statistical}.

Given the population velocity $v$, sampling from $p_1$ is achieved by sampling $z_0\sim p_0$ and integrating the ODE forward in time.
In practice, $v$ is approximated by a parametric model $v_\theta$ trained via the FM objective
\begin{equation}\label{eq:LFM}
L_{\mathrm{FM}}[v_\theta]
=
\mathbb E_{t\sim\mathcal U[0,1],\,Z_t\sim p_t}
\big[\|v_\theta(t,Z_t)-v(t,Z_t)\|^2\big].
\end{equation}

\subsubsection{Conditional Flow Matching via a Latent Variable}

Conditional flow matching (CFM) \cite{lipman2022flow,tong2023improving} generalizes FM by introducing a latent variable
$C \sim \pi$
taking values in a measurable space $\mathcal C$.
For each realization $C=c$, we specify a conditional probability path $p_t(z\mid C=c)$ generated by a conditional velocity field $v(t,z\mid C=c)$.

The marginal probability path is the mixture
\begin{equation}\label{eq:latent_mixture_path}
p_t(z)=\int p_t(z\mid c)\,\pi(dc),
\end{equation}
and the associated velocity field is given by conditional expectation
\begin{equation}\label{eq:latent_markov_velocity}
v(t,z)
=
\E\!\left[v(t,Z_t\mid C)\mid Z_t=z\right]
=
\int v(t,z\mid c)\frac{p_t(z\mid c)\pi(dc)}{p_t(z)}.
\end{equation}

Given a parametric model $v_\theta$, the CFM regression objective reads
\begin{equation}\label{eq:LCFM}
L_{\mathrm{CFM}}[v_\theta]
=
\E_{t\sim\mathcal U[0,1],\,C\sim\pi,\,Z_t\sim p_t(\cdot\mid C)}
\big[\|v_\theta(t,Z_t)-v(t,Z_t\mid C)\|^2\big].
\end{equation}
It is shown in \cite{lipman2022flow} that minimizing $L_{\mathrm{CFM}}$ is equivalent, in terms of minimizers, to minimizing the FM objective \eqref{eq:LFM}.
Thus, CFM learns a  velocity field by projecting latent (possibly non-Markov) conditional dynamics onto state space that shares the same dimension as the data~space.

\medskip
\noindent{\bf Recovering the standard CFM formulation.}
Setting $C=X$ with $X\sim p_1$ yields
\[
p_t(z)=\int p_t(z\mid x)p_1(x)\,dx,
\]
and
\[
v(t,z)=\int v(t,z\mid x)\frac{p_t(z\mid x)p_1(x)}{p_t(z)}dx,
\]
with the objective
\[
L_{\mathrm{CFM}}[v_\theta]
=
\E_{t,X,Z_t\sim p_t(\cdot\mid X)}
\big[\|v_\theta(t,Z_t)-v(t,Z_t\mid X)\|^2\big].
\]

In order to apply CFM, we need to specify the boundary distributions $p_0$ and $p_1$, and the conditional probability path $p_t(z|x)$. Below are some examples.

\noindent {\bf Rectified Flow.} 
    A canonical choice \cite{liu2022flow} is $p_0 = \mathcal{N}(0, I_d)$, $p_1 = p^*$, and 
    \begin{equation} \label{eq_probpath_RF}
         p_t(z | X = x_1) = \mathcal{N}(z; t x_1, (1-t)^2 I_d),
    \end{equation}
    which corresponds to the conditional velocity field $v(t, z| X=x_1) = \frac{x_1 - z}{1-t}$. This conditional probability path realizes linear interpolating paths of the form $Z_t = (1-t) x_0 + t x_1$ between a (reference) Gaussian sample $x_0$ and a data sample $x_1$. In practice,  regularized versions of rectified flow are preferred for numerical stability  (since $v$ blows up as $t \to 1$). A simple version is to modify the conditional probability path to $$p_t(\cdot | X = x_1) = \mathcal{N}(t x_1, (1-(1-\sigma_{min})t)^2 I_d),$$ for some small $\sigma_{min} > 0$, which corresponds to the regularized conditional velocity field $v(t, z| X=x_1) = \frac{x_1 - (1-\sigma_{min})z}{1-(1-\sigma_{min})t}$. Another version is to  consider a smoothed version of the data distribution $p^*$; e.g., $p_1 = p^* \star \mathcal{N}(0, \sigma_{min}^2 I_d)$, where $\star$ denotes convolution.

\noindent {\bf Affine Conditional Flows} 
Consider a base random variable $Z \sim \mathbb{Q}$ with probability density function (PDF) $K$ (not necessarily Gaussian) and, for $t \in [0,1]$,  the affine conditional flow defined by $\psi_t(Z|X) = m_t(X) + \sigma_t(X) Z$ for some time-differentiable functions $m: [0,1] \times \R^d \to \R^d$ and $\sigma: [0,1] \times \R^d \to (0,\infty)$. Since $\psi_t$ is linear in $Z$, we can obtain its density via the change of variables:
\begin{equation} \label{eq_probpath_affine}
    p_t(z|X) = \frac{1}{\sigma_t^d(X)} K\left(\frac{z-m_t(X)}{\sigma_t(X)} \right).
\end{equation}
Then, as in Theorem 3 in \cite{lipman2022flow}, we can show that the unique vector field that defines $\psi_t(\cdot | X)$ via the ODE $\frac{d}{dt} \psi_t(z|X) = v(t, \psi_t(z|X) | X)$ has the form: 
\begin{equation}
    v(t, z| X) = a_t(X) z + b_t(X),
\end{equation}
where 
\begin{align} \label{eq_ab}
    a_t(X) &= \frac{\frac{\partial \sigma_t}{\partial t}(X)}{\sigma_t(X)}, \quad 
    b_t(X) = \frac{\partial m_t}{\partial t}(X) - m_t(X) a_t(X).
\end{align}
The rectified flow in previous example is a special case of this family of conditional flows (with $K = \mathcal{N}(0, I_d)$,  $m_t(X) = tX$ and $\sigma_t(X) = 1-t$). The Gaussian flows considered in \cite{lipman2022flow, tong2023improving, albergo2023stochastic} are also special cases.

All the formulations thus far are in the idealized continuous-time setting. In practice, we work with  Monte Carlo estimates of the objective and use the optimized $v_\theta$ to generate new samples by simulating the ODE with a numerical scheme. Note, however, that the training of CFM is simulation-free: the dynamics are only simulated at inference time and not when training the parametric model. An end-to-end error analysis for deep generative models based on FM was recently provided \cite{zhouerror}.

\subsection{Empirical Flow Matching}  \label{subsec:empiricalfm}

Suppose that we are given a source distribution $p_0$ and $N$ i.i.d.\ samples
$x^{(i)} \sim p_1, \  i=1,\dots,N,$
i.e., we only have access to the target distribution $p_1$ through a finite sample.
We approximate $p_1$ by its empirical distribution
$\hat{p}_1 := \frac{1}{N} \sum_{i=1}^N \delta_{x^{(i)}}.$
We shall refer to FM and CFM instantiated with $p_1=\hat{p}_1$ as \emph{empirical FM} and \emph{empirical CFM}, respectively.

\paragraph{Latent-variable formulation.}
Introduce a discrete latent variable
\[
C \sim \pi, \qquad \pi(i)=\tfrac{1}{N}, \quad i\in\{1,\dots,N\},
\]
and define the conditional probability path
\[
p_t(z\mid C=i) := p_t(z\mid x^{(i)}),
\]
where $p_t(\cdot\mid x^{(i)})$ denotes a conditional probability path such as
\eqref{eq_probpath_RF} or \eqref{eq_probpath_affine}.
The corresponding marginal probability path is
\begin{equation}\label{eq_emp_pt}
\hat{p}_t(z)
=
\mathbb E_{C}[\,p_t(z\mid C)\,]
=
\frac{1}{N}\sum_{i=1}^N p_t(z\mid x^{(i)}),
\end{equation}
and the associated velocity field, given by conditional expectation, takes the form
\begin{equation}\label{eq_emp_v}
\hat{v}(t,z)
=
\mathbb E\!\left[v(t,z\mid C)\mid Z_t=z\right]
=
\sum_{i=1}^N v(t,z\mid x^{(i)})
\frac{p_t(z\mid x^{(i)})}{\sum_{j=1}^N p_t(z\mid x^{(j)})}.
\end{equation}

\paragraph{Empirical FM and CFM objectives.}
The empirical FM objective is
\begin{equation}
\hat{L}_{\text{FM}}[v']
=
\mathbb{E}_{t \sim \mathcal{U}[0,1],\, Z_t \sim \hat{p}_t}
\big[\|v'(t, Z_t) - \hat{v}(t, Z_t)\|^2\big],
\end{equation}
while the empirical CFM objective can be written, using the latent variable $C$, as
\begin{align}
\hat{L}_{\text{CFM}}[v']
&=
\mathbb{E}_{t \sim \mathcal{U}[0,1],\, C\sim\pi,\, Z_t \sim p_t(\cdot \mid C)}
\big[\|v'(t, Z_t) - v(t, Z_t\mid C)\|^2\big] \nonumber \\
&=
\frac{1}{N}\sum_{i=1}^N
\mathbb{E}_{t \sim \mathcal{U}[0,1],\, Z_t \sim p_t(\cdot \mid x^{(i)})}
\big[\|v'(t, Z_t) - v(t, Z_t\mid x^{(i)})\|^2\big].
\end{align}

If each conditional velocity field $v(t,\cdot\mid x^{(i)})$ generates the corresponding conditional probability path $p_t(\cdot\mid x^{(i)})$, then the velocity field $\hat{v}(t,\cdot)$ generates the empirical path $\hat{p}_t$ (see Lemma~2.1 in \cite{kunkel2025minimax}).
Moreover, as in the population case, the minimizing arguments of empirical FM and empirical CFM coincide (see Theorem~2.2 in \cite{kunkel2025minimax}).

For the conditional probability paths considered earlier, the empirical objective admits a closed-form minimizer
\[
\hat{v}^*
\in
\arg\min_v \hat{L}_{\mathrm{CFM}}[v]
=
\arg\min_v \hat{L}_{\mathrm{FM}}[v],
\]
yielding a \emph{training-free} generative model.
Sampling is performed by integrating the ODE
\begin{equation}\label{eq_closedformsampler}
\frac{d\hat{z}^*(t)}{dt}
=
\hat{v}^*(t,\hat{z}^*(t)),
\qquad
\hat{z}^*(0)\sim p_0,
\end{equation}
which is evolved to $t=1$ to obtain new samples.

\begin{proposition}[Coupled empirical affine FM/CFM minimizer]\label{prop:coupled_emp_affine}
Let $C$ be a discrete latent variable taking values in an index set $\mathcal{I}$ with prior
$\pi(c)$, and suppose that for each $c\in\mathcal{I}$ we are given an affine conditional flow on
$\R^d$ of the form
\[
\psi_t(Z\mid C=c)=m_t(c)+\sigma_t(c)Z,\qquad Z\sim K,
\]
where $m_t(c)\in\R^d$ and $\sigma_t(c)>0$ are differentiable in $t$.
Let $p_t(\cdot\mid c)$ be the induced conditional density and let
$v(t,z\mid c)=a_t(c)z+b_t(c)$ be the (unique) velocity field generating $\psi_t(\cdot\mid c)$,
with
\[
a_t(c)=\frac{\dot\sigma_t(c)}{\sigma_t(c)},
\qquad
b_t(c)=\dot m_t(c)-a_t(c)m_t(c).
\]
Define the mixture path $p_t(z)=\sum_{c\in\mathcal{I}}\pi(c)\,p_t(z\mid c)$.
Then the unique minimizer of the empirical CFM (equivalently FM) objective
\[
\hat{\mathcal L}_{\mathrm{CFM}}[v']
=\E_{t\sim \mathcal U[0,1]}\E_{C\sim\pi}\E_{Z_t\sim p_t(\cdot\mid C)}
\|v'(t,Z_t)-v(t,Z_t\mid C)\|^2
\]
admits the closed form expression:
\[
\hat v^*(t,z)=\sum_{c\in\mathcal{I}} w_c(t,z)\,\big(a_t(c)z+b_t(c)\big),
\qquad
w_c(t,z)=\frac{\pi(c)\,p_t(z\mid c)}{\sum_{c'\in\mathcal{I}}\pi(c')\,p_t(z\mid c')}.
\]
\end{proposition}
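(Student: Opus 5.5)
The plan is to treat the empirical CFM objective as a weighted least-squares problem that decouples pointwise in $(t,z)$. The minimizer is then the conditional expectation of the conditional velocity given $Z_t=z$, and the affine structure gives the closed form. First, for each fixed $t$, I rewrite the inner expectations with Fubini, using that $C$ is discrete and each $p_t(\cdot\mid c)$ is a density:
\[
\E_{C\sim\pi}\E_{Z_t\sim p_t(\cdot\mid C)}\|v'(t,Z_t)-v(t,Z_t\mid C)\|^2
=\int_{\R^d}\sum_{c\in\mathcal I}\pi(c)\,p_t(z\mid c)\,\|v'(t,z)-v(t,z\mid c)\|^2\,dz .
\]
Wherever the mixture density $p_t(z)=\sum_c\pi(c)p_t(z\mid c)$ is positive, I define $w_c(t,z)=\pi(c)p_t(z\mid c)/p_t(z)$. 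These weights are nonnegative and sum to one. The integrand then becomes $p_t(z)\sum_c w_c(t,z)\|v'(t,z)-v(t,z\mid c)\|^2$.

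Second, I use the bias--variance identity for each $(t,z)$. Set $\bar v(t,z):=\sum_c w_c(t,z)v(t,z\mid c)$. Expanding the square, the cross term vanishes because $\sum_c w_c(v(t,z\mid c)-\bar v)=0$, so
\[
\sum_c w_c\|v'-v(\cdot\mid c)\|^2=\|v'(t,z)-\bar v(t,z)\|^2+\sum_c w_c\|v(t,z\mid c)-\bar v(t,z)\|^2 .
\]
The second term does not depend on $v'$. Hence
\[
\hat{\mathcal L}_{\mathrm{CFM}}[v']=\E_t\int p_t(z)\|v'(t,z)-\bar v(t,z)\|^2dz+\text{const},
\]
which is minimized exactly by $v'=\bar v$ $\,dt\,p_t(z)dz$-a.e. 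This identity also shows that the CFM and FM objectives differ by a constant, so they share minimizers, consistent with the cited results of \cite{kunkel2025minimax}. Substituting $v(t,z\mid c)=a_t(c)z+b_t(c)$ gives the stated formula for $\hat v^*$.

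Third, I need uniqueness and finiteness. Uniqueness holds in the $L^2(dt\,p_t(z)dz)$ sense. If the base density $K$ is everywhere positive, as for a Gaussian, then $p_t>0$ on all of $\R^d$, so the minimizer is unique pointwise and the weights are defined everywhere. For general $K$, I will state uniqueness on the support of $p_t$, and the formula still makes sense there. I also need the constant term to be finite, which requires $\E\|v(t,Z_t\mid C)\|^2<\infty$. This follows if $K$ has a finite second moment and $a_t,b_t$ are square-integrable in $t$. For a finite index set this holds under the differentiability assumptions, with some care near endpoints where $\sigma_t$ could vanish; here $\sigma_t>0$ is assumed.

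I expect this integrability and uniqueness bookkeeping to be the only real obstacle, since the algebra is routine. My first approach is to assume $\int_0^1(|a_t(c)|^2+\|b_t(c)\|^2)\,dt<\infty$ and $\int\|z\|^2K<\infty$, and to interpret the minimizer up to null sets.
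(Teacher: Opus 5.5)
Your proposal is correct and follows essentially the same route as the paper: you use Fubini to rewrite the inner expectation as $\int\sum_c\pi(c)\,p_t(z\mid c)\,\|v'(t,z)-v(t,z\mid c)\|^2\,dz$ and minimize the resulting quadratic pointwise, where the paper sets the gradient to zero and you complete the square via the bias--variance identity, both giving the posterior-weighted average with uniqueness $p_t$-a.e. Your added condition that the constant term be finite (e.g., $\int_0^1(|a_t(c)|^2+\|b_t(c)\|^2)\,dt<\infty$ and $K$ with finite second moment) is a genuine point of rigor the paper's proof leaves implicit, since without it every $v'$ gives $+\infty$ and the uniqueness claim is vacuous.
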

\begin{proof}[Proof of Proposition~\ref{prop:coupled_emp_affine}]
Let $t\sim\mathcal U[0,1]$, $C\sim\pi$ on the discrete index set $\mathcal I$, and
$Z_t\sim p_t(\cdot\mid C)$, where $p_t(\cdot\mid c)$ is induced by the affine conditional flow
$\psi_t(\cdot\mid c)=m_t(c)+\sigma_t(c)Z$ with $Z\sim K$.
The objective is
\[
\hat{\mathcal L}_{\mathrm{CFM}}[v']
=
\E_{t}\E_{C}\E_{Z_t\sim p_t(\cdot\mid C)}
\|v'(t,Z_t)-v(t,Z_t\mid C)\|^2.
\]

Fix $t\in[0,1]$ and consider minimizing over measurable functions $z\mapsto v'(t,z)$.
Write the inner expectation as an integral against the joint law of $(C,Z_t)$:
\[
\E_{C}\E_{Z_t\sim p_t(\cdot\mid C)}\|v'(t,Z_t)-v(t,Z_t\mid C)\|^2
=
\sum_{c\in\mathcal I}\pi(c)\int_{\R^d}\|v'(t,z)-v(t,z\mid c)\|^2\,p_t(z\mid c)\,dz.
\]
Equivalently, letting
\[
p_t(z)=\sum_{c\in\mathcal I}\pi(c)p_t(z\mid c)
\]
be the marginal density of $Z_t$, we can rewrite the same quantity as
\[
\int_{\R^d}
\left(
\sum_{c\in\mathcal I}\pi(c)p_t(z\mid c)\,\|v'(t,z)-v(t,z\mid c)\|^2
\right)dz.
\]
For each fixed $z$, the integrand is a strictly convex quadratic function of the vector
$v'(t,z)\in\R^d$ (strictly convex whenever $p_t(z)>0$). Hence, the minimizer is obtained
pointwise by setting the gradient to zero:
\[
0
=
\frac{\partial}{\partial v'(t,z)}
\sum_{c\in\mathcal I}\pi(c)p_t(z\mid c)\,\|v'(t,z)-v(t,z\mid c)\|^2
=
2\sum_{c\in\mathcal I}\pi(c)p_t(z\mid c)\big(v'(t,z)-v(t,z\mid c)\big).
\]
Solving yields
\[
v'^*(t,z)
=
\frac{\sum_{c\in\mathcal I}\pi(c)p_t(z\mid c)\,v(t,z\mid c)}{\sum_{c'\in\mathcal I}\pi(c')p_t(z\mid c')}
=
\sum_{c\in\mathcal I} w_c(t,z)\,v(t,z\mid c),
\qquad
w_c(t,z)=\frac{\pi(c)p_t(z\mid c)}{p_t(z)}.
\]
This is exactly the conditional expectation form
$v'^*(t,z)=\E[v(t,z\mid C)\mid Z_t=z]$.

Finally, using the affine form $v(t,z\mid c)=a_t(c)z+b_t(c)$ gives
\[
\hat v^*(t,z)
=
\sum_{c\in\mathcal I} w_c(t,z)\,\big(a_t(c)z+b_t(c)\big),
\]
which proves the claimed closed form.

Uniqueness holds because for each $t$ and almost every $z$ with $p_t(z)>0$, the pointwise
quadratic is strictly convex, hence the minimizer is unique $p_t$-a.e.; this determines a unique
minimizer in $L^2(dt\otimes p_t(dz))$.
\end{proof}

\begin{example}[Empirical Rectified Flow]
For the rectified flow example, the minimizer $\hat{v}^*$ admits the closed-form expression \cite{bertrand2025closed}
\begin{equation}\label{eq_empRF}
\hat{v}^*(t,z)
=
\sum_{i=1}^N
w_i(t,z)\,\frac{x^{(i)}-z}{1-t},
\end{equation}
where the weights are given by
\[
w_i(t,z)
=
\mathrm{softmax}_i\!\left(
\left(
-\frac{\|z-tx^{(j)}\|^2}{2(1-t)^2}
\right)_{j\in[N]}
\right),
\]
with  $\mathrm{softmax}_i$ denoting the $i$th component of the vector obtained after applying the softmax operation.
The optimal velocity field is thus a time-dependent weighted average of the directions pointing toward the empirical samples.
Analogous formulas hold for regularized variants of rectified flow.
\end{example}

\begin{example}[Empirical Affine Flows]\label{eg_empaffineflows}
To construct a flow from $p_0(z)=K(z)$ to the smoothed empirical distribution
\[
\tilde{p}_1(z)
=
\frac{1}{N\sigma_{\min}^d}
\sum_{i=1}^N
K\!\left(\frac{z-x^{(i)}}{\sigma_{\min}}\right),
\]
we may choose functions $m_t$ and $\sigma_t$ satisfying
\[
m_0(C)=0, \quad m_1(C)=x^{(C)}, \qquad
\sigma_0(C)=1, \quad \sigma_1(C)=\sigma_{\min}.
\]
Here and throughout, we write $x^{(C)}:=x^{(i)}$ when $C=i$.
The final marginal distribution, obtained by averaging $p_1(z\mid C=i)$ over the empirical distribution of $C$, coincides with the Nadaraya--Watson kernel density estimator $\tilde{p}_1$ \cite{nadaraya1964estimating}.
Heuristically, when $K$ is Gaussian and $\sigma_{\min}\to 0$, this construction recovers rectified flow.

For affine conditional flows, the empirical minimizer admits the closed-form representation below. This follows from Proposition~\ref{prop:coupled_emp_affine}
by taking $\mathcal I=[N]$, $\pi(i)=1/N$, and $c=i$ indexing the samples $x^{(i)}$.

\begin{proposition}\label{app_prop1}
For the family of affine conditional flows, the minimizer $\hat{v}^*$ of the empirical FM objective admits the closed-form expression
\[
\hat{v}^*(t,z)
=
\sum_{i=1}^N
w_i(t,z)\,\big(a_t(x^{(i)})\,z+b_t(x^{(i)})\big),
\]
where $a_t$ and $b_t$ are given in \eqref{eq_ab}, and the weights are
\[
w_i(t,z)
=
\frac{p_t(z\mid x^{(i)})}{\sum_{j=1}^N p_t(z\mid x^{(j)})},
\qquad
p_t(z\mid x^{(i)})
=
\frac{1}{\sigma_t^d(x^{(i)})}
K\!\left(\frac{z-m_t(x^{(i)})}{\sigma_t(x^{(i)})}\right).
\]
\end{proposition}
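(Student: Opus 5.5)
This follows from Proposition~\ref{prop:coupled_emp_affine}, applied with the index set $\mathcal I=[N]$, the uniform prior $\pi(i)=1/N$, and the conditional data $m_t(i):=m_t(x^{(i)})$ and $\sigma_t(i):=\sigma_t(x^{(i)})$. Under this identification the conditional flow $\psi_t(Z\mid C=i)=m_t(x^{(i)})+\sigma_t(x^{(i)})Z$ with $Z\sim K$ is exactly the affine flow of Example~\ref{eg_empaffineflows}. Its density is $p_t(z\mid x^{(i)})$ as given in \eqref{eq_probpath_affine}, which comes from the change of variables $z=m_t+\sigma_t u$ with Jacobian $\sigma_t^d$. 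Its generating field is $a_t(x^{(i)})z+b_t(x^{(i)})$ with $a_t,b_t$ from \eqref{eq_ab}.

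I would check this last point directly. Differentiating $\psi_t(Z)=m_t+\sigma_t Z$ gives $\dot\psi_t=\dot m_t+\dot\sigma_t Z$. Substituting $Z=(\psi_t-m_t)/\sigma_t$ yields $\dot\psi_t=a_t\psi_t+\dot m_t-a_t m_t$, which is the claimed affine field.

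Next, the empirical CFM objective with $C\sim\pi$ uniform on $[N]$ is exactly the objective in Proposition~\ref{prop:coupled_emp_affine}. That proposition then gives the unique minimizer
\[
\hat v^*(t,z)=\sum_{i=1}^N w_i(t,z)\big(a_t(x^{(i)})z+b_t(x^{(i)})\big),\qquad w_i=\frac{\pi(i)p_t(z\mid x^{(i)})}{\sum_j \pi(j)p_t(z\mid x^{(j)})}.
\]
Because the prior is uniform, the factors $1/N$ cancel in the ratio, so $w_i$ reduces to the weights stated in the proposition.

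It remains to pass from CFM to FM. The empirical FM objective has the same minimizer, because it differs from the CFM objective only by a constant independent of $v'$. This is Theorem~2.2 in \cite{kunkel2025minimax}. It can also be seen directly by expanding the square and using $\hat v(t,z)=\E[v(t,z\mid C)\mid Z_t=z]$ from \eqref{eq_emp_v}.

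The only point needing care is well-definedness of the weights, namely that the denominator is positive. I would either assume $K>0$ everywhere, as in the Gaussian case, or define $w_i$ only where $\hat p_t(z)>0$. Since uniqueness is only up to sets of $\hat p_t$-measure zero, this restriction suffices.
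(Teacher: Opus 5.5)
Your proposal is correct and follows the paper's route, which proves this in one line by specializing Proposition~\ref{prop:coupled_emp_affine} to $\mathcal I=[N]$, $\pi(i)=1/N$, $c=i$. Your additional checks are all sound details the paper leaves implicit: the derivation of the affine generating field, the cancellation of the uniform prior in $w_i$, the CFM/FM equivalence, and positivity of the denominator with uniqueness only $\hat p_t$-a.e.
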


Intuitively, the optimal empirical velocity field $\hat{v}^*$ is a convex combination of the individual conditional velocity fields $v(t, z| x^{(i)})$, weighted by  $w_i(t, z)$ which  tells us how likely the observed point $z$ at time $t$ is to belong to the flow path originating from the sample $x^{(i)}$.

\end{example}

\subsection{Empirical Couplings, Time Series, and Multi-Marginal Extensions}

The empirical FM construction is generalized by introducing a latent variable $C=(I,J)$ indexing pairs of samples from $p_0$ and $p_1$ with an arbitrary coupling $\pi_{ij}$.
This yields KDE-to-KDE transport and recovers empirical FM as a special case.

\begin{example}[Empirical Affine Flows: KDE-to-KDE Transport]\label{eg_kde2kde_affine}
In many settings we observe i.i.d.\ samples from both endpoints,
\[
x_0^{(i)} \sim p_0,\quad i=1,\dots,N_0,
\qquad
x_1^{(j)} \sim p_1,\quad j=1,\dots,N_1,
\]
and would like to transport a \emph{smoothed} empirical approximation of $p_0$ to a
smoothed empirical approximation of $p_1$.
Fix a base density $K$ on $\R^d$ and bandwidths $\sigma_0,\sigma_1>0$, and define the kernel density estimators (KDEs):
\[
\tilde p_0(z)=\frac{1}{N_0\sigma_0^d}\sum_{i=1}^{N_0}
K\!\left(\frac{z-x_0^{(i)}}{\sigma_0}\right),
\qquad
\tilde p_1(z)=\frac{1}{N_1\sigma_1^d}\sum_{j=1}^{N_1}
K\!\left(\frac{z-x_1^{(j)}}{\sigma_1}\right).
\]
Introduce a discrete latent variable
\[
C=(I,J)\in[N_0]\times[N_1],\qquad \mathbb{P}(C=(i,j))=\pi_{ij},
\]
where $\pi$ is an arbitrary coupling between the empirical endpoints (e.g.\ the
independent coupling $\pi_{ij}=\frac{1}{N_0N_1}$, an optimal transport (OT) coupling, or an empirical
time series coupling when paired data are available).

For each $(i,j)$, define an affine conditional flow with
\[
\psi_t(Z\mid I=i,J=j)=m_t^{ij}+\sigma_t^{ij} Z,\qquad Z\sim K,
\]
where $m_t^{ij}\in\R^d$ and $\sigma_t^{ij}>0$ are chosen such that
\[
m_0^{ij}=x_0^{(i)},\quad m_1^{ij}=x_1^{(j)},\qquad
\sigma_0^{ij}=\sigma_0,\quad \sigma_1^{ij}=\sigma_1.
\]
Then the induced conditional density is
\[
p_t(z\mid i,j)=\frac{1}{(\sigma_t^{ij})^d}\,
K\!\left(\frac{z-m_t^{ij}}{\sigma_t^{ij}}\right),
\]
and the marginal path is the mixture
\[
p_t(z)=\sum_{i=1}^{N_0}\sum_{j=1}^{N_1}\pi_{ij}\,p_t(z\mid i,j),
\]
so that $p_0=\tilde p_0$ and $p_1=\tilde p_1$ by construction.

The following corollary follows from Proposition~\ref{prop:coupled_emp_affine}
by taking $\mathcal I=[N_0]\times[N_1]$, $c=(i,j)$, and $\pi(c)=\pi_{ij}$.

\begin{corollary}[KDE-to-KDE affine transport with coupling]\label{cor:kde2kde_closedform}
In the setting of Example~\ref{eg_kde2kde_affine} with $C=(I,J)$ and coupling $\pi_{ij}$,
the minimizer is
\[
\hat{v}^*(t,z)=\sum_{i=1}^{N_0}\sum_{j=1}^{N_1}
w_{ij}(t,z)\,\Big(a_t^{ij} z+b_t^{ij}\Big),
\qquad
w_{ij}(t,z)=\frac{\pi_{ij}\,p_t(z\mid i,j)}{\sum_{i',j'}\pi_{i'j'}\,p_t(z\mid i',j')},
\]
with $a_t^{ij}=\dot\sigma_t^{ij}/\sigma_t^{ij}$ and $b_t^{ij}=\dot m_t^{ij}-a_t^{ij}m_t^{ij}$.
\end{corollary}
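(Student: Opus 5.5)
This is a direct specialization of Proposition~\ref{prop:coupled_emp_affine}. Take the discrete index set $\mathcal I=[N_0]\times[N_1]$ and write $c=(i,j)$. Use the prior $\pi(c)=\pi_{ij}$, which is a probability vector because $\pi$ is a coupling. Take the affine conditional flow $\psi_t(Z\mid c)=m_t^{ij}+\sigma_t^{ij}Z$ with $Z\sim K$. The hypotheses of Proposition~\ref{prop:coupled_emp_affine} then have to be checked: $m_t^{ij}$ and $\sigma_t^{ij}>0$ must be differentiable in $t$. This is implicit in the construction of Example~\ref{eg_kde2kde_affine}, and I will state it as part of the setting.

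Next, identify the objects. By the change of variables formula \eqref{eq_probpath_affine}, the induced conditional density $p_t(\cdot\mid c)$ is exactly the density $p_t(z\mid i,j)$ written in the example. The mixture path $p_t(z)=\sum_c\pi(c)p_t(z\mid c)$ is exactly the stated marginal. The generating field $v(t,z\mid c)=a_t(c)z+b_t(c)$ has $a_t(c)=\dot\sigma_t^{ij}/\sigma_t^{ij}$ and $b_t(c)=\dot m_t^{ij}-a_t^{ij}m_t^{ij}$. These are the coefficients $a_t^{ij},b_t^{ij}$ in the corollary.

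Finally, the closed form of Proposition~\ref{prop:coupled_emp_affine}, $\hat v^*(t,z)=\sum_c w_c(t,z)(a_t(c)z+b_t(c))$, becomes the double sum over $(i,j)$. Its weights $w_c=\pi(c)p_t(z\mid c)/\sum_{c'}\pi(c')p_t(z\mid c')$ are precisely the $w_{ij}$ in the statement. Uniqueness ($p_t$-a.e.) is inherited directly.

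One point needs care. Pairs with $\pi_{ij}=0$ contribute nothing to the objective or the weights, so they can be dropped from $\mathcal I$, or kept harmlessly, since their weight vanishes. The boundary conditions $p_0=\tilde p_0$ and $p_1=\tilde p_1$ are not needed for the minimizer formula; they only matter for interpreting the flow as KDE-to-KDE transport. Apart from that, there is no real obstacle: the argument is bookkeeping that matches notation.
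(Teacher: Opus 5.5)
Your proposal is correct and takes the same approach as the paper: the paper also obtains the corollary from Proposition~\ref{prop:coupled_emp_affine} by taking $\mathcal I=[N_0]\times[N_1]$, $c=(i,j)$ and $\pi(c)=\pi_{ij}$. Your explicit notes on the differentiability of $m_t^{ij},\sigma_t^{ij}$ and on pairs with $\pi_{ij}=0$ make precise details that the paper leaves implicit.
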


Sampling from $\tilde p_1$ is performed by integrating the training-free ODE
$\dot z(t)=\hat{v}^*(t,z(t))$ with $z(0)\sim \tilde p_0$.

\medskip
\noindent\textbf{Special cases.}
(i) Taking $N_0=1$, $x_0^{(1)}=0$, and $\sigma_0=1$ recovers the standard ``base-to-KDE''
setting in Example~\ref{eg_empaffineflows}.
(ii) If $(x_0^{(i)},x_1^{(i)})$ are paired observations (time series), one may choose the
diagonal coupling $\pi_{ij}=\frac{1}{N}\mathbf 1\{i=j\}$, yielding a KDE-to-KDE flow that
respects the empirical temporal coupling.
\end{example}

If instead a true joint distribution $(X_0,X_1)\sim p_{01}$ exists—as in time series or dynamical systems—setting $C=(X_0,X_1)$ yields
\[
v^*(t,z)=\E[v(t,z\mid X_0,X_1)\mid Z_t=z],
\]
which defines the Markovian projection (see the next subsection) of the pairwise temporal dynamics -- this is the approach we consider in the main paper.

\paragraph{Extension to Multi-Marginal Flow Matching.}
The framework readily extends to
$C=(X_0,X_1,\dots,X_K),$
corresponding to multi-marginal flow matching (MMFM) \cite{rohbeck2025modeling,lee2025multi}.
Conditional paths may be defined using spline-interpolated means, Gaussian bridges, or other structured probability paths satisfying multiple marginal constraints. It would also be interesting to extend to the more challenging setting where the sequences are irregularly sampled.

\paragraph{time series and Conditional Forecasting.}
In conditional forecasting, $C$ naturally represents past observations $(X_{\tau-k},\dots,X_\tau)$.
The latent-variable CFM framework therefore provides a principled way to construct Markov generative models for forecasting even when the true dynamics are non-Markovian.

\subsection{Markovian Projection and Relation to Schr\"odinger Bridges}

FM and CFM can be interpreted as regression-based Markovian projections \cite{bentata2012markovian, duong2025coarse}.
Latent conditional dynamics, induced by a target dynamical process which we take as our prior belief,  may depend on hidden variables and thus be non-Markovian; projecting them via conditional expectation yields a Markov process that exactly matches prescribed one-time marginals.
This is the deterministic analogue of Gy\"ongy’s Markovian projection for SDEs \cite{gyongy1986mimicking, brunick2013mimicking}.

\begin{theorem}[Gy\"ongy’s mimicking theorem / Markovian projection {\cite{gyongy1986mimicking}}]\label{thm:gyongy}
Let $(X_t)_{t\in[0,1]}$ be an It\^o process in $\R^d$ of the form
\[
dX_t = b_t\,dt + \Sigma_t\,dW_t,
\]
where $(W_t)$ is a standard Brownian motion, and $b_t\in\R^d$, $\Sigma_t\in\R^{d\times m}$ are progressively measurable processes such that the conditional moments below are well-defined.
Define the (state-dependent) drift and diffusion coefficients
\[
\bar b(t,x) := \E[b_t \mid X_t=x],
\qquad
\bar a(t,x) := \E[\Sigma_t\Sigma_t^\top \mid X_t=x].
\]
Assume $\bar b$ and $\bar a$ are measurable and that $\bar a(t,x)$ is uniformly nondegenerate so that there exists a weak solution to the SDE
\[
dY_t = \bar b(t,Y_t)\,dt + \bar \sigma(t,Y_t)\,dW_t,
\qquad \bar\sigma(t,x)\bar\sigma(t,x)^\top=\bar a(t,x),
\quad Y_0\stackrel{d}{=}X_0.
\]
Then the Markov process $(Y_t)$ \emph{mimics} $(X_t)$ in the sense that for every $t\in[0,1]$,
$Y_t \stackrel{d}{=} X_t$.
\end{theorem}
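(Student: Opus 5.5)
The plan is the standard Fokker--Planck argument. We show that the one-time marginals $\mu_t:=\mathrm{Law}(X_t)$ and $\nu_t:=\mathrm{Law}(Y_t)$ solve the same linear forward Kolmogorov equation, driven by the Markovian coefficients $(\bar b,\bar a)$, with the same initial datum. Uniqueness for that equation then forces $\mu_t=\nu_t$ for every $t$. This is the stochastic analogue of the deterministic fact used above for CFM: the conditional-expectation velocity $\E[v(t,Z_t\mid C)\mid Z_t=z]$ generates the mixture path via the continuity equation \eqref{eq_continuity}. Throughout, we use the standing assumption that the conditional moments are well defined, and we additionally take $b_t$ and $\Sigma_t$ to be bounded. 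This boundedness is an extra assumption needed for the uniqueness step, as in Gy\"ongy.

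\textbf{Step 1 (Fokker--Planck equation for $\mu_t$).} Fix $\varphi\in C_c^\infty(\R^d)$ and apply It\^o's formula to $\varphi(X_t)$:
\[
\varphi(X_t)=\varphi(X_0)+\int_0^t\Big(b_s\cdot\nabla\varphi(X_s)+\tfrac12\,\Sigma_s\Sigma_s^\top:\nabla^2\varphi(X_s)\Big)ds+\int_0^t\nabla\varphi(X_s)^\top\Sigma_s\,dW_s.
\]
Take expectations; the stochastic integral is a true martingale by boundedness, so its expectation vanishes. Inside the $ds$-integral, use the tower property, conditioning on $X_s$. Since $\nabla\varphi(X_s)$ and $\nabla^2\varphi(X_s)$ are $\sigma(X_s)$-measurable, we may replace $b_s$ by $\bar b(s,X_s)$ and $\Sigma_s\Sigma_s^\top$ by $\bar a(s,X_s)$. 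This yields
\[
\int\varphi\,d\mu_t=\int\varphi\,d\mu_0+\int_0^t\!\int L_s\varphi\,d\mu_s\,ds,
\qquad
L_s\varphi=\bar b(s,\cdot)\cdot\nabla\varphi+\tfrac12\,\bar a(s,\cdot):\nabla^2\varphi.
\]
In other words, $\mu$ is a weak solution of $\partial_t\mu=L_t^*\mu$.

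\textbf{Step 2 (same equation for $\nu_t$).} Applying It\^o's formula to the SDE for $Y$ gives exactly the same identity, with $\nu$ in place of $\mu$ and no conditioning required, because the coefficients of $Y$ are already Markovian. Moreover $\nu_0=\mu_0$ by assumption.

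\textbf{Step 3 (uniqueness; the main obstacle).} It remains to show that the weak forward equation with measurable, bounded, uniformly nondegenerate coefficients has at most one probability-measure-valued solution from a given initial law. I would argue by duality. Fix $T$ and $f\in C_c^\infty$, and solve the backward Kolmogorov equation
\[
\partial_s u+L_s u=0 \quad\text{on } [0,T],\qquad u(T,\cdot)=f.
\]
Testing both $\mu$ and $\nu$ against $u$ then gives $\int f\,d\mu_T=\int u(0,\cdot)\,d\mu_0=\int f\,d\nu_T$.

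The difficulty is that the coefficients are merely measurable, so $u$ is not classical. The approach I would try first has three parts:
\begin{itemize}
\item mollify $\bar b$ and $\bar a$, keeping the ellipticity constant and the bounds uniform;
\item solve the smooth backward problems and test against $\mu$ and $\nu$;
\item pass to the limit using Krylov-type estimates, $\E\int_0^T|g(s,X_s)|\,ds\le C\|g\|_{L^{d+1}}$, which hold for both processes thanks to nondegeneracy.
\end{itemize}
These estimates control the error terms $\int\!\!\int (L_s-L_s^\varepsilon)u^\varepsilon\,d\mu_s\,ds$ through $W^{2,d+1}$ bounds on $u^\varepsilon$ that are uniform in $\varepsilon$.

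If this route proves too delicate, the fallback is to state the needed well-posedness as an explicit hypothesis. Concretely, assume that the martingale problem for $L$ is well posed, or invoke a Figalli/Trevisan superposition-based uniqueness for the Fokker--Planck equation under nondegeneracy. Either version of the hypothesis immediately closes the argument and gives $Y_t\stackrel{d}{=}X_t$ for every $t\in[0,1]$.
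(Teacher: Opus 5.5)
The paper gives no proof of this statement. It is quoted from \cite{gyongy1986mimicking} as background, so your argument has to stand on its own.

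Steps 1 and 2 are correct. The gap is Step 3, and it is structural, not merely technical.

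\textbf{Why the duality route fails.} For merely measurable, uniformly elliptic $\bar a$ there are no $\varepsilon$-uniform $L^p$ bounds on $\nabla^2 u^\varepsilon$. Calder\'on--Zygmund/$W^{1,2}_p$ estimates for nondivergence operators need continuity (or VMO) of $\bar a$ in $x$. With measurable coefficients you only get Aleksandrov--Krylov sup bounds and Krylov--Safonov H\"older bounds. Two further problems:
\begin{itemize}
\item Even a uniform $L^{d+1}$ bound would not make $\|(\bar a-\bar a^\varepsilon):\nabla^2u^\varepsilon\|_{L^{d+1}}$ vanish. You would need some $p>d+1$ together with H\"older.
\item Krylov's estimate for $X$ needs $\Sigma_s\Sigma_s^\top\ge\lambda I$ pathwise. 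That is Gy\"ongy's hypothesis, but it is not implied by nondegeneracy of $\bar a$, since a conditional average of degenerate matrices can be nondegenerate.
\end{itemize}

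\textbf{Why no refinement can rescue it.} If the route worked, it would show that any two solutions of the martingale problem for a bounded, measurable, uniformly elliptic $L$, started at the same point, have the same one-time marginals. By Stroock--Varadhan that forces well-posedness, which Nadirashvili's counterexample refutes for $d\ge 3$.

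That counterexample in fact shows the statement as worded is false, if $Y$ is allowed to be an \emph{arbitrary} weak solution. Take two solutions of $dX=a(X)^{1/2}\,dW$ from the same point with different one-time marginals, with $Y$ a Markov selection. Then $\bar b=0$ and $\bar a=a$, every hypothesis holds, yet $\mathrm{Law}(Y_t)\neq\mathrm{Law}(X_t)$ for some $t$.

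Gy\"ongy's theorem is an \emph{existence} result: for suitable versions of $\bar b,\bar\sigma$, \emph{some} weak solution mimics $X$. It is proved by approximating with regularized problems for which mimicking can be verified, then passing to the limit using tightness and Krylov's estimates. It does not use uniqueness for the Fokker--Planck equation.

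\textbf{The fallback needs restating.} Some uniqueness hypothesis, or an existential conclusion, is unavoidable, so your fallback is the right kind of fix. As written, though, it is misstated.
\begin{itemize}
\item Well-posedness of the martingale problem for $L$ cannot be applied to $\mu$ directly, because $X$ does not solve that martingale problem.
\item You first need the superposition principle (Figalli; Trevisan) to lift the Fokker--Planck solution $\mu$ to a martingale solution with marginals $\mu_t$ and initial law $\mu_0$.
\item Well-posedness then identifies that law with the law of $Y$, giving $\mu_t=\nu_t$.
\end{itemize}
Superposition alone yields no uniqueness. Your two options are therefore two halves of one argument, not alternatives.

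Three clean versions are available:
\begin{itemize}
\item assume uniqueness of probability solutions of the Fokker--Planck equation outright;
\item assume $\bar a(t,\cdot)$ is continuous (or VMO) in $x$ uniformly in $t$, in which case Krylov's $W^{1,2}_p$ theory for all $p$ makes your duality argument work;
\item keep Gy\"ongy's hypotheses and prove only existence of a mimicking weak solution by a construction.
\end{itemize}
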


A stochastic process $(X_t)$ is Markov if the distribution of its future depends only on the present state; in the ODE setting, this means that the flow is fully determined, in distribution, by the current state. 
For example, the linearly interpolating path
$X_t=\alpha_t X_1+\sigma_t X_0,$
with $(X_0,X_1)$ drawn from a coupling of the endpoint marginals, is not Markov in $X_t$ since its velocity depends on the hidden endpoints $X_0$ and $X_1$. 
FM resolves this by projecting the latent dynamics onto a  velocity field defined by the conditional expectation
$v(t, x)=\mathbb{E}[\dot X_t \mid X_t=x].$
The resulting ODE $\dot X_t=v(X_t,t)$ defines a Markov process that exactly matches the prescribed one-time marginal distributions and constitutes the optimal Markov approximation of the original dynamics in the $L^2$ regression sense. 
However, matching marginals alone does not uniquely determine the full path law unless additional structure is imposed; consequently, the Markovian projection generally does not preserve the original non-Markov temporal correlations or multi-time joint distributions. This deterministic projection can be viewed as the zero-noise analogue of Gy\"ongy’s theorem.

\begin{corollary}[Deterministic Markovian projection (zero-noise analogue), informal version]
Let $(X_t)_{t \in [0,1]}$ be an absolutely continuous process with $\dot X_t$ integrable and define
\[
v(t,x):=\E[\dot X_t\mid X_t=x].
\]
Assume further that the family of marginals $\mu_t:=\mathcal L(X_t)$ solves the continuity equation
$\partial_t\mu_t+\nabla\cdot(\mu_t v)=0$ in the distributional sense, and that this equation admits a unique solution.
If the ODE $\dot Y_t=v(t,Y_t)$ admits a (weak) solution on $[0,1]$ with $Y_0\stackrel{d}{=}X_0$, then
$Y_t\stackrel{d}{=}X_t$ for all $t \in [0,1]$.
\end{corollary}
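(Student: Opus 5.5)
The plan is to show that the one-time marginals of $Y$ and of $X$ solve the \emph{same} continuity equation with the \emph{same} initial datum, and then invoke the assumed uniqueness. Set $\mu_t:=\mathcal L(X_t)$ and $\nu_t:=\mathcal L(Y_t)$. By hypothesis, $\mu_t$ is a distributional solution of $\partial_t\mu_t+\nabla\cdot(\mu_t v)=0$ with $\mu_0=\mathcal L(X_0)$.

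For completeness I would first note why this hypothesis is natural. Take a test function $\varphi\in C_c^\infty(\R^d)$ and differentiate $t\mapsto\E[\varphi(X_t)]$. Since $X$ is absolutely continuous with integrable $\dot X_t$, dominated convergence gives
\[
\frac{d}{dt}\E[\varphi(X_t)]=\E\big[\nabla\varphi(X_t)\cdot\dot X_t\big].
\]
The tower property applied to $\sigma(X_t)$ then turns this into
\[
\E\big[\nabla\varphi(X_t)\cdot\E[\dot X_t\mid X_t]\big]=\int\nabla\varphi\cdot v(t,\cdot)\,d\mu_t,
\]
which is exactly the weak form of the continuity equation. This is the same conditional-expectation mechanism that produced $\hat v^*$ in Proposition~\ref{prop:coupled_emp_affine}.

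The main step is the analogous computation for $Y$. Since $Y$ is a (weak) solution of $\dot Y_t=v(t,Y_t)$, for each test function $\varphi$ we have
\[
\varphi(Y_t)-\varphi(Y_0)=\int_0^t\nabla\varphi(Y_s)\cdot v(s,Y_s)\,ds.
\]
Taking expectations gives
\[
\int\varphi\,d\nu_t-\int\varphi\,d\nu_0=\int_0^t\!\int\nabla\varphi\cdot v(s,\cdot)\,d\nu_s\,ds .
\]
So $\nu$ is a distributional solution of the same continuity equation, and its initial condition is $\nu_0=\mu_0$ because $Y_0\stackrel{d}{=}X_0$. The assumed uniqueness of solutions then yields $\nu_t=\mu_t$ for all $t\in[0,1]$, that is, $Y_t\stackrel{d}{=}X_t$.

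I expect the main obstacle to be the integrability needed to interchange expectation and time integration for $Y$, i.e.\ showing that $\int_0^1\!\int|v(s,\cdot)|\,d\nu_s\,ds<\infty$ so that $\nu$ qualifies as a distributional solution. For $\mu$ this is automatic, since Jensen gives $\int|v|\,d\mu_t\le\E|\dot X_t|$. For $\nu$ it is not known a priori, precisely because we do not yet know $\nu=\mu$. I would first try to absorb this into the informal standing assumption that the ODE solution has $v(s,Y_s)$ integrable. If the uniqueness class is the usual one (for example, narrowly continuous curves with $\int_0^1\!\int|v|\,d\nu_t\,dt<\infty$, as in the Ambrosio superposition framework), I would verify that $\nu$ lies in that class, using boundedness of $\nabla\varphi$ and a localization argument with a stopping time at the exit from large balls. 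I would also check narrow continuity of $t\mapsto\nu_t$, which follows from the continuity of the paths of $Y$. Matching $\nu$ to whatever class the uniqueness hypothesis refers to is the delicate point; the rest is formal.
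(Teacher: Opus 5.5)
Your argument is essentially the intended one: the paper gives no separate proof of this informal corollary, and its hypotheses are set up so that $\mathcal L(X_t)$ and $\mathcal L(Y_t)$ solve the same continuity equation from the same initial law, after which uniqueness forces them to agree. The integrability issue you flag is genuine and should be read into the informal notion of (weak) solution and the uniqueness class, as in your first option; your fallback of localizing with exit times from large balls would not supply it in general, because it controls where $Y_s$ is but not the size of $v(s,Y_s)$ when $v$ is locally unbounded.
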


While Markovian projection for SDEs has been extensively studied—most notably in mathematical finance (e.g., \cite{piterbarg2006markovian}) following the work of  \cite{dupire1994pricing}—the analogous problem for deterministic ODE dynamics (in the sense of constructing marginal-matching Markovian flows) appears to receive far less attention in the literature.

Markovian projection alone specifies a Markov process that matches prescribed one-time marginal distributions, but it leaves the full path law underdetermined, as different Markov processes may share the same marginals while exhibiting distinct temporal correlations. Schr\"odinger bridges (SBs) \cite{leonard2013survey}  resolve this non-uniqueness by selecting, among all path measures with the given marginals, the unique process minimizing relative entropy with respect to a chosen reference process. 
From this perspective, FM implements a regression-based Markovian projection, while the SB adds a variational principle that fixes a distinguished Markov process consistent with both the marginals and optimal transport in path space.

\medskip
\noindent{\bf Connecting Back to Our Proposed ODE-Based Forecaster.}
Even if the underlying dynamics generating the data are non-Markovian, the latent-variable CFM framework allows us to construct a Markov flow whose one-time marginals match the data by projecting pairwise or multi-time temporal couplings onto a time-inhomogeneous velocity field. In this paper, we focus on understanding FM for sequential data, and we restrict our attention to pairwise couplings; see Fig. \ref{fig:bb} for the example of target probability path we focus on.  The key insight gained is summarized as follows, which can motivate various extensions (e.g., to higher-order temporal couplings) which we leave for future work.

\begin{tcolorbox}[
  colback=sand,
  colframe=sandborder,
  boxrule=0.3pt,
  left=8pt,
  right=8pt,
  top=3pt,
  bottom=3pt
]
\centering
\emph{
Viewed through the latent-variable FM formulation, empirical FM, once equipped with a choice of latent variable and conditional probability path which reflects an inductive bias, induces a rich family of nonparametric, data-adaptive ODE and SDE samplers for probabilistic forecasting.}  
\end{tcolorbox} 

Once a conditional probability path $p_t(\cdot\mid C)$ is specified, the associated empirical  velocity $\hat v^*$ defines a deterministic ODE, while stochastic perturbations or Schr\"odinger bridge relaxations naturally lead to diffusion-based (SDE) samplers.
When the latent variable $C$ encodes temporal information (e.g., pairs or windows of observations), these constructions yield data-driven dynamical models for forecasting that do not require an explicit parametric specification of the underlying dynamics.
On the theoretical side, characterizing the model expressiveness and approximation properties induced by different choices of latent variables and probability paths under various data generating assumptions is an interesting direction for future work.

\subsection{More on FM for Sequential Data}
\label{app:fm_time_series}

When the underlying dynamics are stationary and ergodic, $\hat p_1$ provides a Monte Carlo approximation of the population one-step transition law; otherwise, it is interpreted as a purely empirical, data-driven measure. Although transitions within a single trajectory are temporally dependent, ergodicity or mixing ensures convergence of their empirical distribution along long trajectories (see below for details), while transitions across trajectories are independent by construction. We therefore treat $\mathcal D_M$ as a valid Monte Carlo--type approximation for minimizing the objective.

In this subsection, we justify why FM remains statistically valid when trained on a
\emph{single observed trajectory} from a stationary time series.
The key point is that the empirical FM loss is a \emph{time average}
of a stationary integrable observable, and hence converges to its population expectation
by Birkhoff's ergodic theorem, even though the samples are not independent.

Let $\Omega:=(\mathbb R^d)^{\mathbb Z}$ be the canonical path space equipped with the product Borel
$\sigma$-field $\mathcal F$ and a probability measure $\mathbb P$ under which the coordinate process
$X_\tau(\omega):=\omega_\tau$ is defined.
Let $\varphi:\Omega\to\Omega$ be the left shift $(\varphi\omega)_\tau=\omega_{\tau+1}$ and denote by
$\mathcal I:=\{A\in\mathcal F:\varphi^{-1}(A)=A\}$ the invariant $\sigma$-field.

\begin{definition}[Stationarity and ergodicity]
The process $(X_\tau)_{\tau\in\mathbb Z}$ is \emph{(strictly) stationary} if $\mathbb P$ is
$\varphi$-invariant, i.e.\ $\mathbb P\circ\varphi^{-1}=\mathbb P$.
It is \emph{ergodic} if $\mathcal I$ is $\mathbb P$-trivial, i.e.\ $\mathbb P(A)\in\{0,1\}$ for all $A\in\mathcal I$.
\end{definition}

\paragraph{Stationary and ergodic transitions.}
Define the pair process $Y_\tau:=(X_\tau,X_{\tau+1})\in\mathbb R^{2d}$.
If $(X_\tau)$ is stationary and ergodic, then $(Y_\tau)$ is also stationary and ergodic.

\paragraph{FM objective along a trajectory.}
Let $v_\theta(t,z)$ be a parametric velocity field.
For each observed pair $Y_\tau=(X_\tau,X_{\tau+1})$, define the (noise-injected) interpolation
\[
Z^{(\tau)}_t := (1-t)X_\tau + t X_{\tau+1} + \sigma_t \xi,
\qquad \xi\sim\mathcal N(0,I_d),
\]
where $\sigma_t\ge 0$ is differentiable and $\xi$ is independent of $(X_\tau)$.
Then $\dot Z^{(\tau)}_t = (X_{\tau+1}-X_\tau)+\dot\sigma_t\,\xi$.

Define the per-transition \emph{integrated} FM loss as the measurable function $\ell_\theta:\mathbb R^{2d}\to\mathbb R$,
\[
\ell_\theta(x_0,x_1)
:=
\mathbb E_{t,\xi}\!\left[
\big\|
v_\theta(t,(1-t)x_0+t x_1+\sigma_t\xi)
-
(x_1-x_0)-\dot\sigma_t\,\xi
\big\|^2
\right],
\]
where the expectation is over $t\sim \mathcal U[0,1]$ and $\xi\sim\mathcal N(0,I_d)$.
The empirical objective from a single trajectory is
\[
\mathcal L_T(\theta):=\frac1T\sum_{\tau=0}^{T-1}\ell_\theta(Y_\tau).
\]

\begin{theorem}[Ergodic consistency of FM (fixed $\theta$)]
\label{thm:ergodic_fm}
Fix $\theta\in\Theta$ and assume $\ell_\theta$ is measurable with $\mathbb E[|\ell_\theta(Y_0)|]<\infty$.
If $(X_\tau)$ is stationary and ergodic, then
\[
\mathcal L_T(\theta)=\frac1T\sum_{\tau=0}^{T-1}\ell_\theta(Y_\tau)
\;\xrightarrow[T\to\infty]{a.s.}\;
\mathcal L(\theta):=\mathbb E[\ell_\theta(Y_0)].
\]
\end{theorem}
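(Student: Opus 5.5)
The plan is to reduce the statement to Birkhoff's pointwise ergodic theorem applied to the pair process $(Y_\tau)$ under the shift $\varphi$. First I will put $Y_\tau$ on the canonical space. Define $F:\Omega\to\mathbb R^{2d}$ by $F(\omega):=(\omega_0,\omega_1)$, so that $Y_\tau=F\circ\varphi^\tau$. Then set $f:=\ell_\theta\circ F:\Omega\to\mathbb R$. This map is measurable, because $F$ is continuous for the product topology and $\ell_\theta$ is measurable by assumption. Its integrability, $\E|f|=\E|\ell_\theta(Y_0)|<\infty$, is exactly the stated moment condition. It follows that
\[
\mathcal L_T(\theta)=\frac1T\sum_{\tau=0}^{T-1} f\circ\varphi^\tau .
\]
The empirical loss is therefore a Birkhoff average of a single $L^1$ observable along the shift orbit.

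Next I will invoke Birkhoff's ergodic theorem (e.g.\ \cite{durrett2019probability}) for the measure-preserving system $(\Omega,\mathcal F,\mathbb P,\varphi)$. Stationarity gives $\mathbb P\circ\varphi^{-1}=\mathbb P$. Birkhoff's theorem then yields
\[
\frac1T\sum_{\tau<T} f\circ\varphi^\tau\to\E[f\mid\mathcal I] \quad \text{a.s. and in } L^1 .
\]
By ergodicity $\mathcal I$ is $\mathbb P$-trivial, so $\E[f\mid\mathcal I]=\E f=\E[\ell_\theta(Y_0)]=\mathcal L(\theta)$ almost surely, which is the claim.

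The remaining points are bookkeeping. The first is that the expectation over $(t,\xi)$ inside $\ell_\theta$ is a deterministic functional of $(x_0,x_1)$. This holds because $\xi$ and $t$ are integrated out and are independent of the path, so no extra randomness enters the average. Measurability of $\ell_\theta$ in $(x_0,x_1)$ is assumed, and it would follow from Tonelli if $v_\theta$ were jointly measurable. The second is the index range: the process is indexed by $\mathbb Z$ but averaged over $\tau\ge0$, and the one-sided Birkhoff theorem handles this directly. I do not need the separately stated fact that $(Y_\tau)$ is itself stationary and ergodic, since working with $f=\ell_\theta\circ F$ on the original system avoids it. The same fact could be used instead by applying Birkhoff on $(\mathbb R^{2d})^{\mathbb Z}$.

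The only step needing care is the identification $\E[f\mid\mathcal I]=\E f$ from triviality of $\mathcal I$, which is standard. I therefore expect no real obstacle; the statement is essentially a direct corollary of Birkhoff's theorem.
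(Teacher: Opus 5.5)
Your proposal is correct and matches the paper's proof essentially step for step. The paper sets $H(\omega)=\ell_\theta(X_0(\omega),X_1(\omega))$ (your $f=\ell_\theta\circ F$), writes $\mathcal L_T(\theta)$ as the Birkhoff average $\frac1T\sum_{\tau<T}H(\varphi^\tau\omega)$, applies Birkhoff's theorem to obtain convergence to $\mathbb E[H\mid\mathcal I]$, and uses triviality of $\mathcal I$ to identify this limit with $\mathbb E[\ell_\theta(Y_0)]$.
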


\begin{proof}
Define the integrable observable $H(\omega):=\ell_\theta(X_0(\omega),X_1(\omega))$.
Then $\ell_\theta(Y_\tau(\omega)) = H(\varphi^\tau\omega)$ and hence
\[
\mathcal L_T(\theta)=\frac1T\sum_{\tau=0}^{T-1} H(\varphi^\tau\omega).
\]
By Birkhoff's ergodic theorem for measure-preserving transformations
(e.g., applying\ Thm.~6.2.1 in \cite{durrett2019probability}),
\[
\frac1T\sum_{\tau=0}^{T-1} H(\varphi^\tau\omega)\to \mathbb E[H\mid\mathcal I]
\quad\text{a.s.}
\]
Ergodicity implies $\mathcal I$ is trivial, hence $\mathbb E[H\mid\mathcal I]=\mathbb E[H]$ a.s.
Therefore the limit equals $\mathbb E[\ell_\theta(Y_0)]$.
\end{proof}

Theorem~\ref{thm:ergodic_fm} shows that FM training on a single stationary trajectory is statistically well-defined at the population level, despite temporal dependence.

\section{Detailed Algorithm and Further Interpretations for the Proposed Training-Free Sampler}
\label{app:algorithm}

\begin{figure*}[!t]
    \centering
    \includegraphics[width=0.7\textwidth]{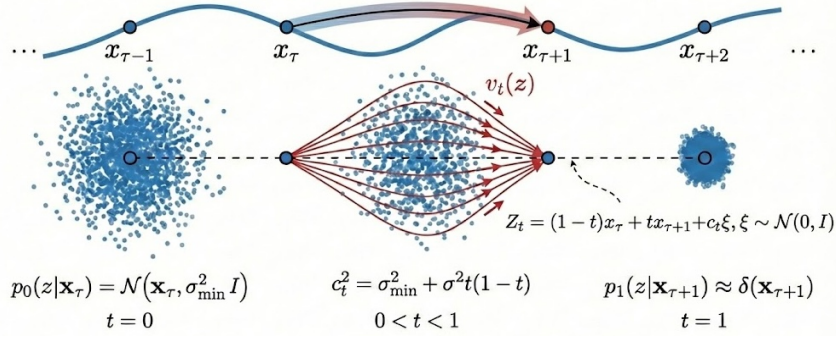}
\caption{{\bf FM with the Brownian-Bridge-Like Target  for Sequence Prediction.} This figure illustrates how FM with the proposed target probability path in \cite{lim2024elucidating} induces KDE-to-KDE transport, allowing us to obtain the distribution for the  state in the next time point  $x_{\tau+1}$, given a distribution centered around the current point $x_\tau$.  {\bf Top Panel:} Sequence data points ($\dots, x_\tau, x_{\tau+1}, \dots$), indexed by $\tau$, on a curve. The transition $x_\tau \to x_{\tau+1}$ is highlighted. {\bf Bottom Panels:} The target process over time $t \in [0, 1]$.
We start with a Gaussian distribution $p_0(z|x_\tau) = \mathcal{N}(x_\tau, \sigma_{\min}^2 I)$ centered at $x_\tau$. The samples are then transported  (blue dots) along trajectories guided by the learned vector field $v_t(z)$ (red arrows). The position is $Z_t = (1-t)x_\tau + t x_{\tau+1} + c_t \xi$, with noise variance $c_t^2 = \sigma_{\min}^2 + \sigma^2 t(1-t)$ forming a ``bridge'' between start and end points. The flow ends at the target distribution $p_1(z|x_{\tau+1}) = \mathcal{N}(x_{\tau+1}, \sigma_{min}^2 I) \approx \delta(x_{\tau+1})$.
 }
    \label{fig:bb}
\end{figure*}

\subsection{Detailed Algorithm of FreeFM}

To generate new samples, we numerically integrate the closed-form ODE model \eqref{eq:CFM-memory-ODE} on a time-grid $0=t_0<\dots <t_L=1$.
Let $Z_{\ell}$ denote the hidden state of the ODE at time $t_\ell$.
Algorithm \ref{alg:sampler}  provides a detailed algorithm to describe the proposed training-free sampler for probabilistic forecasting. We mention the use of explicit forward Euler scheme there, but in principle any ODE integration method could be used.

\begin{algorithm}[!h]
\caption{Training-Free ODE Sampler for One-Step Forecasting}
\label{alg:sampler}
\begin{algorithmic}[1]
\REQUIRE Current state $x_\tau$, memory bank $\{X^{(j)}\}_{j=1}^M$, parameters $(\sigma_{\min},\sigma)$, time grid $\{t_\ell\}_{\ell=0}^L$
\STATE Sample $Z_0 \sim \mathcal{N}(x_\tau, \sigma_{\min}^2 I_d)$
\FOR{$\ell = 0$ to $L-1$}
    \STATE $t \gets t_\ell$, $\Delta_\ell \gets t_{\ell+1}-t_\ell$
    \STATE $c_t^2 \gets \sigma_{\min}^2 + \sigma^2 t(1-t)$
    \STATE $G_t \gets \frac{\sigma^2(1-2t)}{2c_t^2} I_d$
    \FOR{$j = 1$ to $M$}
        \STATE $m \gets (1-t)X^{(j)}_1 + t X^{(j)}_2$
        \STATE $\dot{m} \gets X^{(j)}_2 - X^{(j)}_1$
        \STATE $B_j \gets \dot{m} - G_t m$ 
        \STATE $s_j \gets -\frac{\|Z_\ell - m\|^2}{2c_t^2}$
    \ENDFOR
    \STATE $\alpha \gets \text{softmax}(s)$   
    \STATE $h \gets \sum_{j=1}^M \alpha_j B_j$
    \STATE $v \gets G_t Z_\ell + h$ 
    \STATE {\it Choose integration method and step sizes $\Delta_\ell$:}
    \IF{using explicit forward Euler}
        \STATE $Z_{\ell+1} \gets Z_\ell + \Delta_\ell \cdot v$
    \ENDIF
\ENDFOR
\STATE  $x_{\tau+1} \gets Z_L$
\end{algorithmic}
\end{algorithm}

\subsection{Other Interpretations of the Training-Free Model}
\label{subsec:ts_unified_interpretation}

\noindent {\bf Closed-form, data-adaptive model.}
For any admissible affine conditional path, the empirical CFM
minimizer $\hat{v}^*(t,z)$ is available in closed form and depends on the data only
through the weights $w_j(t,z)$ and per-transition velocity labels
$v(t,z\mid X^{(j)})$ (Theorem~\ref{thm:ts-emp-affine}). In particular, no
parametric training or optimization is required: the sampler drift is computed
directly from the memory bank.

\noindent {\bf Kernel conditional expectation and attention.}
In the Gaussian-bridge specialization, the optimal drift decomposes as
$\hat{v}^*(t,z) =
G_t z + \sum_{j=1}^M \alpha_j(t,z)\, y_j(t),$ where $y_j(t) = \dot m_t^{(j)} - G_t m_t^{(j)}$.  
For each fixed $t$, the nonlinear term
$z \mapsto \sum_j \alpha_j(t,z)\, y_j(t)$
coincides with a Nadaraya--Watson estimator \cite{nadaraya1964estimating} of the conditional expectation of the  velocity label given $Z_t=z$, using a Gaussian kernel over the bridge cloud
$\{m_t(X^{(j)})\}_{j=1}^M$.
Equivalently, the weights $\alpha_j(t,z) = \mathrm{softmax}_j\!\left(
    -\tfrac{1}{2c_t^2}\|z - m_t(X^{(j)})\|^2
\right)$ define a soft attention mechanism over a
memory bank of transitions $\{X^{(j)}\}_{j=1}^M$, so that the empirical velocity
$\hat v_t(z)
=
\sum_{j=1}^M \alpha_j(t,z)\, u_t(X^{(j)})$
is a similarity-weighted average of historical instantaneous velocities.
This yields a data-driven sampler with Gaussian-kernel fading memory.

\noindent {\bf Connection with EDM.} 
Although reminiscent of continuous-time RNNs \cite{hopfield1984neurons}
in their state evolution, the mechanism of the ODE sampler more closely resembles kernel-based
nonparametric regression or memory-based prediction. It can be viewed as a continuous-time analogue of Empirical Dynamic Modeling (EDM) \cite{sugihara1990nonlinear, majeedi2025lets}, which relies on Takens' embedding \cite{huke2006embedding} and nearest-neighbors geometry. While EDM obtains a locally weighted estimator of the discrete-time map via nonparametric regression, our  formulation  yields, in closed form, the corresponding locally weighted estimator of a  velocity field whose flow reproduces the transitions.  

Our probabilistic approach goes beyond EDM (which is fundamentally deterministic and geometric) and admits principled  extensions that allow for uncertainty quantification. Moreover, the closed-form CFM velocity $\hat v^*(t,z)$ can also be used as the drift coefficient in an SDE to generate estimates for the next state $x_{\tau+1}$ conditional on $x_\tau$: 
$$dZ_t = \hat{v}^\ast(t, Z_t) dt + \Sigma(t) dW_t, \quad Z_0 \sim \mathcal N(x_\tau, \sigma_{\text{min}}^2 I_d),$$
where $\Sigma(t)$ is the diffusion coefficient and $W_t$ is a Wiener process. The diffusion coefficient $\Sigma(t)$ introduces stochasticity, enabling the model to sample the full conditional distribution  and improving sample diversity, while the drift $\hat{v}^\ast$ ensures the flow follows the mean path.

\noindent {\bf Operator-theoretic and diffusion map connection.}
The ODE sampler has a unique solution for all $t \in [0,1]$ and admits a Duhamel representation that separates global linear transport
from a data-driven forcing term. After a co-moving change of variables, the forcing at each
sampler time is exactly given by a Nystr\"om extension of a row-stochastic diffusion-map
operator \cite{coifman2006diffusion} applied to intrinsic velocity labels; see
Appendix~\ref{app:duhamel_dm} for more details.

\section{Proof of Theoretical Results}
\label{app:proof}

\subsection{Proof of Theorem \ref{thm:ts-emp-affine}}

Theorem~\ref{thm:ts-emp-affine} is a direct specialization of the general
latent-variable CFM minimizer (see App. \ref{app:background}) to the transition dataset
$\mathcal D_M=\{X^{(j)}\}_{j=1}^M$, with latent index $C\in[M] := \{1, \dots, M\}$ and uniform prior
$\pi(j)=1/M$. For completeness, we provide the proof specialized to this setting.

\begin{proof}
Let $C\in[M]$ be a discrete latent variable with prior $\pi(j)=1/M$ and let $X=X^{(C)}$.
By definition,
\[
\hat{\mathcal{L}}_{\mathrm{CFM}}[v']
=
\E_{t\sim\mathcal U[0,1],\,C\sim\pi,\,Z_t\sim p_t(\cdot\mid X^{(C)})}
\|v'(t,Z_t)-v(t,Z_t\mid X^{(C)})\|^2 .
\]

Fix $t\in[0,1]$. Since the objective is quadratic in $v'(t,\cdot)$, the unique minimizer
over measurable functions $z\mapsto v'(t,z)$ is given by the $L^2$ projection:
\[
\hat v^*(t,z)
=
\E\!\left[v(t,z\mid X^{(C)})\mid Z_t=z\right].
\]

Because $C$ is discrete and $X^{(C)}=X^{(j)}$ when $C=j$, this conditional expectation
can be written~as
\[
\hat v^*(t,z)
=
\sum_{j=1}^M \mathbb{P}(C=j\mid Z_t=z)\, v(t,z\mid X^{(j)}).
\]

By Bayes’ rule,
\[
\mathbb{P}(C=j\mid Z_t=z)
=
\frac{\pi(j)\,p_t(z\mid X^{(j)})}{\sum_{k=1}^M \pi(k)\,p_t(z\mid X^{(k)})}
=
\frac{p_t(z\mid X^{(j)})}{\sum_{k=1}^M p_t(z\mid X^{(k)})}
=: w_j(t,z),
\]
where we used $\pi(j)=1/M$.

Substituting the affine conditional velocity
$v(t,z\mid X^{(j)})=a_t(X^{(j)})z+b_t(X^{(j)})$ yields
\[
\hat v^*(t,z)
=
\sum_{j=1}^M
w_j(t,z)\,\big(a_t(X^{(j)})\,z+b_t(X^{(j)})\big),
\]
which proves the claim. Uniqueness follows from strict convexity of the quadratic loss.
\end{proof}

\subsection{Proof of Proposition \ref{prop_grads}} \label{app_lipschitzproof}

For an open set $\Omega\subset\R^d$, we denote by $C^k(\Omega)$ the space of functions
$f:\Omega\to\R^m$ whose partial derivatives up to order $k$ exist and are continuous on
$\Omega$.
We write $C^\infty(\Omega)=\bigcap_{k\ge1} C^k(\Omega)$ for the space of smooth functions.

\paragraph{Spatial Lipschitz constant.}
For each fixed $t\in[0,1]$, the (spatial) Lipschitz constant of a map
$f(t,\cdot):\R^d\to\R^d$ is defined in the standard metric sense as
\[
\operatorname{Lip}_z(f)(t) :=\sup_{z\neq z'}\frac{\|f(t,z)-f(t,z')\|}{\|z-z'\|}\in[0,\infty].
\]
If, in addition, $f(t,\cdot)\in C^1(\R^d)$ and
$\sup_{z\in\R^d}\|\nabla_z f(t,z)\|<\infty$, where $\nabla_z f$ denotes the Jacobian
with respect to $z$ and $\|\cdot\|$ is the operator norm induced by the Euclidean norm,
then by the mean value theorem,
\[
\operatorname{Lip}_z(f)(t) 
\le
\sup_{z\in\R^d}\|\nabla_z f(t,z)\|.
\]
On compact, convex domains $\Omega\subset\R^d$ (instead of $\R^d$ as above), this inequality becomes an equality
(see Proposition~14 in \cite{tsimpos2025optimal}).

The (spatial) Lipschitz constant of the velocity field plays an important role in both the stability and
approximation theory of ODE-based models.
Prior work
\cite{benton2023error,marzouk2024distribution,tsimpos2025optimal} shows that distributional error
in such models is controlled by the spatial Lipschitz constant of the underlying velocity field,
reflecting the sensitivity of transported measures to perturbations in the dynamics.
These results identify the spatial Lipschitz constant as a key quantity linking dynamical stability
and robustness of transported distributions. Moreover, large spatial Lipschitz constants are directly associated with numerical stiffness of the
induced ODE, necessitating smaller time steps and more restrictive stability conditions for
time-integration schemes.

The above considerations, together with issues of numerical stiffness, motivate our study of the spatial Lipschitz constant of $\hat{v}^*$. To prove Proposition~\ref{prop_grads}, we first establish the following auxiliary result.

\begin{lemma} \label{lem:grad_alpha}

Fix $t\in[0,1]$ and let $c_t>0$. Define
\[
\pi_j(t,z)\;:=\;\exp\!\Big(-\frac{\|z-m_t^{(j)}\|^2}{2c_t^2}\Big),
\qquad
\alpha_j(t,z)\;:=\;\frac{\pi_j(t,z)}{\sum_{k=1}^M \pi_k(t,z)},
\]
and 
$\bar m_t(z)\;:=\;\sum_{k=1}^M \alpha_k(t,z)\,m_t^{(k)}$.
Then for each $j\in\{1,\dots,M\}$ and all $z\in\mathbb R^d$,
\begin{equation}\label{eq:grad_alpha_exact}
\nabla_z \alpha_j(t,z)
= \frac{1}{c_t^2}\,\alpha_j(t,z)\,\Big(m_t^{(j)}-\bar m_t(z)\Big).
\end{equation}
In particular,
\begin{equation}\label{eq:grad_alpha_bound_simple}
\big\|\nabla_z \alpha_j(t,z)\big\| \leq
\frac{1}{c_t^2}\,\alpha_j(t,z)\,\big\|m_t^{(j)}-\bar m_t(z)\big\|.
\end{equation}
\end{lemma}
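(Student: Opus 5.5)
The plan is to differentiate the softmax directly via the log-derivative trick. Since $\alpha_j=\pi_j/S$ with $S(t,z):=\sum_{k=1}^M\pi_k(t,z)>0$, and $\pi_j>0$ for every $z$, we may write $\log\alpha_j=\log\pi_j-\log S$. Then
\[
\nabla_z\alpha_j=\alpha_j\big(\nabla_z\log\pi_j-\nabla_z\log S\big).
\]

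The first step is to compute $\nabla_z\log\pi_j$. Differentiating the quadratic exponent gives
\[
\nabla_z\log\pi_j(t,z)=-\frac{z-m_t^{(j)}}{c_t^2}.
\]

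The second step is to compute $\nabla_z\log S$. By linearity,
\[
\nabla_z\log S=\frac{1}{S}\sum_k\pi_k\nabla_z\log\pi_k=\sum_k\alpha_k\Big(-\frac{z-m_t^{(k)}}{c_t^2}\Big).
\]
Using $\sum_k\alpha_k=1$, this becomes $-\frac{1}{c_t^2}\big(z-\bar m_t(z)\big)$.

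The third step is to subtract the two expressions. The $z$ terms cancel, which leaves
\[
\nabla_z\log\alpha_j=\frac{1}{c_t^2}\big(m_t^{(j)}-\bar m_t(z)\big).
\]
Multiplying by $\alpha_j$ yields \eqref{eq:grad_alpha_exact}.

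The bound \eqref{eq:grad_alpha_bound_simple} then follows immediately by taking Euclidean norms, using $\alpha_j\ge0$ and $c_t^2>0$. In fact it holds with equality, and the stated inequality is weaker.

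No step is a genuine obstacle. The only points needing care are the cancellation of $z$, which relies on the weights summing to one, and justifying the logarithm, which holds because $\pi_j>0$ everywhere. Smoothness in $z$ is clear since each $\pi_k$ is $C^\infty$ and $S$ is bounded away from zero locally.
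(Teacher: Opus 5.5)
Your proposal is correct and follows essentially the paper's route: the paper applies the quotient rule and rewrites it as $\alpha_j\bigl(\nabla\log\pi_j-\sum_k\alpha_k\nabla\log\pi_k\bigr)$, which is exactly your $\alpha_j(\nabla\log\pi_j-\nabla\log S)$, and then substitutes $\nabla\log\pi_j=-(z-m_t^{(j)})/c_t^2$ and uses $\sum_k\alpha_k=1$ to cancel the $z$ terms. Your remark that the norm bound actually holds with equality (since $\alpha_j>0$) is also correct.
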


\begin{proof}
Fix $t\in[0,1]$ and suppress the explicit dependence on $t$ in the notation for notational simplification:
write $c:=c_t$, $m_j:=m_t^{(j)}$, $\pi_j(z):=\pi_j(t,z)$, and $\alpha_j(z):=\alpha_j(t,z)$.

Let $\Pi(z):=\sum_{k=1}^M \pi_k(z),
\ \text{so that} \ 
\alpha_j(z)=\frac{\pi_j(z)}{\Pi(z)}.$
By the quotient rule,
\[
\nabla \alpha_j(z)
=
\frac{\nabla \pi_j(z)}{\Pi(z)}-\frac{\pi_j(z)}{\Pi(z)^2}\,\nabla \Pi(z)
=
\alpha_j(z)\bigg(\nabla \log \pi_j(z)-\sum_{k=1}^M \alpha_k(z)\,\nabla \log \pi_k(z)\bigg).
\]
Since $\log \pi_j(z)=-\frac{1}{2c^2}\|z-m_j\|^2$ and thus $\nabla \log \pi_j(z)=-\frac{1}{c^2}(z-m_j),$
we obtain
\begin{align*}
\nabla \alpha_j(z)
&=
\alpha_j(z)\left(-\frac{1}{c^2}(z-m_j)+\sum_{k=1}^M \alpha_k(z)\frac{1}{c^2}(z-m_k)\right) \\
&= \frac{\alpha_j(z)}{c^2}\left(m_j-\sum_{k=1}^M \alpha_k(z)m_k\right).
\end{align*}
Reinstating $t$ and recalling $\bar m_t(z)=\sum_k \alpha_k(t,z)m_t^{(k)}$ yields
\eqref{eq:grad_alpha_exact}. The bound \eqref{eq:grad_alpha_bound_simple} follows by taking norms.
\end{proof}

The following lemma guarantees that $h(t,\cdot;\mathcal D_M)$ is $C^1(\R^d)$, hence its $z$-Lipschitz constant
is well-defined.

\begin{lemma}[Smoothness in $z$]\label{lem:smooth_h_v}
Fix $t\in[0,1]$ and a dataset $\mathcal D_M$.
Assume $\sigma_{\min}>0$ and 
$c_t^2:=\sigma_{\min}^2+\sigma^2 t(1-t)>0$. Let
\[
\alpha_j(t,z)
=
\frac{
\exp\!\left(-\frac{\|z-m_t^{(j)}\|^2}{2 c_t^2}\right)
}{
\sum_{k=1}^M 
\exp\!\left(-\frac{\|z-m_t^{(k)}\|^2}{2 c_t^2}\right)
},
\qquad
h(t,z;\mathcal D_M)=\sum_{j=1}^M \alpha_j(t,z)\,y_j(t),
\]
where $y_j(t)=\dot m_t^{(j)}-G_t m_t^{(j)}$ and $G_t=g(t)I_d$ is independent of $z$.
Then for each fixed $t$, the maps $z\mapsto \alpha_j(t,z)$, $z\mapsto h(t,z;\mathcal D_M)$, and
$\hat v^*(t,z)=G_t z + h(t,z;\mathcal D_M)$ is $C^\infty(\R^d)$ in $z$.
\end{lemma}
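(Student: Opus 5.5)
The plan is to show smoothness of each building block in $z$ and then use closure of $C^\infty(\R^d)$ under sums, products, composition and division by nonvanishing smooth functions. Fix $t\in[0,1]$; everything else depends only on $t$ and the data, so $m_t^{(j)}$, $\dot m_t^{(j)}$, $G_t=g(t)I_d$ and $y_j(t)=\dot m_t^{(j)}-G_t m_t^{(j)}$ are constant vectors/matrices in $\R^d$ or $\R^{d\times d}$. We first check that these constants are finite: since $\sigma_{\min}>0$ we have $c_t^2\ge\sigma_{\min}^2>0$, so $g(t)=\sigma^2(1-2t)/(2c_t^2)$ is a well-defined real number, and hence so is each $y_j(t)$.

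For each $j$, the unnormalized weight $\pi_j(t,z)=\exp(-\|z-m_t^{(j)}\|^2/(2c_t^2))$ is the composition of the polynomial $z\mapsto -\|z-m_t^{(j)}\|^2/(2c_t^2)$, which is $C^\infty$ because $c_t^2>0$ is a fixed positive constant, with the entire function $\exp$. It is therefore $C^\infty(\R^d)$ and strictly positive. The denominator $\Pi(z)=\sum_{k=1}^M\pi_k(t,z)$ is then smooth and satisfies $\Pi(z)>0$ for every $z$, since it is a finite sum of strictly positive terms and $M\ge1$.

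Division by a nowhere-vanishing smooth function preserves smoothness, because $1/\Pi=\phi\circ\Pi$ with $\phi(s)=1/s$ smooth on $(0,\infty)$. Hence each $\alpha_j(t,\cdot)=\pi_j/\Pi$ is $C^\infty$. Lemma~\ref{lem:grad_alpha} already gives the first derivative explicitly and confirms the structure; iterating that formula also shows inductively that every derivative is a polynomial in $z$ times products of the $\alpha_k$.

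The map $h(t,\cdot;\mathcal D_M)=\sum_j\alpha_j(t,\cdot)\,y_j(t)$ is a finite linear combination of smooth scalar functions with constant vector coefficients, so each component is smooth. Finally, $\hat v^*(t,z)=g(t)z+h(t,z;\mathcal D_M)$ adds a linear map, which is $C^\infty$.

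There is no real obstacle here. The only point requiring care is the positivity of $c_t^2$, which guarantees both that the exponent is well defined and that $G_t$ is finite, together with the strict positivity of the softmax denominator; both follow from $\sigma_{\min}>0$.
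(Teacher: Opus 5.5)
Your proof is correct and follows the paper's argument essentially step for step. The steps are: each Gaussian numerator is $C^\infty$ and strictly positive; the denominator is therefore smooth and nowhere zero, so each quotient $\alpha_j$ is smooth; $h$ is then a finite combination with $z$-independent coefficients $y_j(t)$, and adding the linear map $G_t z$ gives the result for $\hat v^*$.

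Your aside on higher derivatives is not needed. More precisely, by Lemma~\ref{lem:grad_alpha} they are polynomials in $\alpha_1,\dots,\alpha_M$ with constant coefficients, with no explicit $z$-dependence.
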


\begin{proof}
For fixed $t$, each function
\[
z\mapsto \exp\!\left(-\frac{\|z-m_t^{(j)}\|^2}{2c_t^2}\right)
\]
is $C^\infty(\R^d)$ and strictly positive since $c_t^2>0$.
Hence the denominator
\[
S(z,t):=\sum_{k=1}^M \exp\!\left(-\frac{\|z-m_t^{(k)}\|^2}{2c_t^2}\right)
\]
is $C^\infty(\R^d)$ and satisfies $S(z,t)>0$ for all $z$.
Therefore, $\alpha_j(t,z)$ is $C^\infty(\R^d)$ since it is a quotient of $C^\infty$ functions
with a nonvanishing denominator.
Since $y_j(t)$ does not depend on $z$, $h(t,z;\mathcal D_M)$ is a finite linear combination of $C^\infty$
functions in $z$, hence $C^\infty$.
Finally, $z\mapsto G_t z$ is linear, so $\hat v^*(t,\cdot)$ is $C^\infty$ as well.
\end{proof}

Proposition \ref{prop_grads} is a special case (specializing to the case when $\sigma > 0$ and thus $g(t) > 0$ for all $t$) of the following theorem.

\begin{theorem}[Spatial Lipschitz bound]
\label{prop_grads_long}
Fix $t\in[0,1]$ and a dataset $\mathcal D_M$.
Assume that for all $j$ and $t$, $\|\dot m_t^{(j)}\|\le R_1$, $\|m_t^{(j)}\|\le R_m.$
Then the $z$-Lipschitz constant of $h$ can be bounded as:
\[
\operatorname{Lip}_z(h)(t) \leq \sup_{z \in \R^d}\|\nabla_z h(t,z;\mathcal D_M)\|
\ \le\ 
\frac{2R_1R_m}{c_t^2}
+\|G_t\|\frac{2R_m^2}{c_t^2}.
\]
In particular, if $\sigma > 0$, then $\|G_t\|=O(c_t^{-2})$ as $c_t\to 0$,
and
\[
\operatorname{Lip}_z(h)(t)=O(c_t^{-4})\qquad \text{ as } c_t\to 0.
\]
Moreover,
\[
\operatorname{Lip}_z(\hat{v}^*)(t) \leq \sup_{z \in \R^d} \|\nabla_z \hat{v}^*(t,z)\|
\le
\|G_t\|+\operatorname{Lip}_z(h)(t)
=
O(c_t^{-4})
\]
as $c_t\to 0$.
\end{theorem}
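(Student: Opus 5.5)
By Lemma~\ref{lem:smooth_h_v}, $h(t,\cdot;\mathcal D_M)$ and $\hat v^*(t,\cdot)$ are $C^\infty$ in $z$. The first inequality $\operatorname{Lip}_z(h)(t)\le\sup_z\|\nabla_z h\|$ then follows from the mean value theorem, as recalled just before Lemma~\ref{lem:grad_alpha}. The plan is therefore to bound the Jacobian of $h$ uniformly in $z$, using the exact gradient formula of Lemma~\ref{lem:grad_alpha}. Since $y_j(t)$ does not depend on $z$,
\[
\nabla_z h(t,z;\mathcal D_M)=\sum_{j=1}^M y_j(t)\,\nabla_z\alpha_j(t,z)^\top
=\frac{1}{c_t^2}\sum_{j=1}^M \alpha_j(t,z)\,y_j(t)\big(m_t^{(j)}-\bar m_t(z)\big)^\top .
\]
I would write $y_j(t)=\dot m_t^{(j)}-g(t)m_t^{(j)}$ and split the Jacobian into a \emph{transition part} with $\dot m_t^{(j)}$ and a \emph{drift part} with $g(t)m_t^{(j)}$. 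I would bound the two parts separately.

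For each part, take operator norms of the rank-one matrices, $\|ab^\top\|=\|a\|\|b\|$, and use the triangle inequality. The key elementary bound is $\|\bar m_t(z)\|\le R_m$, because $\bar m_t(z)$ is a convex combination of the $m_t^{(k)}$. Hence $\|m_t^{(j)}-\bar m_t(z)\|\le 2R_m$ for every $z$. Since $\sum_j\alpha_j=1$, the transition part is at most $c_t^{-2}\cdot R_1\cdot 2R_m$. The drift part is at most $c_t^{-2}\|G_t\|\cdot R_m\cdot 2R_m$. Neither bound depends on $z$, which gives the stated estimate
\[
\sup_z\|\nabla_z h\|\le \frac{2R_1R_m}{c_t^2}+\|G_t\|\frac{2R_m^2}{c_t^2}.
\]
A slightly sharper centred version uses $\sum_j\alpha_j(m_t^{(j)}-\bar m_t)=0$ to subtract a constant from $y_j$. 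This is not needed for the claim.

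For the asymptotics, note that $\|G_t\|=|g(t)|=\sigma^2|1-2t|/(2c_t^2)\le \sigma^2/(2c_t^2)=O(c_t^{-2})$ as $c_t\to0$ with $\sigma>0$ fixed. Substituting this, the second term is $O(c_t^{-4})$ and dominates the $O(c_t^{-2})$ first term, so $\operatorname{Lip}_z(h)(t)=O(c_t^{-4})$. For $\hat v^*=G_tz+h$, I would use $\nabla_z\hat v^*=G_t+\nabla_z h$. The triangle inequality gives $\sup_z\|\nabla_z\hat v^*\|\le\|G_t\|+\sup_z\|\nabla_z h\|$, and since $\|G_t\|=O(c_t^{-2})$, the total is $O(c_t^{-4})$. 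The mean value theorem again yields the Lipschitz bound.

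The only step needing care is the Jacobian computation and the uniform-in-$z$ control of $\|m_t^{(j)}-\bar m_t(z)\|$. Lemma~\ref{lem:grad_alpha} already handles the former, and the convex-hull argument handles the latter, so I expect no real obstacle. One point to watch is that the $O(\cdot)$ statement is meant with $\sigma$, $R_1$ and $R_m$ held fixed while $c_t\to0$ (for instance via $\sigma_{\min}\to0$ near the endpoints $t=0,1$). I would state this explicitly so that the bound $|g(t)|\le\sigma^2/(2c_t^2)$ is uniform.
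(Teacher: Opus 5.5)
Your proposal is correct and follows the paper's proof essentially step for step: the paper also splits $h=h_1-h_2$ with $h_1=\sum_j\alpha_j\dot m_t^{(j)}$ and $h_2=G_t\bar m_t(z)$, and differentiates via Lemma~\ref{lem:grad_alpha}. It bounds each rank-one sum using $\|m_t^{(j)}-\bar m_t(z)\|\le 2R_m$ and $\sum_j\alpha_j=1$, then uses $\|G_t\|\le\sigma^2/(2c_t^2)$ and $\nabla_z\hat v^*=G_t+\nabla_z h$ for the asymptotics, just as you do. Your sign for the Jacobian (no leading minus) is the one consistent with Lemma~\ref{lem:grad_alpha}; the minus sign in the paper's formula for $\nabla_z h_1$ is a harmless typo, since only norms enter.
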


\begin{proof}[Proof of Theorem \ref{prop_grads_long}]
    Let $t \in [0,1]$ and $\mathcal{D}_M$ be given.
    We split $h(t, z; \mathcal{D}_M)$ into two parts and analyze them separately:
    $$h(t, z; \mathcal{D}_M) = h_1(t, z; \mathcal{D}_M) - h_2(t, z; \mathcal{D}_M),$$
    where 
    $$h_1(t, z; \mathcal{D}_M) = \sum_{j=1}^M \alpha_j(t,z) \dot{m}_t^{(j)}, $$ 
    $$h_2(t, z; \mathcal{D}_M) = \sum_{j=1}^M \alpha_j(t,z) G_t m_t^{(j)} =: G_t \bar{m}_t(z).$$
    
    The full Jacobian is $\nabla_z h = \nabla_z h_1 - \nabla_z h_2$. We can bound its norm using the triangle inequality: $\|\nabla_z h\| \le \|\nabla_z h_1\| + \|\nabla_z h_2\|$.
    
    For the first term, $$\nabla_z h_1 = \nabla_z \left( \sum_{j=1}^M \alpha_j \dot{m}_t^{(j)} \right) = \sum_{j=1}^M \dot{m}_t^{(j)} (\nabla_z \alpha_j)^\top.$$
    
    From Lemma \ref{lem:grad_alpha}, we have  $\nabla_z \alpha_j = \frac{\alpha_j}{c_t^2} (m_t^{(j)} - \bar{m}_t(z))$, and so: 
    $$\nabla_z h_1 = - \frac{1}{c_t^2} \sum_{j=1}^M \alpha_j \dot{m}_t^{(j)} (m_t^{(j)} - \bar{m}_t(z))^\top.$$
    
    We bound its norm:
    $$\|\nabla_z h_1\| \le \frac{1}{c_t^2} \sum_{j=1}^M \alpha_j \|\dot{m}_t^{(j)}\| \cdot \|m_t^{(j)} - \bar{m}_t(z)\|$$
    
    Using our bounds $R_1$ and $R_m$ (and $\|m_t^{(j)} - \bar{m}_t(z)\| \le \|m_t^{(j)}\| + \|\bar{m}_t(z)\| \le 2R_m$): $$\|\nabla_z h_1\| \le \frac{1}{c_t^2} \sum_{j=1}^M \alpha_j (R_1) (2R_m) = \frac{2 R_1 R_m}{c_t^2} \left( \sum_{j=1}^M \alpha_j \right).$$
    
    Since $\sum \alpha_j = 1$, we have $\|\nabla_z h_1\| \le \frac{2 R_1 R_m}{c_t^2}$. Thus, $\operatorname{Lip}_z(h_1)(t) = O(c_t^{-2})$.
    
    For the second term, $$\nabla_z h_2 = \nabla_z \left( G_t \bar{m}_t(z) \right) = G_t \cdot \nabla_z \bar{m}_t(z).$$
    First, we compute the Jacobian of the posterior mean $\bar{m}_t(z)$:
    $$\nabla_z \bar{m}_t(z) = \nabla_z \left( \sum_{j=1}^M \alpha_j m_t^{(j)} \right) = \sum_{j=1}^M m_t^{(j)} (\nabla_z \alpha_j)^\top = 
    \frac{1}{c_t^2} \sum_{j=1}^M \alpha_j m_t^{(j)} (m_t^{(j)} - \bar{m}_t(z))^\top.$$
    
    We bound its norm similarly:
    $$\|\nabla_z \bar{m}_t(z)\| \le \frac{1}{c_t^2} \sum_{j=1}^M \alpha_j \|m_t^{(j)}\| \cdot \|m_t^{(j)} - \bar{m}_t(z)\| \le \frac{1}{c_t^2} \sum_{j=1}^M \alpha_j (R_m) (2R_m) = \frac{2 R_m^2}{c_t^2}.$$
    
    Now, we compute the norm of $\nabla_z h_2$ by multiplying by the norm of $G_t$:
    $$\|\nabla_z h_2\| \le \|G_t\| \cdot \|\nabla_z \bar{m}_t(z)\|.$$
    
    Note that $\|G_t\| = \frac{|\sigma^2(1-2t)|}{2c_t^2} \le \frac{\sigma^2}{2c_t^2} = O(c_t^{-2})$. Hence, 
    $$\|\nabla_z h_2\| \le \left( \frac{\sigma^2}{2c_t^2} \right) \left( \frac{2 R_m^2}{c_t^2} \right) = \frac{\sigma^2 R_m^2}{c_t^4}.$$
    
    Thus, $\operatorname{Lip}_z(h_2)(t) = O(c_t^{-4})$. The Lipschitz constant of the full nonlinear term is bounded by the sum of the two parts:
    $$\operatorname{Lip}_z(h)(t) \le \operatorname{Lip}_z(h_1)(t) + \operatorname{Lip}_z(h_2)(t) \le O(c_t^{-2}) + O(c_t^{-4}).$$

    Now, for velocity field:
    $$\nabla_z \hat{v}^\ast(t,z) = \nabla_z (G_t z) + \nabla_z h(t, z; \mathcal{D}_M).$$
    The Jacobian of the linear part $\nabla_z (G_t z)$ is simply the matrix $G_t$, and so:
    $$\nabla_z \hat{v}^\ast(t,z) = G_t + \nabla_z h(t, z; \mathcal{D}_M).$$
    Since $\|G_t\| = O(c_t^{-2})$, we have:
    $$\operatorname{Lip}_z(\hat{v}^\ast) \leq  \sup_z \|\nabla_z \hat{v}^\ast\| \leq \|G_t\| + \sup_z  \|\nabla_z h(t, z; \mathcal{D}_M)\|   \le O(c_t^{-2}) + O(c_t^{-4}),$$
    which is $O(c_t^{-4})$ as $c_t \to 0.$
\end{proof}

\subsection{Proof of Proposition \ref{prop_truncate}}
\begin{proof}[Proof of Proposition \ref{prop_truncate}]
    Let $t \in [0,1]$ and  $\mathcal{D}_M$ be given.
    Let $h(t, z; \mathcal{D}_M) = \sum_{j=1}^M \alpha_j B_t^{(j)}$, where $B_t^{(j)} := y_j(t) =  \dot{m}_t^{(j)} - G_t m_t^{(j)}$. Then, we can rewrite: $\hat{v}_t^\ast(z) = \hat{v}^*(t,z) =  G_t z + h(t, z; \mathcal{D}_M)$.
    
    Let $S_R = \sum_{j \in \mathcal{I}_R} \alpha_j$ be the weight mass of the top-R elements. We have:
    $$\hat{v}_{t,R}(z) = G_t z + \frac{\sum_{j \in \mathcal{I}_R} \alpha_j B_t^{(j)}}{S_R}.$$ 
    
    Let $\bar{B}_R = \frac{\sum_{j \in \mathcal{I}_R} \alpha_j B_t^{(j)}}{S_R}$ denote the top-R weighted average. The approximation error is $$ \|\hat{v}_t^\ast - \hat{v}_{t,R}\| = \|(G_t z +h(t, z; \mathcal{D}_M)) - (G_t z + \bar{B}_R)\| = \|h(t, z; \mathcal{D}_M) - \bar{B}_R\|. $$
    
    We can split $h(t, z; \mathcal{D}_M)$ into top-R and remaining terms:
    $$h(t, z; \mathcal{D}_M) = \sum_{j \in \mathcal{I}_R} \alpha_j B_t^{(j)} + \sum_{j \notin \mathcal{I}_R} \alpha_j B_t^{(j)} = S_R \bar{B}_R + (1-S_R) \bar{B}_{\neg R},$$
    where $\bar{B}_{\neg R}$ is the weighted average of the remaining terms.
    
    Substituting this into the error:
    \begin{align}
        \|h(t, z; \mathcal{D}_M) - \bar{B}_R\| &= \|S_K \bar{B}_R + (1-S_R) \bar{B}_{\neg R} - \bar{B}_R \| \\
        &= \|(S_R - 1) \bar{B}_R + (1-S_R) \bar{B}_{\neg R}\| \\
        &= \|(1-S_R) (\bar{B}_{\neg R} - \bar{B}_R)\| = (1-S_R) \|\bar{B}_{\neg R} - \bar{B}_R\|.
    \end{align}  
    
    By assumption, $\|B_t^{(j)}\| \le C$ for all $j$ and $t$.  Using the triangle inequality, we obtain $\|\bar{B}_{\neg R} - \bar{B}_R\| \le \|\bar{B}_{\neg R}\| + \|\bar{B}_R\| \le C + C = 2C$. Thus, the error is bounded uniformly by $(1-S_R) (2C) = 2 C \left( 1 - \sum_{j \in \mathcal{I}_R} \alpha_j \right)$, which is the bound that we wanted to show.
\end{proof}

Lastly, we provide some remarks on the proposed ODE sampler. 

The linear coefficient $G_t = \frac{\sigma^2(1-2t)}{2c_t^2}I_d$ exhibits a sign change at $t=0.5$: 
$G_t \succ 0$ for $t < 0.5$, $G_t = 0$ at $t = 0.5$, and $G_t \prec 0$ for $t > 0.5$.
This sign structure creates repulsive then attractive dynamics: during $t \in [0, 0.5)$, the linear term $G_t z$ amplifies deviations from the origin, while during $t \in (0.5, 1]$ it contracts toward the origin. The positive eigenvalues for $t < 0.5$ fundamentally limit stability: explicit methods require $\Delta t < 2/\lambda_{\max}(G_t)$ to avoid explosive growth.
Intuitively, the paths initially diverge from their starting points due to forward-time diffusion, then later converge toward their endpoints due to the bridge conditioning. The nonlinear term  provides a restorative force that anchors trajectories to the data, but careful step size selection during the repulsive phase $t \in [0, 0.5)$ remains crucial for numerical stability.

Although the mixture term $h(z,t)\in\text{Conv}\{B_t^{(1)},\dots,B_t^{(M)}\},$ where $\text{Conv}$ denotes convex hull, the full velocity $G_t z + h(z,t; \mathcal{D}_M)$ needs not lie in the convex hull of observed velocities, nor must the trajectories remain inside convex hulls of the data. In the rectified-flow limit ($c_t\equiv0$ so $G_t=0$), an  Euler step satisfies $z_{n+1}-z_n = \Delta t \sum_j \alpha_j(t_n, z_n)\dot m_t^{(j)},$ a convex combination of observed instantaneous velocities. Once $G_t\neq 0$, if we use an exponential Euler scheme, then the linear map $e^{G_t\Delta t}$ induces geometric distortion, causing amplification/contraction of the coordinates and moving the trajectory away from the convex hull.

\section{Additional  Theoretical Results and Insights on the Proposed Training-Free Model}
\label{app:duhamel_dm}

This section further studies the properties of the proposed training-free ODE sampler. We shall: (i) provide existence and uniqueness result, then study the Duhamel (variation-of-constants) representation and explore the implications;  and (ii) provide interpretation for the nonlinear forcing term in the velocity field as a Nyström-extended, row-stochastic diffusion-map operator \cite{coifman2006diffusion, lafon2004diffusion} applied to certain velocity labels. 

\subsection{Global Well-Posedness and Duhamel's Representation}
\label{subsec:bb_sampler_duhamel}

First, we provide a result on existence and uniqueness of the solution to the proposed ODE sampler.

\begin{theorem}[Global well-posedness]\label{thm:ode_wellposed}
Assume $\sigma_{\min}>0$ so that $c_t\ge \sigma_{\min}$ for all $t\in[0,1]$.
Assume there exist constants $R_m,R_1>0$ such that for all $j\in\{1,\dots,M\}$ and all $t\in[0,1]$,
\[
\|m_t^{(j)}\|\le R_m,\qquad \|\dot m_t^{(j)}\|\le R_1.
\]
Let $\hat{v}^*(t,z)=G_t z + h(t,z;\mathcal D_M)$ with
$G_t=\frac{\sigma^2(1-2t)}{2c_t^2}I_d$ and $h$ as in \eqref{eq:bb-vstar}. Then:
\begin{enumerate}
\item[(i)] $\hat{v}^*(t,\cdot)$ is globally Lipschitz in $z$, uniformly in $t\in[0,1]$.
\item[(ii)] $\hat{v}^*$ has linear growth: there exist constants $A,B<\infty$ such that
$\|\hat{v}^*(t,z)\|\le A+B\|z\|$ for all $(t,z)\in[0,1]\times\R^d$.
\end{enumerate}
Consequently, for any initial condition $Z_0\in\R^d$, the ODE
\[
\dot Z_t=\hat{v}^*(t,Z_t),\qquad t\in[0,1],
\]
admits a unique solution on $[0,1]$. Moreover, solutions depend continuously on $Z_0$.
\end{theorem}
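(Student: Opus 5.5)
The plan is to reduce everything to the standard Picard--Lindel\"of / Cauchy--Lipschitz theory for ODEs with a right-hand side that is globally Lipschitz in $z$ uniformly in $t$. The key input is Theorem~\ref{prop_grads_long}, combined with the observation that $\sigma_{\min}>0$ keeps $c_t$ bounded away from zero.

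\textbf{Step 1: uniform bounds on the coefficients.} Since $c_t^2=\sigma_{\min}^2+\sigma^2t(1-t)\ge\sigma_{\min}^2$ for all $t\in[0,1]$, we have
\[
\|G_t\|=\frac{\sigma^2|1-2t|}{2c_t^2}\le \frac{\sigma^2}{2\sigma_{\min}^2}=:\bar g
\]
uniformly in $t$. This also covers $\sigma=0$, where $G_t=0$.

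\textbf{Step 2: item (i).} By Lemma~\ref{lem:smooth_h_v}, $\hat v^*(t,\cdot)$ is $C^\infty$. The first bound in Theorem~\ref{prop_grads_long} holds for any $c_t>0$; it does not need the asymptotic regime. It gives
\[
\sup_z\|\nabla_z \hat v^*(t,z)\|\le \|G_t\|+\frac{2R_1R_m}{c_t^2}+\|G_t\|\frac{2R_m^2}{c_t^2}\le \bar g+\frac{2R_1R_m+2\bar gR_m^2}{\sigma_{\min}^2}=:L.
\]
The mean value inequality then yields $\operatorname{Lip}_z(\hat v^*)(t)\le L$ for every $t$.

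\textbf{Step 3: item (ii).} Because the $\alpha_j$ are convex weights,
\[
\|h(t,z)\|\le\max_j\|y_j(t)\|\le R_1+\bar gR_m.
\]
Hence $\|\hat v^*(t,z)\|\le (R_1+\bar gR_m)+\bar g\|z\|$, so one may take $A=R_1+\bar gR_m$ and $B=\bar g$.

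\textbf{Step 4: well-posedness.} Next I check measurability, in fact continuity, of $\hat v^*$ jointly in $(t,z)$. The quantities $c_t$, $G_t$, $m_t^{(j)}$ and $y_j(t)$ are continuous in $t$, and the softmax weights are continuous in $(t,z)$ because the denominator is strictly positive. With continuity in hand, global Lipschitz continuity in $z$ uniform in $t$ gives a unique global solution on $[0,1]$ via Picard iteration, for example as a contraction in a weighted sup norm $\sup_t e^{-2Lt}\|\cdot\|$. Linear growth rules out blow-up, although the global Lipschitz property already suffices.

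\textbf{Step 5: continuous dependence.} Gr\"onwall's inequality gives
\[
\|Z_t-Z_t'\|\le e^{Lt}\|Z_0-Z_0'\|.
\]

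The only mildly delicate point is Step 2. One must notice that the bound of Theorem~\ref{prop_grads_long} is a genuine inequality valid for every fixed $c_t>0$, and not merely the $O(c_t^{-4})$ asymptotic statement, so that $c_t\ge\sigma_{\min}$ can be substituted to make it uniform in $t$. Everything else is routine ODE theory.
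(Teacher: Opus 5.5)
Your proposal is correct and follows essentially the same route as the paper: bound $\|G_t\|\le \sigma^2/(2\sigma_{\min}^2)$ using $c_t\ge\sigma_{\min}$, then substitute this into the non-asymptotic bound of Theorem~\ref{prop_grads_long} to get (i). For (ii) you use the convexity of the weights $\alpha_j$ to get $\|h\|\le R_1+\bar g R_m$, and you conclude with Picard--Lindel\"of and Gr\"onwall, exactly as the paper does; your explicit check of joint continuity in $(t,z)$ and the explicit constant $e^{Lt}$ only add detail the paper asserts without proof.
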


\begin{proof}
Since $c_t\ge\sigma_{\min}$,
\[
\|G_t\|=\frac{\sigma^2|1-2t|}{2c_t^2}\le \frac{\sigma^2}{2\sigma_{\min}^2}=:C_G<\infty.
\]
By Theorem~\ref{prop_grads_long},
\[
\operatorname{Lip}_z(h)(t)\le \frac{2R_m}{c_t^2}\big(R_1+\|G_t\|R_m\big)
\le \frac{2R_m}{\sigma_{\min}^2}\big(R_1+C_G R_m\big)=:L_h<\infty.
\]
Since $z\mapsto G_t z$ has Lipschitz constant $\|G_t\|\le C_G$,
\[
\operatorname{Lip}_z(\hat{v}^*)(t)\le \|G_t\|+\operatorname{Lip}_z(h)(t)\le C_G+L_h=:L<\infty,
\]
uniformly in $t\in[0,1]$. This proves (i).

For (ii), we bound
\[
\|\hat{v}^*(t,z)\|
\le \|G_t\|\,\|z\|+\|h(t,z)\|.
\]
Moreover,
\[
\|h(t,z)\|\le \sum_{j=1}^M \alpha_j(t,z)\,\|y_j(t)\|
\le \sup_{1\le j\le M}\|y_j(t)\|
\le R_1+\|G_t\|R_m
\le R_1+C_G R_m.
\]
Hence
\[
\|\hat{v}^*(t,z)\|\le C_G\|z\|+(R_1+C_G R_m),
\]
which is linear growth.

Since $\hat{v}^*$ is continuous in $t$ and globally Lipschitz in $z$ uniformly on $[0,1]$,
Picard--Lindel\"of implies existence and uniqueness of a solution on $[0,1]$ for any $Z_0\in\R^d$
(see, e.g. \cite{hartman2002ordinary}). Continuous dependence on $Z_0$ follows from Gr\"onwall's inequality.
\end{proof}

With the uniqueness of the solution to the ODE established, the forecast sequence obtained by repeatedly integrating the ODE is well defined  and forms a data-driven Markov chain.
To analyze its structure for 1-step forecasting, we now study Duhamel's representation of the ODE~solution.

\paragraph{Duhamel's representation.}
The proposed sampler is the nonlinear ODE:
\begin{equation}\label{eq:cfm_ode}
\dot Z_t = G_t Z_t + h(t,Z_t;\mathcal D_M),
\qquad
Z_0\sim\mathcal N\!\big(m_0^{(\tau)},\sigma_{\min}^2 I_d\big).
\end{equation}
Let $\Phi(t,s)$ denote the fundamental matrix for $\dot z=G_t z$.
Then the standard variation-of-constants identity gives, for any $T\le 1$ and $t\in[0,T]$,
\begin{equation}\label{eq:duhamel_Z}
Z_t=\Phi(t,0)Z_0+\int_0^t \Phi(t,s)\,h(s,Z_s;\mathcal D_M)\,ds.
\end{equation}
Equivalently, with the co-moving variable $Y_t:=\Phi(t,0)^{-1}Z_t$,
\begin{equation}\label{eq:duhamel_Y}
Y_t=Y_0+\int_0^t \Phi(s,0)^{-1}\,h\!\big(s,\Phi(s,0)Y_s;\mathcal D_M\big)\,ds.
\end{equation}

\begin{proposition}[Duhamel representation for the solution of the proposed ODE sampler]
\label{prop:Phi_closed}
The fundamental matrix is
\begin{equation}\label{eq:Phi_closed_form}
\Phi(t,s)=\frac{c_t}{c_s}\,I_d,
\end{equation}
and thus the ODE solution admits the following representation: for $t \in (0,1]$, 
\begin{equation} \label{eq_analyticalsol}
    Z_t = \sqrt{1 + \left(\frac{\sigma}{\sigma_{\min}} \right)^2 t (1-t) } Z_0 + \sum_{j=1}^M \int_0^t \sqrt{\frac{\sigma_{\min}^2 + \sigma^2 t(1-t)}{\sigma_{\min}^2 + \sigma^2 s(1-s)}} \alpha_j(s, Z_s) y_j(s) \ ds.
\end{equation}
\end{proposition}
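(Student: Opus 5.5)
Since $G_t=g(t)I_d$ is a scalar multiple of the identity, the homogeneous equation $\dot z=G_t z$ decouples into $d$ identical scalar ODEs $\dot z_i=g(t)z_i$. Our plan is therefore to compute the scalar fundamental solution $\exp\big(\int_s^t g(r)\,dr\big)$ explicitly. We then substitute it into the variation-of-constants identity \eqref{eq:duhamel_Z}, which is already stated above. That identity is valid because Theorem~\ref{thm:ode_wellposed} guarantees a unique global solution on $[0,1]$.

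\textbf{Step 1: identify $g$ as a logarithmic derivative.} With $c_t^2=\sigma_{\min}^2+\sigma^2 t(1-t)$ we have $\frac{d}{dt}c_t^2=\sigma^2(1-2t)$. Hence
\[
g(t)=\frac{\sigma^2(1-2t)}{2c_t^2}=\frac{1}{2}\frac{d}{dt}\log c_t^2=\frac{d}{dt}\log c_t,
\]
where $c_t>0$ because $\sigma_{\min}>0$. This is just the general identity $a_t=\dot\sigma_t/\sigma_t$ from \eqref{eq_ab}, specialized to $\sigma_t=c_t$. Integrating gives $\int_s^t g(r)\,dr=\log(c_t/c_s)$, so $\Phi(t,s)=\frac{c_t}{c_s}I_d$.

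As a check, $\partial_t\Phi(t,s)=\frac{\dot c_t}{c_s}I_d=g(t)\Phi(t,s)$ and $\Phi(s,s)=I_d$, which gives \eqref{eq:Phi_closed_form}. Because $G_t$ commutes with itself at all times, no time-ordered exponential is needed.

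\textbf{Step 2: substitute into Duhamel's formula.} Plugging $\Phi$ into \eqref{eq:duhamel_Z} gives $Z_t=\frac{c_t}{c_0}Z_0+\int_0^t\frac{c_t}{c_s}h(s,Z_s;\mathcal D_M)\,ds$. Since $c_0=\sigma_{\min}$, the first coefficient is
\[
\frac{c_t}{\sigma_{\min}}=\sqrt{1+(\sigma/\sigma_{\min})^2t(1-t)}.
\]
Expanding $h=\sum_j\alpha_j y_j$ and exchanging the finite sum with the integral yields \eqref{eq_analyticalsol}.

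For completeness, we would also verify the Duhamel identity directly rather than quoting it. Differentiating $Y_t=Z_t/(c_t/c_0)$ gives $\dot Y_t=(c_0/c_t)\big(\dot Z_t-g(t)Z_t\big)=(c_0/c_t)h(t,Z_t)$. Integrating from $0$ to $t$ recovers \eqref{eq:duhamel_Y} and hence \eqref{eq:duhamel_Z}. The integrand is continuous in $s$ because $Z_s$ is continuous and $h$ is smooth (Lemma~\ref{lem:smooth_h_v}) and bounded, so the integral is well defined.

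\textbf{Main obstacle.} Nothing here is genuinely hard. The only point needing care is recognizing that $g=\dot c_t/c_t$, which makes the exponential integrate in closed form. The representation is implicit, since $Z_s$ appears inside $\alpha_j$, but the statement only claims an integral identity, not an explicit solution.
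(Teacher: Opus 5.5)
Your proposal is correct and follows essentially the same route as the paper: you write $g(t)=\tfrac12\frac{d}{dt}\log c_t^2$, integrate to get $\Phi(t,s)=\frac{c_t}{c_s}I_d$ (no time-ordering is needed since $G_t$ is scalar), and substitute into the variation-of-constants formula using $c_0=\sigma_{\min}$. Your direct verification of the Duhamel identity via the co-moving variable $Y_t$ is a harmless extra that the paper simply takes as the standard identity.
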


\begin{proof}
Let $d(t):=c_t^2$. Then $d'(t)=\sigma^2(1-2t)$ and
$g(t)=\frac{\sigma^2(1-2t)}{2c_t^2}=\frac{d'(t)}{2d(t)}$.
Since $G_t=g(t)I_d$ commutes with itself at all times,
\[
\Phi(t,s)=\exp\!\Big(\int_s^t G_r\,dr\Big)
=\exp\!\Big(\int_s^t g(r)\,dr\Big)I_d
=\exp\!\Big(\tfrac12\int_s^t \tfrac{d'(r)}{d(r)}\,dr\Big)I_d
=\sqrt{\frac{d(t)}{d(s)}}\,I_d
=\frac{c_t}{c_s}\,I_d.
\]
Plugging the above formula into \eqref{eq:duhamel_Z}, we obtain the desired  expression for the ODE solution. 
\end{proof}

Therefore, we see that the one-step ODE  forecaster behaves like a nonparametric smoother that averages over past transitions, rather than learning a single Markov transition rule, which partially explains why the proposed training-free model can be competitive with other sequence models on nonlinear dynamics benchmark tasks. To better see this, consider the case when $\sigma = 0$ for the Duhamel's representation, in which case the formula \eqref{eq_analyticalsol}  simplifies to: 
\begin{equation}
    Z_t = Z_0 + \sum_{j=1}^M \beta_j(t; \mathcal{D}_M) \cdot (X_2^{(j)} - X_1^{(j)}),
\end{equation}
where $$\beta_j(t; \mathcal{D}_M) = \int_0^t \frac{\exp(-\|Z_s - m_s^{(j)}\|^2/(2\sigma_{\min}^2))}{\sum_{k=1}^M \exp(-\|Z_s - m_s^{(k)}\|^2/(2\sigma_{\min}^2))} ds \geq 0, \quad m_s^{(j)} = (1-s) X_1^{(j)} + s X_2^{(j)},$$
and $\sum_{j=1}^M \beta_j(t; \mathcal{D}_M) = t$.
This structural representation tells us that the ODE  
sampler evolves by mixing past transition directions, and the mixing coefficients are integrals over time of similarity weights along the trajectory. In particular, the displacement $Z_1 - Z_0$ is a time-scaled convex combination of stored increments with time-accumulated attention weights. For a small $\delta > 0$, we can approximate, for every $s \in [0, 1-\delta]$, as:
$$\frac{Z_{s + \delta} - Z_s}{\delta} \approx  \sum_{j=1}^M \frac{\exp(-\|Z_s - m_s^{(j)}\|^2/(2\sigma_{\min}^2))}{\sum_{k=1}^M \exp(-\|Z_s - m_s^{(k)}\|^2/(2\sigma_{\min}^2))} \cdot (X_2^{(j)} - X_1^{(j)}),$$
which is a data-adaptive Nadaraya-Watson kernel estimator on all the past transition directions.  

Importantly, for $\sigma>0$ the variance schedule $c_t^2=\sigma_{\min}^2+\sigma^2 t(1-t)$ induces a time-dependent smoothing and a linear drift term (via $g_t=\dot c_t/c_t$), interpolating between sharper, memory-based dynamics and more regularized transport. 
From a dynamical-systems viewpoint, when the kernel bandwidth is small (so that the responsibilities concentrate), the velocity field becomes close to a piecewise-defined mixture and the ODE can be viewed as an approximate switching system that follows the locally most similar stored transition. 
Lastly, we remark that one can also obtain an analogous solution representation and extend the above discussion accordingly for SDE samplers, but we choose to focus on ODE here to simplify the exposition.

\subsection{Connections to Diffusion Geometry, Markov Operators and Diffusion Maps}
\label{subsec:diffusion_geometry}

\paragraph{Background on diffusion geometry and Markov operators.}
Much of the Riemannian geometry of a smooth manifold can be expressed in terms of its Laplacian
operator $\Delta=\mathrm{div}\circ\nabla$ via the \emph{carr\'e du champ} identity \cite{bakry2013analysis}:
\begin{equation}\label{eq:carre_du_champ}
\Gamma(f,h)
:=
\frac12\big(f\Delta h + h\Delta f - \Delta(fh)\big)
=
\langle \nabla f, \nabla h\rangle,
\end{equation}
which recovers the Riemannian metric on $1$-forms from the action of $\Delta$.
A central idea (diffusion geometry \cite{jones2024diffusion}) is to replace the Laplacian by a more general operator $L$
defined on an abstract state space, and to interpret \eqref{eq:carre_du_champ} as defining the
intrinsic geometry associated with $L$.

In many settings, $L$ arises as the \emph{infinitesimal generator} of a \emph{Markov semigroup}.
A family $(P_t)_{t\ge 0}$ of linear operators acting on a function space is called a Markov
semigroup if:
(i) $P_0=\mathrm{Id}$,
(ii) $P_tP_s=P_{t+s}$ for all $t,s\ge 0$, and
(iii) $P_t$ preserves positivity and constant functions.
The generator is defined by
\begin{equation}\label{eq:generator_def}
Lf := \lim_{t\downarrow 0}\frac{P_t f - f}{t},
\end{equation}
whenever the limit exists.
Rather than working directly with generators, one data-driven approach typically proceeds by constructing
\emph{finite-time} Markov diffusion operators from data; the generator and the associated geometry
emerge in suitable small-time or small-bandwidth limits.

Basic examples of Markov semigroup include: 
\begin{itemize}
\item \textbf{Heat diffusion on $\mathbb{R}^d$.}
Let $p_t(x,y)$ denote the Gaussian heat kernel
\[
p_t(x,y)
=
\frac{1}{(4\pi t)^{d/2}}
\exp\!\left(-\frac{\|x-y\|^2}{4t}\right),
\qquad t>0.
\]
It defines the Markov semigroup
\[
(P_t f)(x)
=
\int_{\mathbb{R}^d} p_t(x,y)\,f(y)\,dy,
\]
with infinitesimal generator $L=\Delta$.

\item \textbf{Weighted manifolds.}
Let $\mathcal M$ be a smooth compact Riemannian manifold and let
$\pi(dy)=\mu(y)\,d\mathrm{Vol}_{\mathcal M}(y)$ with $\mu>0$ smooth.
Define the weighted Laplacian
\[
Lf
=
\mu^{-1}\mathrm{div}(\mu\nabla f)
=
\Delta f + \nabla\log\mu\cdot\nabla f,
\]
which is symmetric in $L^2(\mathcal M,\pi)$ and admits $\pi$ as an invariant measure.
Let $p_t^\mu(x,y)$ denote the heat kernel associated with $L$ with respect to the measure $\pi$:
\[
(P_t f)(x)
=
\int_{\mathcal M} p_t^\mu(x,y)\,f(y)\,\pi(dy),
\qquad
\int_{\mathcal M} p_t^\mu(x,y)\,\pi(dy)=1.
\]
Then $(P_t)_{t\ge 0}$ is a Markov semigroup with generator $L$.

\item \textbf{General Markov processes.}
If $(X_t)_{t\ge 0}$ is a Markov process on a measurable space, then
\[
(P_tf)(x)=\mathbb{E}[f(X_t)\mid X_0=x]
\]
defines a Markov semigroup. The earlier two examples are specific instantiations within this framework.
\end{itemize}

\paragraph{Diffusion maps.} 
Diffusion maps \cite{coifman2006diffusion, lafon2004diffusion, nadler2006diffusion} are a family of kernel-based methods that construct intrinsic geometric
representations of data by interpreting a normalized kernel matrix as a Markov diffusion operator on the data manifold. In the classical setting, one studies the spectrum of this
operator and uses its leading eigenfunctions as low-dimensional intrinsic coordinates. Here, we are mainly interested in relating the proposed ODE sampler to diffusion maps, and do not perform spectral decomposition; instead, we directly use the
associated (population and empirical) diffusion operators and their Nystr\"om extensions
to transport vector-valued observables.

Fix a sampler time $s\in[0,1]$ and consider an intrinsic point cloud
$\tilde{\mathcal M}_s=\{\tilde m_s^{(j)}\}_{j=1}^M\subset\mathbb{R}^d$,
viewed as samples from a probability measure $\pi_s$ supported on a smooth manifold.
Given a bandwidth $\varepsilon_s>0$, define the Gaussian kernel
\[
k_{\varepsilon_s}(x,y)
=
\exp\!\left(-\frac{\|x-y\|^2}{2\varepsilon_s}\right).
\]
The associated population diffusion operator is
\begin{equation}\label{eq:T_eps_def}
(T_{\varepsilon_s} f)(x)
=
\frac{\int k_{\varepsilon_s}(x,y)f(y)\,\pi_s(dy)}
{\int k_{\varepsilon_s}(x,y)\,\pi_s(dy)}.
\end{equation}

Throughout, we use a hat to denote finite-sample or empirical objects constructed from
the point cloud $\tilde{\mathcal M}_s$ via Nystr\"om approximation.
In particular, $\hat K(s)$, $\hat D(s)$ and $\hat W(s)$ denote the empirical kernel,
degree and random-walk matrices, while $\hat P_s$ denotes the corresponding empirical
diffusion operator evaluated at arbitrary query points.
The unhatted objects correspond to population-level integral operators.

Define
\[
\hat K_{ij}(s)
:=
\exp\!\left(-\frac{\|\tilde m_s^{(i)}-\tilde m_s^{(j)}\|^2}{2\varepsilon_s}\right),
\qquad
\hat D(s):=\mathrm{diag}(\hat K(s)\mathbf 1),
\qquad
\hat W(s):=\hat D(s)^{-1}\hat K(s).
\]
For arbitrary $y\in\mathbb{R}^d$, the Nystr\"om weights are
\[
\hat w_j(y,s)
:=
\frac{\exp\!\left(-\frac{\|y-\tilde m_s^{(j)}\|^2}{2\varepsilon_s}\right)}
{\sum_{k=1}^M \exp\!\left(-\frac{\|y-\tilde m_s^{(k)}\|^2}{2\varepsilon_s}\right)},
\]
and the empirical diffusion operator is
\begin{equation}\label{eq:hatP_def}
(\hat P_s f)(y)
:=
\sum_{j=1}^M \hat w_j(y,s)\,f(\tilde m_s^{(j)}).
\end{equation}

Here ``Nystr\"om'' refers both to the finite-sample approximation of the population integral
operator and to its evaluation at out-of-sample query points via kernel weights, as is standard
in the diffusion-maps literature. 

\paragraph{Markov interpretation.}
The row-normalized kernel $\hat W(s)=\hat D(s)^{-1}\hat K(s)$ is a stochastic matrix and thus
defines a discrete-time Markov chain on the cloud $\tilde{\mathcal M}_s$, with transition
probabilities
\[
\mathbb P(X_{n+1}=\tilde m_s^{(j)}\mid X_n=\tilde m_s^{(i)})=\hat W_{ij}(s).
\]
The empirical diffusion operator $\hat P_s$ is the associated conditional-expectation operator,
\[
(\hat P_s f)(\tilde m_s^{(i)})
=
\mathbb E[f(X_{n+1})\mid X_n=\tilde m_s^{(i)}],
\]
while its Nystr\"om extension \eqref{eq:hatP_def} provides transition probabilities
$\hat w(\cdot\mid y,s)$ from an arbitrary query point $y\in\mathbb R^d$.

At the population level, $T_{\varepsilon_s}$ admits the representation
\[
T_{\varepsilon_s}f(x)=\int f(y)\,q_{\varepsilon_s}(x,dy),
\]
with $q_{\varepsilon_s}$ a Markov transition kernel.
In the joint limit $M\to\infty$ and $\varepsilon_s\to0$ (with suitable scaling),
$\hat P_s$ converges to $T_{\varepsilon_s}$ and
$(T_{\varepsilon_s}-\mathrm{Id})/\varepsilon_s$
recovers, up to a constant factor, a (possibly density-weighted) Laplace--Beltrami operator.

\paragraph{Intrinsic coordinates and diffusion-scale invariance.}
Define intrinsic coordinates
\begin{equation}\label{eq:rescaled_coords}
\tilde m_t^{(j)}:=\Phi(t,0)^{-1}m_t^{(j)},\qquad j=1,\dots,M,
\end{equation}
and intrinsic diffusion scale
\begin{equation}\label{eq:tilde_bandwidth_def}
\tilde c_t := \frac{c_t}{\varphi(t,0)},\qquad \varepsilon_t:=\tilde c_t^2,
\end{equation}
where $\Phi(t,0)=\varphi(t,0)I_d$.
For the proposed ODE sampler,
\begin{equation}\label{eq:comoving_bandwidth_constant}
\Phi(t,0)=\frac{c_t}{c_0}I_d
\quad\Rightarrow\quad
\tilde c_t\equiv c_0,
\qquad
\varepsilon_t\equiv c_0^2.
\end{equation}

\begin{proposition}[Kernel equivariance and intrinsic labels]
\label{prop:equiv_intrinsic}
Fix $t\in[0,1]$ and let $y\in\mathbb{R}^d$.
Then:
\begin{enumerate}
\item[(a)] \textbf{(Kernel equivariance)}
\[
\alpha_j\!\big(\Phi(t,0)y,t\big)
=
\hat w_j(y,t).
\]

\item[(b)] \textbf{(Intrinsic label identity)}
\[
\Phi(t,0)^{-1}y_j(t)=\dot{\tilde m}_t^{(j)}.
\]
\end{enumerate}
\end{proposition}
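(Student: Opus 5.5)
Both parts reduce to direct computation once we use the explicit fundamental matrix from Proposition~\ref{prop:Phi_closed}, namely $\Phi(t,0)=\varphi(t,0)I_d$ with $\varphi(t,0)=c_t/c_0$. The fact that $\Phi$ is a scalar multiple of the identity is what makes everything commute. The intrinsic quantities are then $\tilde m_t^{(j)}=(c_0/c_t)\,m_t^{(j)}$ and $\varepsilon_t=\tilde c_t^2=c_0^2$, as recorded in \eqref{eq:comoving_bandwidth_constant}.

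For part (a), substitute $z=\Phi(t,0)y=(c_t/c_0)\,y$ into the exponent defining $\alpha_j$. This gives
\[
\frac{\|(c_t/c_0)y-m_t^{(j)}\|^2}{2c_t^2}
=\frac{(c_t/c_0)^2\,\|y-(c_0/c_t)m_t^{(j)}\|^2}{2c_t^2}
=\frac{\|y-\tilde m_t^{(j)}\|^2}{2c_0^2}
=\frac{\|y-\tilde m_t^{(j)}\|^2}{2\varepsilon_t}.
\]
Each Gaussian factor in the numerator and denominator of $\alpha_j$ therefore becomes exactly the corresponding factor in $\hat w_j(y,t)$. The softmax normalizations then agree term by term, which proves (a). The only care needed is to pull the scalar $c_t/c_0$ out of the norm and to use the identity $\tilde c_t\equiv c_0$.

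For part (b), differentiate $\tilde m_t^{(j)}=\varphi(t,0)^{-1}m_t^{(j)}$ with the product rule, using $\frac{d}{dt}\varphi(t,0)^{-1}=-g(t)\,\varphi(t,0)^{-1}$. This identity holds because $\partial_t\Phi(t,0)=G_t\Phi(t,0)$ and the matrices are scalar; equivalently, $\varphi=c_t/c_0$ and $\dot c_t/c_t=g(t)$, which was already computed in the proof of Proposition~\ref{prop:Phi_closed}. The result is
\[
\dot{\tilde m}_t^{(j)}=\varphi(t,0)^{-1}\big(\dot m_t^{(j)}-g(t)\,m_t^{(j)}\big)=\Phi(t,0)^{-1}\big(\dot m_t^{(j)}-G_tm_t^{(j)}\big)=\Phi(t,0)^{-1}y_j(t).
\]

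There is no real obstacle here. The one point to check is the logarithmic derivative identity $\dot c_t/c_t=\sigma^2(1-2t)/(2c_t^2)=g(t)$, which follows from $\tfrac{d}{dt}c_t^2=\sigma^2(1-2t)$. It is also worth confirming that $c_t\ge\sigma_{\min}>0$, so the inverse $\Phi(t,0)^{-1}$ exists.
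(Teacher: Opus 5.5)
Your proposal is correct and matches the paper's proof: for (a) the paper likewise pulls the scalar $\varphi(t,0)$ out of the norm and uses $c_t/\varphi(t,0)=\tilde c_t$, and for (b) it uses the same product-rule computation with $\partial_t\Phi(t,0)=G_t\Phi(t,0)$. The only differences are cosmetic. The paper differentiates $m_t^{(j)}=\Phi(t,0)\tilde m_t^{(j)}$ rather than $\tilde m_t^{(j)}=\Phi(t,0)^{-1}m_t^{(j)}$, and it needs only that $\Phi(t,0)$ is scalar rather than the explicit form $\varphi(t,0)=c_t/c_0$.
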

\begin{proof}
\emph{(a)}
Since $\Phi(t,0)=\varphi(t,0)I_d$,
\[
\|\Phi(t,0)y-m_t^{(j)}\|^2
=\varphi(t,0)^2\|y-\tilde m_t^{(j)}\|^2.
\]
Substituting into the definition of $\alpha_j$ and using
$c_t^2/\varphi(t,0)^2=\tilde c_t^2$ yields $\alpha_j(\Phi(t,0)y,t)=\hat w_j(y,t)$.

\emph{(b)}
Differentiating $m_t^{(j)}=\Phi(t,0)\tilde m_t^{(j)}$ gives
\[
\dot m_t^{(j)}
=G_t m_t^{(j)}+\Phi(t,0)\dot{\tilde m}_t^{(j)}.
\]
Thus $y_j(t)=\Phi(t,0)\dot{\tilde m}_t^{(j)}$ and left-multiplying by
$\Phi(t,0)^{-1}$ yields the result.
\end{proof}

It then follows directly from Proposition~\ref{prop:equiv_intrinsic} that, for every $s\in[0,1]$ and every $y\in\mathbb{R}^d$,
\begin{equation}\label{eq:dm_integrand_exact}
\Phi(s,0)^{-1} h\!\big(s,\Phi(s,0)y;\mathcal D_M\big)
=
(\hat P_s\,\dot{\tilde m}(s))(y),
\end{equation}
where $\dot{\tilde m}(s)(\tilde m_s^{(j)})=\dot{\tilde m}_s^{(j)}$.
On-cloud,
\begin{equation}\label{eq:dm_integrand_oncloud}
\Phi(s,0)^{-1} h\!\big(s,m_s^{(i)};\mathcal D_M\big)
=
\sum_{j=1}^M \hat W_{ij}(s)\,\dot{\tilde m}_s^{(j)}.
\end{equation}

\paragraph{Discretizations as Nystr\"om regression steps.}  
Let $0=t_0<\cdots<t_K=1$ and $\Delta t_k=t_{k+1}-t_k$.

\emph{Forward Euler.}
\[
Z_{k+1}
=
Z_k+\Delta t_k\Big(G_{t_k}Z_k+(\hat P_{t_k}y(\,t_k\,))(Z_k)\Big).
\]

\emph{Integrating-factor scheme.}
\[
Z_{k+1}
=
\Phi(t_{k+1},t_k)
\Big(
Z_k+\Delta t_k\,(\hat P_{t_k}y(\,t_k\,))(Z_k)
\Big).
\]

\emph{Exponential Euler (ETD1).}
\[
Z_{k+1}
=
e^{g(t_k)\Delta t_k}Z_k
+
\frac{e^{g(t_k)\Delta t_k}-1}{g(t_k)}\,
(\hat P_{t_k}y(\,t_k\,))(Z_k),
\]
with the natural continuous extension when $g(t_k)=0$.

The key message is that each discretization step realizes a balance between linear
transport and a nonlinear data-dependent forcing, which is exactly
represented by the empirical diffusion-map operator $\hat P_s$ acting on intrinsic
velocity labels.

\paragraph{Relation to diffusion forecasting.}
Diffusion forecasting \cite{harlim2018data, berry2015nonparametric} builds a data-driven Markov
operator from a kernel on a point cloud and uses its eigenfunctions to propagate
densities/expectations under an (unknown) generator. Here, the same kernel--Markov machinery
appears inside the Lagrangian sampler: at frozen time $s$, the intrinsic forcing is exactly a Nystr\"om extension of a
row-stochastic diffusion-map random walk applied to the vector-valued observable
$\dot{\tilde m}_s^{(j)}$. Thus the map
$y\mapsto \sum_{j=1}^M \tilde w_j(y,s)\,\dot{\tilde m}_s^{(j)}$
is precisely the Nadaraya--Watson (kernel conditional expectation) estimator of the intrinsic
velocity observable on $\tilde{\mathcal M}_s$.
Unlike classical diffusion maps, we do \emph{not} perform eigendecomposition or spectral truncation:
the full operator is evaluated locally along the sampler trajectory.

\section{Experimental Details and Additional Results}
\label{app:experiments}

We deliberately focus on the Dysts benchmark as it provides a controlled and diverse collection of chaotic systems with varying levels of predictability and complexity. Unlike real-world datasets, Dysts enables systematic evaluation across many distinct dynamical regimes, making it particularly suitable for assessing generalization in forecasting tasks.

Moreover, prior work \cite{zhang2024zero} uses Dysts to evaluate the zero-shot forecasting ability of large foundation models. In contrast, we use the same benchmark to evaluate whether FreeFM can capture and extrapolate dynamics without training. This allows for a comparison between zero-shot inference from pretrained models and our training-free model, highlighting that strong forecasting performance can be achieved without reliance on large-scale pretraining.

\subsection{Baseline Model Details}
\label{subsec:BaselineModelDetails}
Our baseline models follow the model design and range of hyperparameter used in previous studies ~\cite{gilpin2021chaos,gilpin2023chaosinterpretablebenchmarkforecasting,zhang2024zero,lipman2022flow}. The qualitative details can be obtained from those works. We use the reference implementation and hyperparameter settings from \texttt{Dart} library for all baseline model other than vanilla flow matching model. For vanilla flow matching model we follow settings in prior work~\cite{lipman2022flow}. For simplicity and fair evaluation, we only choose one important hyperparameter to tune. The hyperparameter details are shown below.

\textbf{Vanilla Flow Matching} ~\cite{lipman2022flow}
\begin{enumerate}
    \item[-] \textit{Input Length: \{0.05, 0.25 , 0.5, 0.75, 1\} Lyapunov times}
    \item[-] \textit{Hidden Dimension: 256}
    \item[-] \textit{Time Embedding Dimension: 64}
    \item[-] \textit{Number of Residual Blocks: 4}
    \item[-] \textit{Dropout Fraction: 0.1}
    \item[-] \textit{Activation Function: ReLU}    
\end{enumerate}

\textbf{N-BEATS} ~\cite{oreshkin2020nbeatsneuralbasisexpansion}
\begin{enumerate}
    \item[-] \textit{Input Length: \{0.05, 0.25 , 0.5, 0.75, 1\} Lyapunov times}
    \item[-] \textit{Number of Stacks: 30} 
    \item[-] \textit{Number of Blocks: 1}
    \item[-] \textit{Number of Layers: 4}
    \item[-] \textit{Expansion Coefficient Dimension: 5}
    \item[-] \textit{Layer Widths: 256}
    \item[-] \textit{Degree of Trend Polynomial: 2}
    \item[-] \textit{Dropout Fraction: 0.0} 
    \item[-] \textit{Activation Function: ReLU}
\end{enumerate}

\textbf{Transformer} ~\cite{vaswani2017attention}
\begin{enumerate}
    \item[-] \textit{Input Length: \{0.05, 0.25 , 0.5, 0.75, 1\} Lyapunov times}
    \item[-] \textit{Number Attention Heads: 4}
    \item[-] \textit{Number Encoder Layers: 3}
    \item[-] \textit{Number Decoder Layers: 3}
    \item[-] \textit{Feedforward Dimension: 512}
    \item[-] \textit{Dropout Fraction: 0.1}
    \item[-] \textit{Activation Function: ReLU}
\end{enumerate}

\textbf{TiDE} ~\cite{das2024longtermforecastingtidetimeseries}
\begin{enumerate}
    \item[-] \textit{Input Length: \{0.05, 0.25 , 0.5, 0.75, 1\} Lyapunov times}
    \item[-] \textit{Number of Encoder Layers: 1}
    \item[-] \textit{Number of Decoder Layers: 1}
    \item[-] \textit{Decoder Output Dimension: 16}
    \item[-] \textit{Hidden Dimension Size: 128}
    \item[-] \textit{Past Temporal Width: 4}
    \item[-] \textit{Future Temporal Width: 4}
    \item[-] \textit{Past Temporal Hidden: None}
    \item[-] \textit{Future Temporal Hidden: None}
    \item[-] \textit{Temporal Decoder Hidden: 32}
    \item[-] \textit{Dropout Fraction: 0.1}
\end{enumerate}

\textbf{LSTM} ~\cite{hochreiter1997long}
\begin{enumerate}
    \item[-] \textit{Input Length: \{0.05, 0.25 , 0.5, 0.75, 1\} Lyapunov times}
    \item[-] \textit{Hidden Dimensionality: 25}
    \item[-] \textit{Number of Recurrent Layers: 2}
    \item[-] \textit{Dropout Fraction: 0.0}
    \item[-] \textit{Training Length: 24}
\end{enumerate}

\textbf{ESN} ~\cite{jaeger2004harnessing}
\begin{enumerate}
    \item[-] \textit{Input Length: \{0.05, 0.25 , 0.5, 0.75, 1\} Lyapunov times}
    \item[-] \textit{Number of Reservoir Units: 500}
    \item[-] \textit{Spectral Radius: 0.8}
    \item[-] \textit{Leak Rate: 0.1}
    \item[-] \textit{Reservoir Connectivity: 0.1}
    \item[-] \textit{Input Scaling: 1.0}
    \item[-] \textit{Input Connectivity: 0.2}
    \item[-] \textit{Ridge Regularization: 1e-4}
\end{enumerate}

\subsection{Evaluation Metrics}
\label{subsec:EvaluationMetrics}

\textbf{Symmetric Mean Absolute Percentage Error (sMAPE).} 
sMAPE is an accuracy measure based on percentage (or relative) errors, and it is commonly used in time series prediction and forecasting tasks \cite{koehler2001asymmetry}. It is defined as:
\begin{equation}
    \text{sMAPE}(\mathbf{y},\mathbf{\hat{y}})\equiv\frac{100\%}{n}\sum^{n}_{t=1}\frac{|\mathbf{y}_t-\mathbf{\hat{y}}_t|}{(|\mathbf{y}_t|+|\mathbf{\hat{y}}_t|)/2},
\end{equation}

where $n$ is forecast horizon, $\mathbf{y}_t$ is true value of test time series and $\mathbf{\hat{y}}_t$ is predicted value of the forecast model. sMAPE is bounded between 0\% and 200\%, and penalizes larger over and underestimations in a ``symmetric'' manner.

\textbf{Valid Prediction Time (VPT).} 
VPT measures the first forecast horizon when sMAPE exceeds a fixed threshold $\epsilon$ \cite{gilpin2023modelscaleversusdomain}: 
\begin{equation}
    \text{VPT}\equiv \arg\max_{\tilde{t}}\{\tilde{t}|\text{sMAPE}(\textbf{y}_t,\mathbf{\hat{y}}_t)<\epsilon,\forall t<\tilde{t}\}.
\end{equation}
We set $\epsilon=20$, tighter than prior studies
\cite{gilpin2023modelscaleversusdomain}.

\textbf{Continuous Ranked Probability Score (CRPS).} 
CRPS measures the respective accuracy of two probabilistic forecasting models. For a predictive distribution $F$ and observation $y$, it is the scoring rule defined as \cite{matheson1976scoring}:

\begin{equation}
    \text{CRPS}(F,y)=\int_{-\infty}^{+\infty}[F(x)-\mathbf{1}\{x\geq y\}^2]dx .
\end{equation}

For practical computation, we consider its equivalent form:

\begin{equation}
    \text{CRPS}(F,y)= \mathbb{E}[|X-y|]-\frac{1}{2}\mathbb{E}[|X-X^\prime|] ,
\end{equation}

where $X, X'$ are independent samples drawn from the forecast distribution $F$.

\textbf{Correlation Dimension.} 
Correlation dimension is a measure of the dimensionality of the space occupied by a set of random points, often referred to as a type of fractal dimension. It characterizes how the attractor fills the phase space by measuring the scaling behavior of nearby point pairs \cite{grassberger1983measuring}. 
For a scalar time series $\{x(t)\}$, reconstruct the phase space using time-delay embedding:
\begin{equation}
    \textbf{X}(i)=[x(i), x(i+\tau),\dots,x(i+(m-1)\tau)] ,
\end{equation}
where $m$ is embedding dimension, $\tau$ is time delay, and let total number of embedded vectors be $N$. Then the correlation integral $C(r)$ counting the fraction of point pairs is defined according to the radius $r$:
\begin{equation}
    C(r)=\lim_{N\to\infty}\frac{1}{N^2}\sum_{i,j=1}^N\Theta(r-\Vert\textbf{X}(i)-\textbf{X}(j)\Vert) ,
\end{equation}
where $\Theta(\cdot)$ is Heaviside step function. As in scaling region, a power-law relationship holds $C(r)\propto r^{d_{corr}}$, therefore we have correlation dimension computed as:
\begin{equation}
    d_{corr} = \lim_{r\to0}\frac{\log C(r)}{\log r} .
\end{equation}

\textbf{Kullback–Leibler divergence (KL divergence).} 
KL divergence is a type of statistical distance measuring how much an approximating probability distribution $Q$ different from a true probability distribution $P$ \cite{kullback1951information}. It is defined as:
\begin{equation}
    D_{\text{KL}} (P\Vert Q)=\sum_{x\in\mathcal{X}}P(x)\frac{P(x)}{Q(x)}  .
\end{equation}

\subsection{Computational Cost}
We provide a computational cost analysis comparing our training-free model with the vanilla flow matching model and the six baseline models. All experiments are conducted on CPUs belonging to an internal cluster. Let $M$ denote the number of training samples, $d$ denote the state dimension of the chaotic system, and $H$ denote the forecast horizon. For our training-free model, let $T_g$ denote the time grid size, $N_{\text{ode}}$ denote the number of ODE integration steps, $B$ denote the batch size, and $S$ denote the number of Monte Carlo samples.

Our model is training-free, resulting in minimal computational cost before inference. During the pre-computation phase, the model computes the transition means $m_j$ and velocity correction terms $B_j$, with a complexity of only $\mathcal{O}(T_g \cdot M \cdot d)$. During the single-step inference phase, at each ODE integration step, the model first computes pairwise differences between the current state and the transition means, with complexity $\mathcal{O}(B \cdot M \cdot d)$; then computes the softmax of Gaussian responsibilities, with complexity $\mathcal{O}(B \cdot M)$; and finally computes the nonlinear correction term, with complexity $\mathcal{O}(B \cdot M \cdot d)$. Thus, the total computational cost per integration step is $\mathcal{O}(B \cdot M \cdot d)$. For multi-step probabilistic forecasting with horizon $H$ and $S$ Monte Carlo samples, the total computational cost is:
\begin{equation}
    \mathcal{C}_{\text{dense}} = \mathcal{O}(S \cdot H \cdot N_{\text{ode}} \cdot M \cdot d)  .
\end{equation}

To improve practical efficiency, we introduce a top-$R$ approximation that restricts attention to the $R$ nearest transitions at each query point, where $R\ll M$. This approximation first computes distances to all $M$ transitions, with complexity $\mathcal{O}(B \cdot M \cdot d)$; then performs a partial sort to identify the $R$ nearest neighbors, with complexity $\mathcal{O}(B \cdot M)$; and finally computes the weighted velocity over only the $R$ neighbors, with complexity $\mathcal{O}(B \cdot k \cdot d)$. Thus, the inference complexity for multi-step probabilistic forecasting becomes:

\begin{equation}
    \mathcal{C}_{\text{top-}R} = \mathcal{O}(S \cdot H \cdot N_{\text{ode}} \cdot (M \cdot d + R \cdot d))  .
\end{equation}

According to our dataset settings, for each system we have $20$ trajectories with $812 - 500 = 312$ observed time points, yielding a total of $M = 6240$ samples. Based on this and the baseline model settings described in Appendix~\ref{subsec:BaselineModelDetails}, we compute the approximate Floating Point Operations (FLOPs) for our training-free model and the baseline models. The results are presented in Fig.~\ref{fig:computationalcost}.
\begin{figure}[!h]
    \centering
    \begin{subfigure}{0.48\textwidth}
        \centering
        \includegraphics[width=\textwidth]{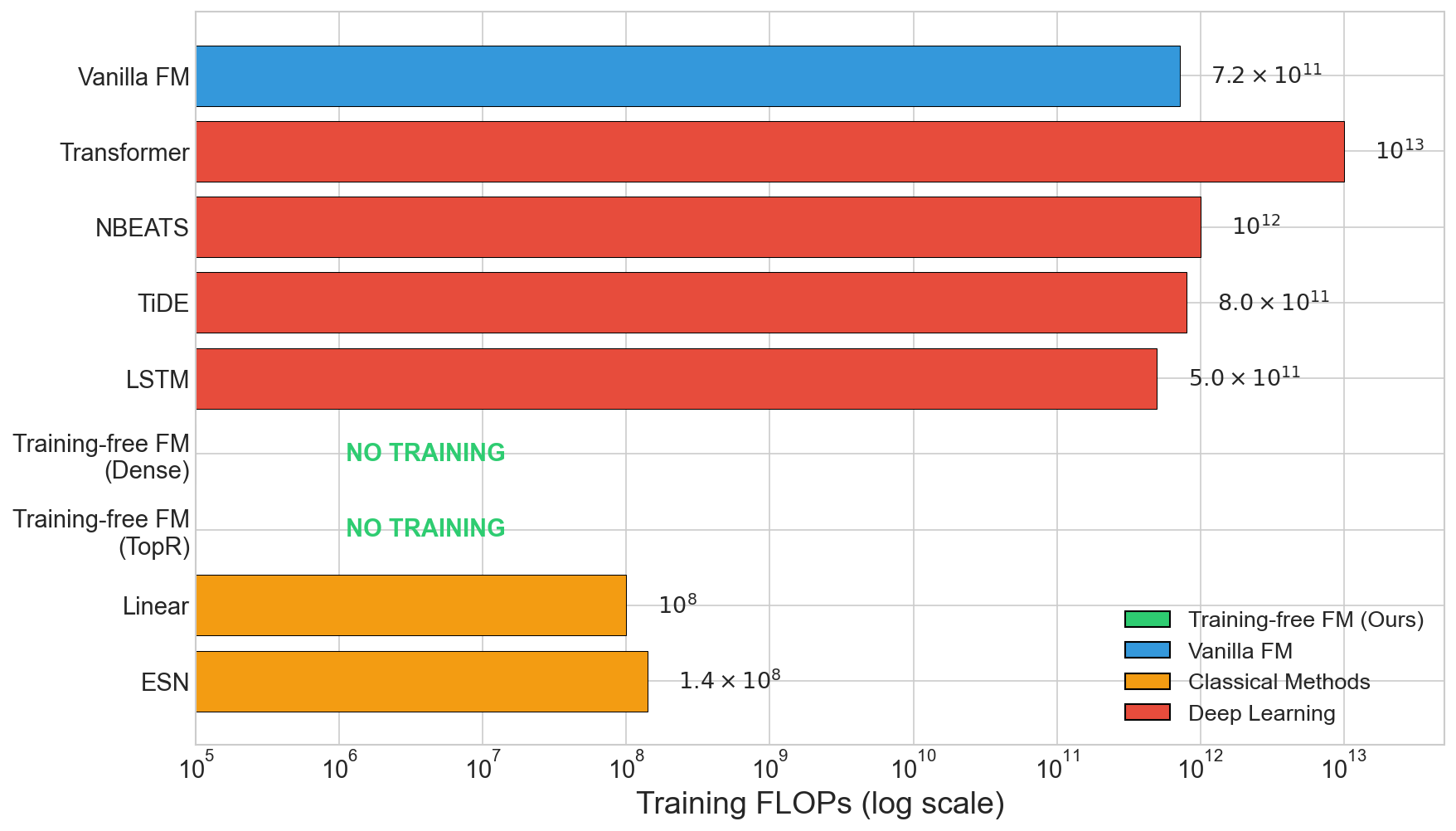}
        \caption{Training FLOPs Comparison}
        \label{fig:trainingflops}
    \end{subfigure}
    \begin{subfigure}{0.48\textwidth}
        \centering
        \includegraphics[width=\textwidth]{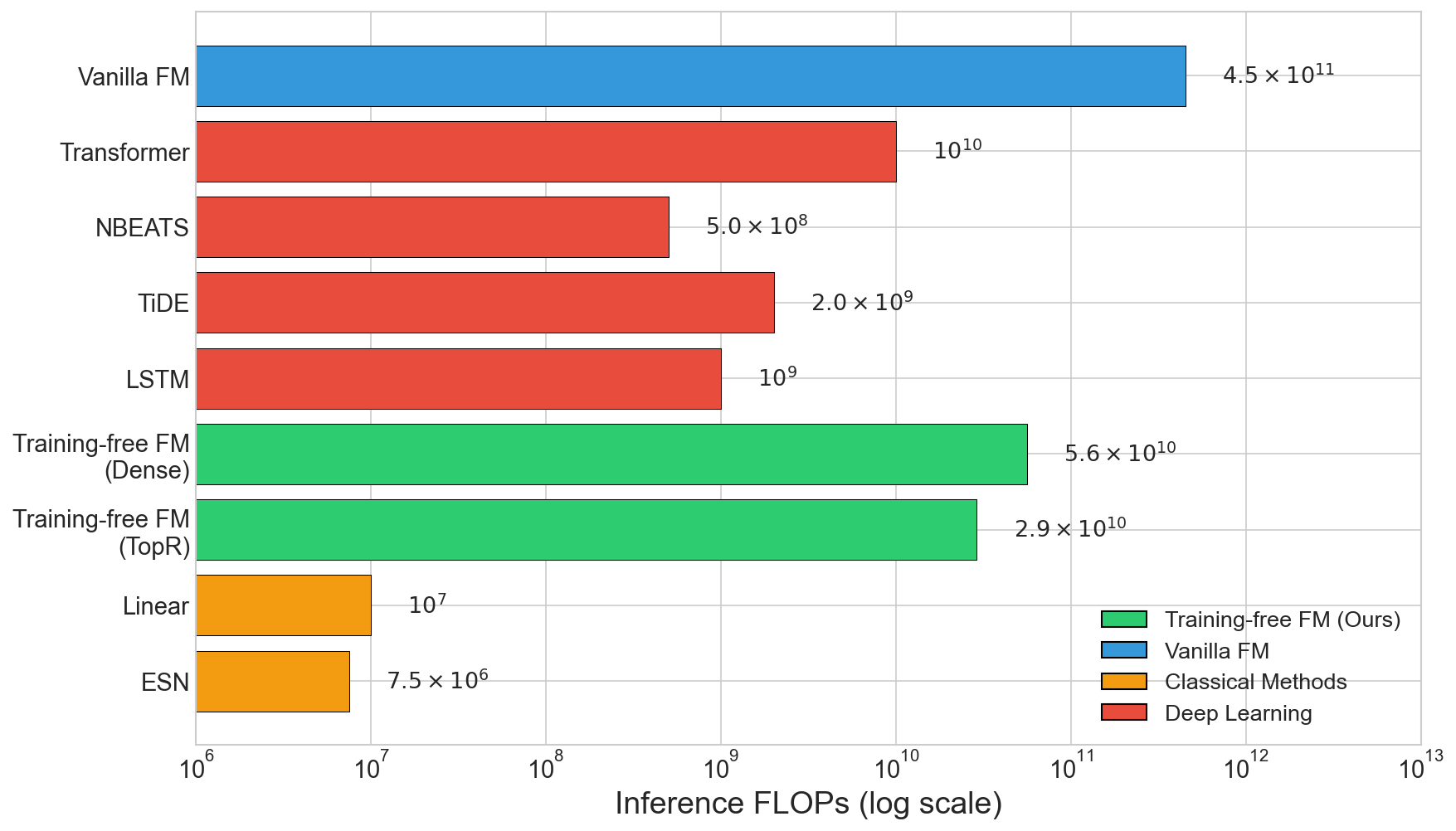}
        \caption{Inference FLOPs Comparison}
        \label{fig:inferenceflops}
    \end{subfigure}
    \caption{\textbf{Computational Cost.} (a) FLOPs Comparison during training phase. (b) FLOPs Comparison during inference phase. All the FLOPs are computed among $6240$ samples.}
    \label{fig:computationalcost}
\end{figure}

\subsection{Conditional Forecast Examples}
For conditional forecasting, we present additional forecast results from various chaotic systems beyond the Aizawa attractor shown in Fig.~\ref{fig:ConditionalForecastSub1}. These results are provided in Figs.~\ref{fig:henonheiles}-\ref{fig:ForcedBrusselator}.

\begin{figure}[!h]
    \centering
    \includegraphics[width=0.8\textwidth]{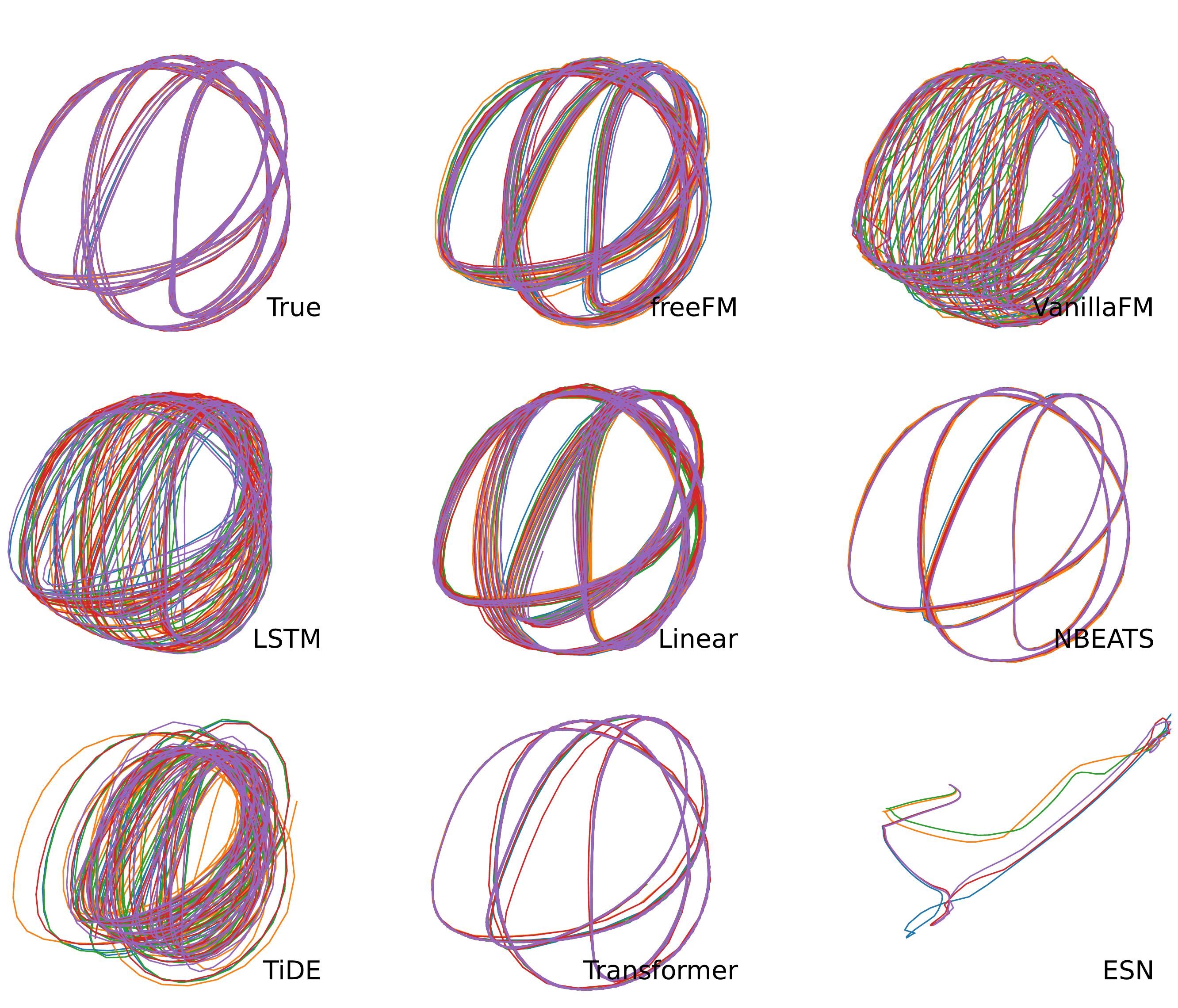}
    \caption{Hénon–Heiles System.}
    \label{fig:henonheiles}
\end{figure}

\begin{figure}[!h]
    \centering
    \includegraphics[width=0.8\textwidth]{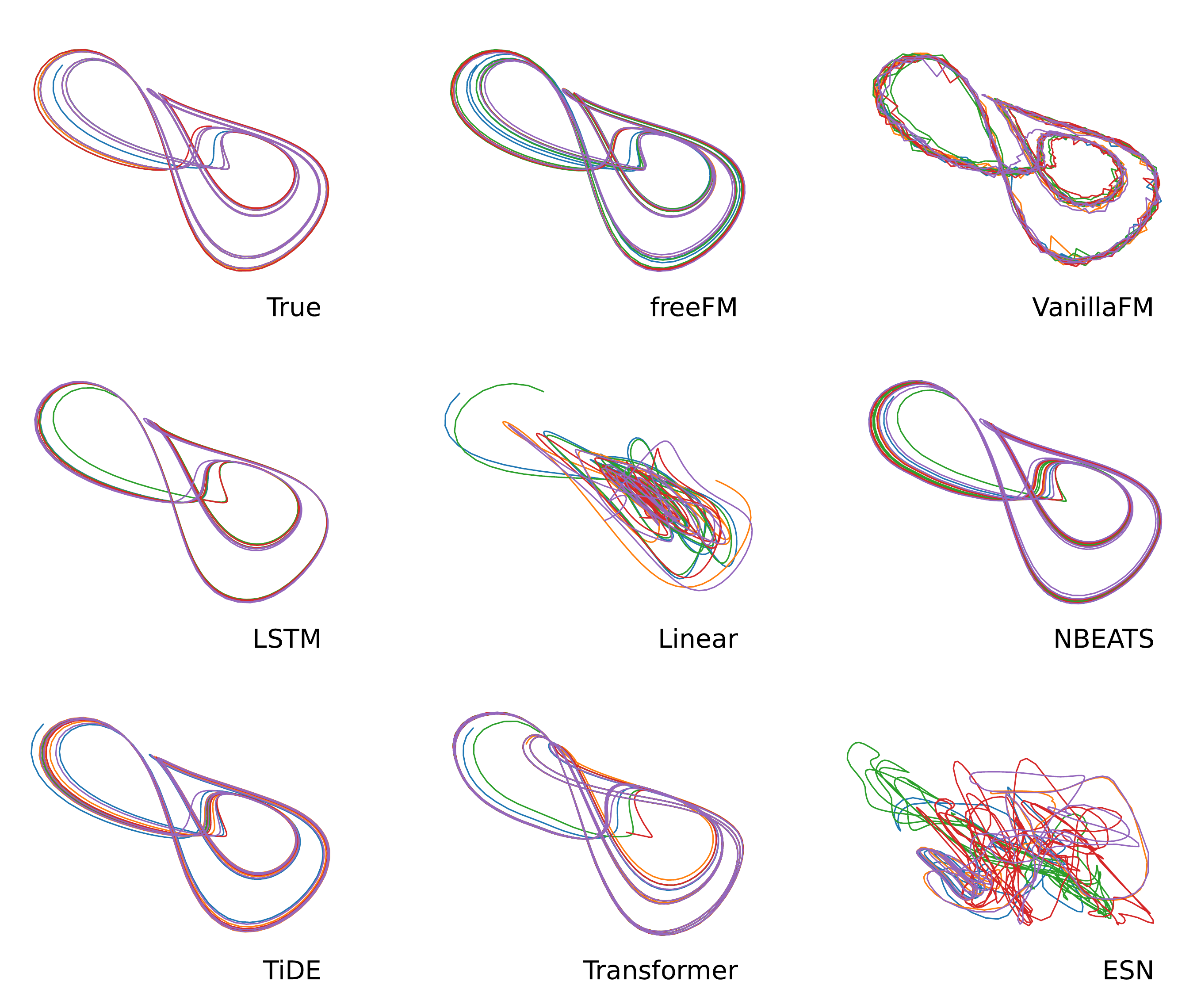}
    \caption{Sprott G System.}
\end{figure}

\begin{figure}[!h]
    \centering
    \includegraphics[width=0.8\textwidth]{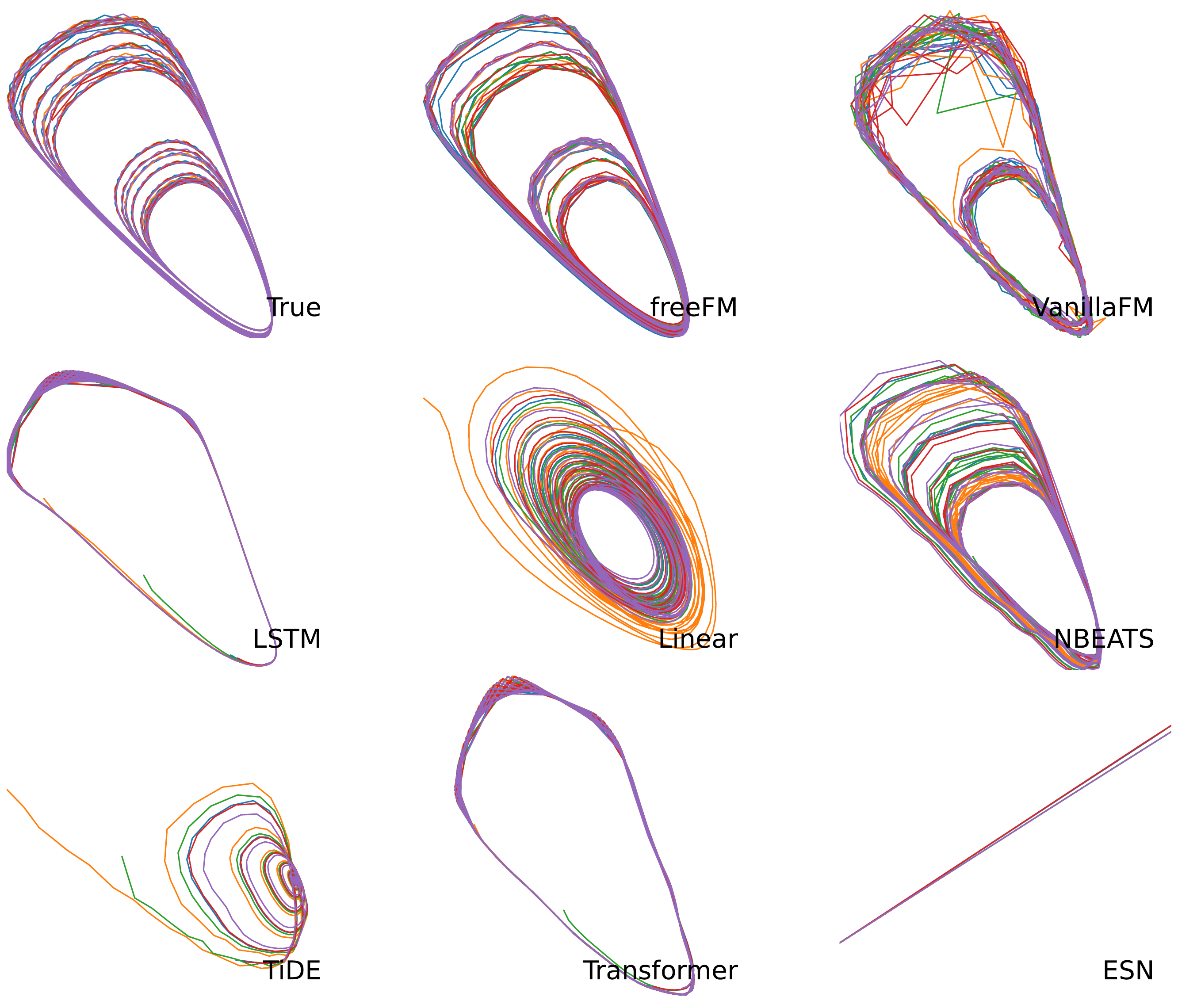}
    \caption{Isothermal Chemical Process.}
\end{figure}

\begin{figure}[!h]
    \centering
    \includegraphics[width=0.8\textwidth]{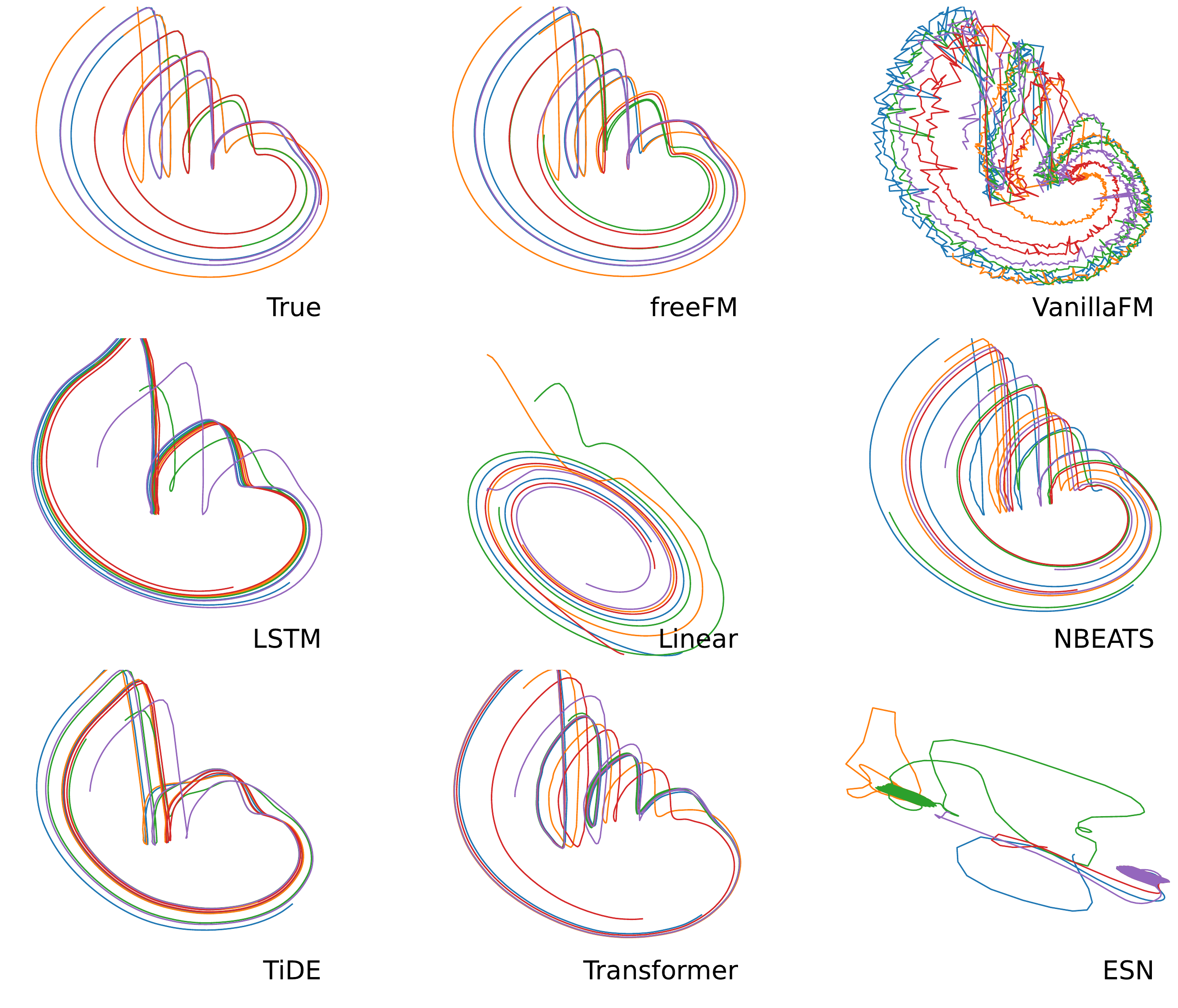}
    \caption{Jerk Circuit Oscillator.}
\end{figure}

\begin{figure}[!h]
    \centering
    \includegraphics[width=0.8\textwidth]{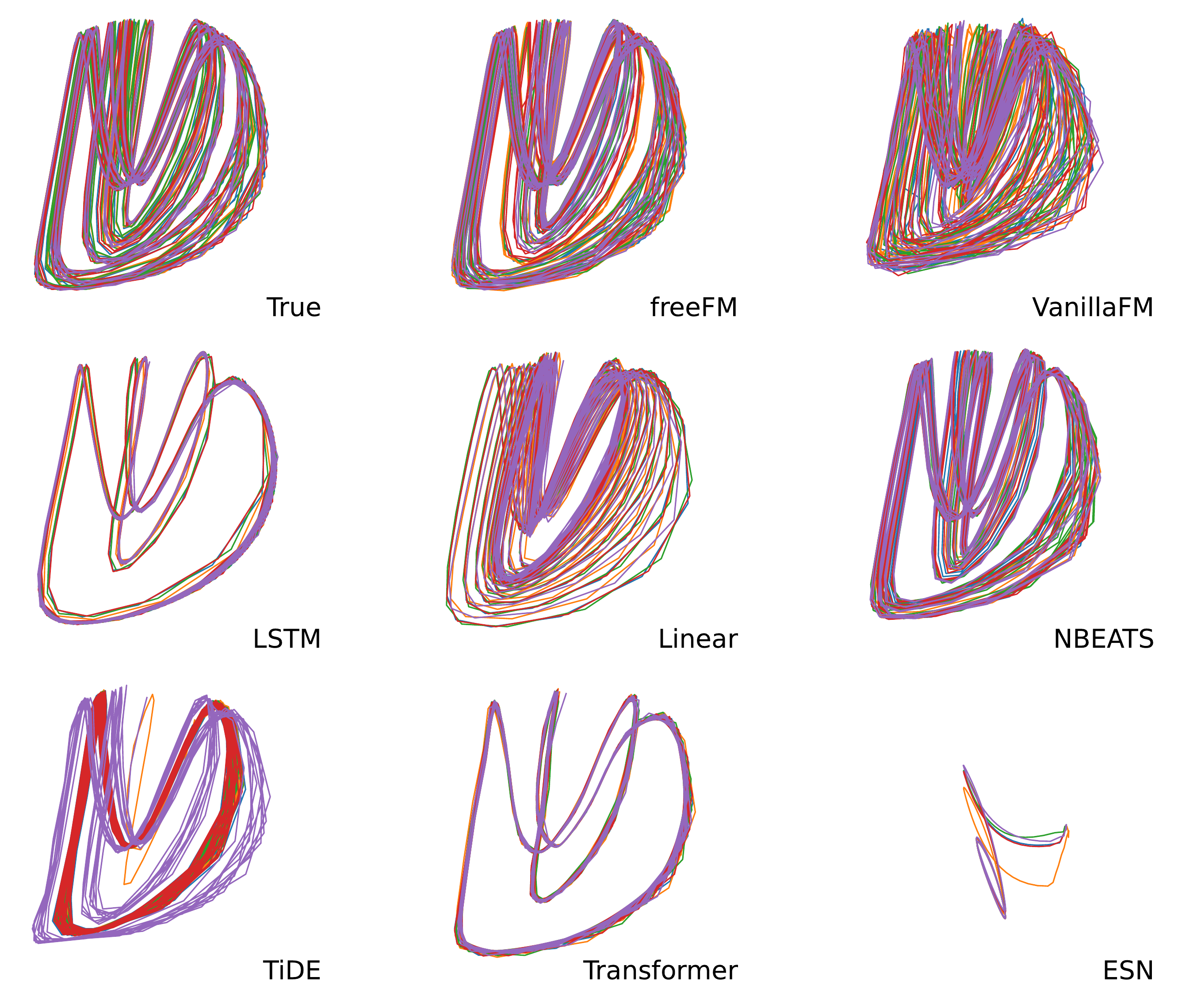}
    \caption{Forced Brusselator System.}
    \label{fig:ForcedBrusselator}
\end{figure}

\subsection{Ablation Study}
\label{app_ablation}

Our training-free model provides a closed-form optimal velocity field and integrates the induced ODE to generate samples. The choice of ODE integrator settings can significantly impact the generated trajectories. In this ablation study, we examine different integrator configurations, including the number of integration steps and the choice of ODE solvers, to investigate whether our model behaves consistently with standard flow matching models across various settings.

We conduct the ablation study under the same setting as Sec.~\ref{subsec:ConditionalForecast}, except that we vary the number of ODE integration steps in $[30,50,100]$ and the ODE solver in [Euler, Runge–Kutta, Exponential Euler], respectively. The results are presented in Fig. \ref{fig:AblationStudy}. From Fig. \ref{fig:AblationStudySub1}, we see that more integration steps lead to better prediction quality. And Fig. \ref{fig:AblationStudySub2} shows that the Euler method is already good enough for our training free models. These conclusions are identical with flow matching models from previous studies \cite{brinke2025flowmatchinggeometrictrajectory,lipman2022flow}.

We also run ablation experiment to study if  the top R truncation scheme leads to differences in the proposed training-free model. We set truncation number $R=256$, the result is presented in Fig.~\ref{fig:AblationStudySub3}. We can see that the results are almost the same after we apply truncation regime.

\begin{figure}[!h]
    \centering
    \begin{subfigure}{0.45\textwidth}
        \centering
        \includegraphics[width=\textwidth]{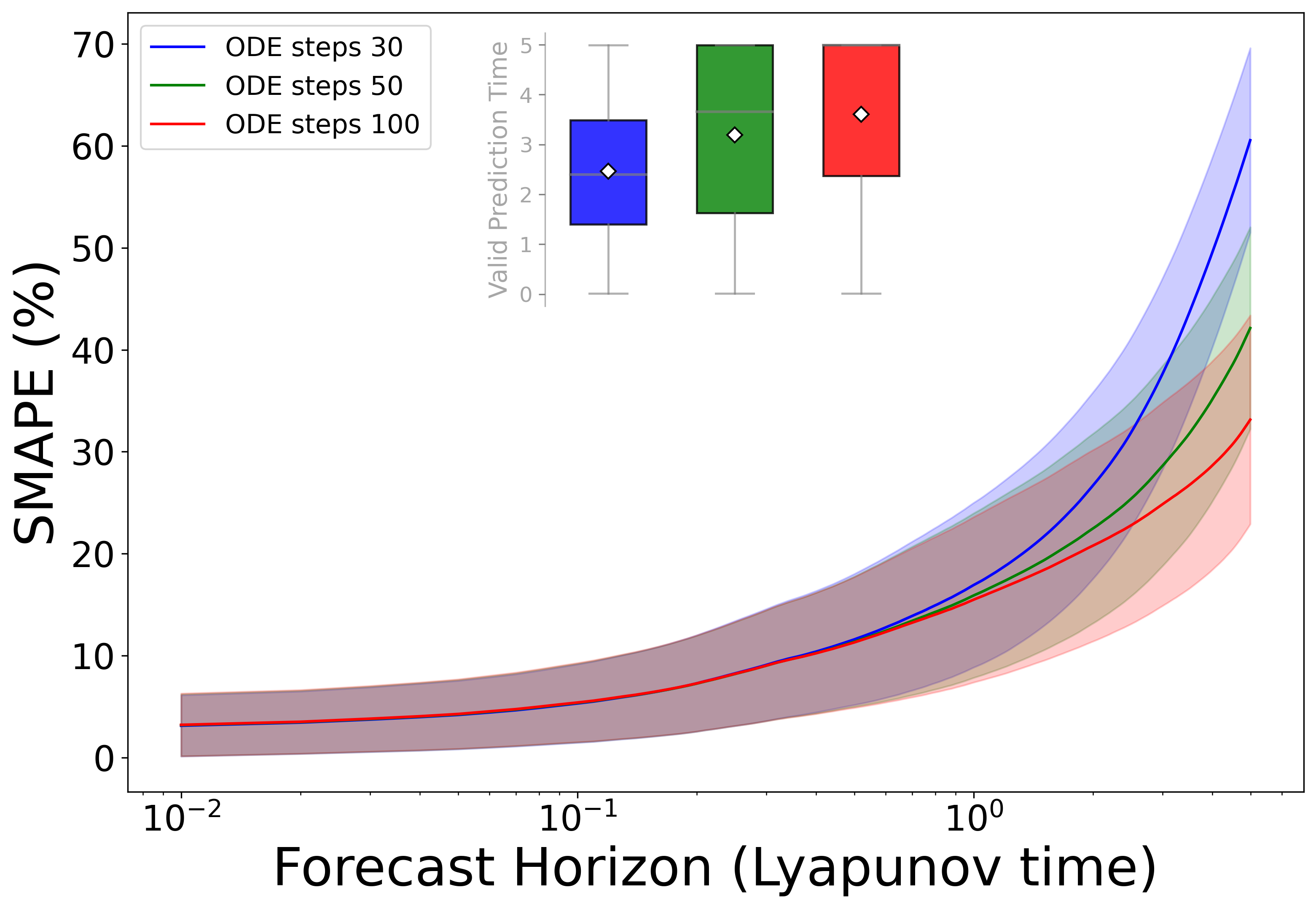}
        \caption{Ablation Study for ODE Integration Steps}
        \label{fig:AblationStudySub1}
    \end{subfigure}
    \begin{subfigure}{0.45\textwidth}
        \centering
        \includegraphics[width=\textwidth]{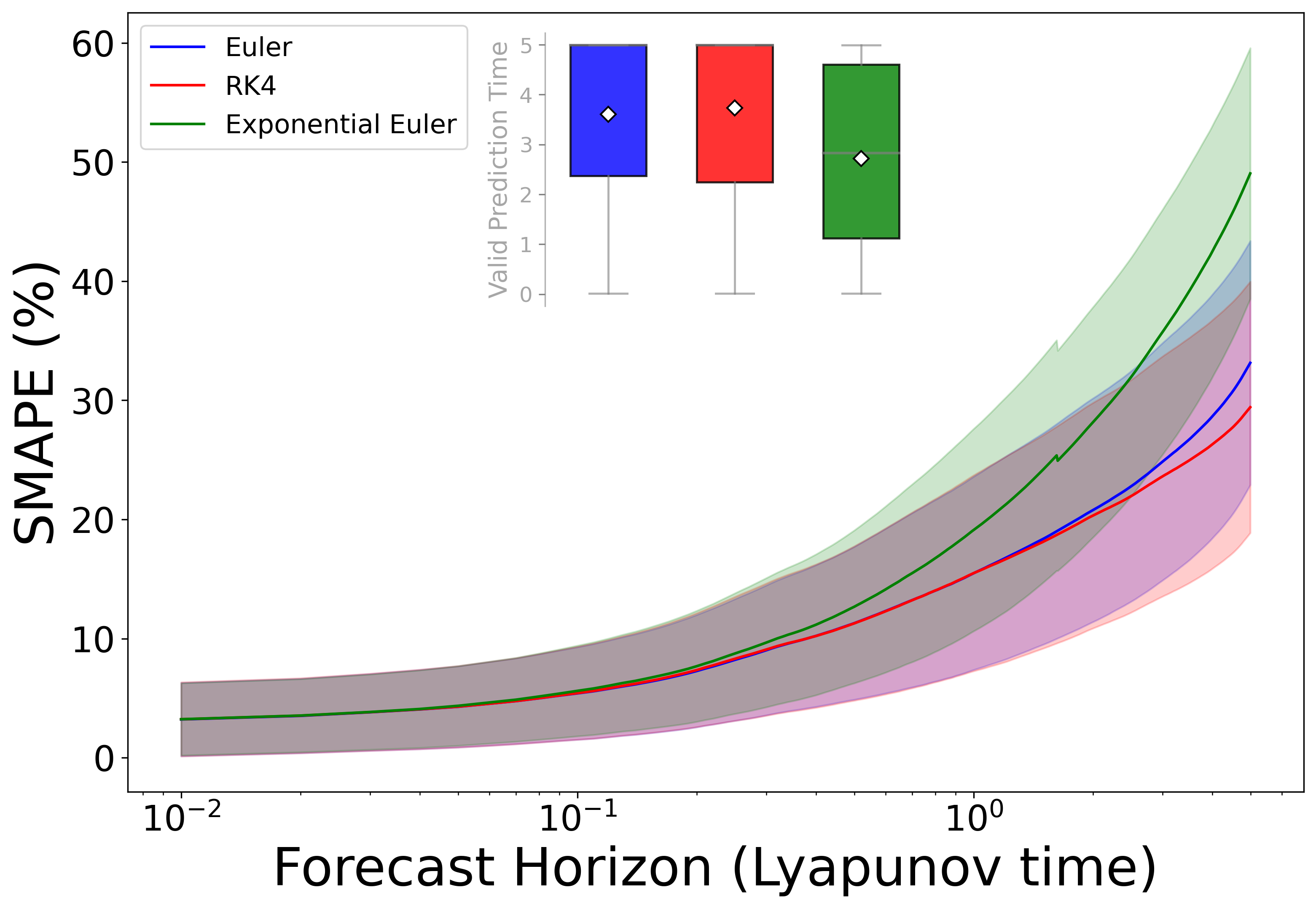}
        \caption{Ablation Study for ODE Solvers}
        \label{fig:AblationStudySub2}
    \end{subfigure}
    \begin{subfigure}{0.45\textwidth}
        \centering
        \includegraphics[width=\textwidth]{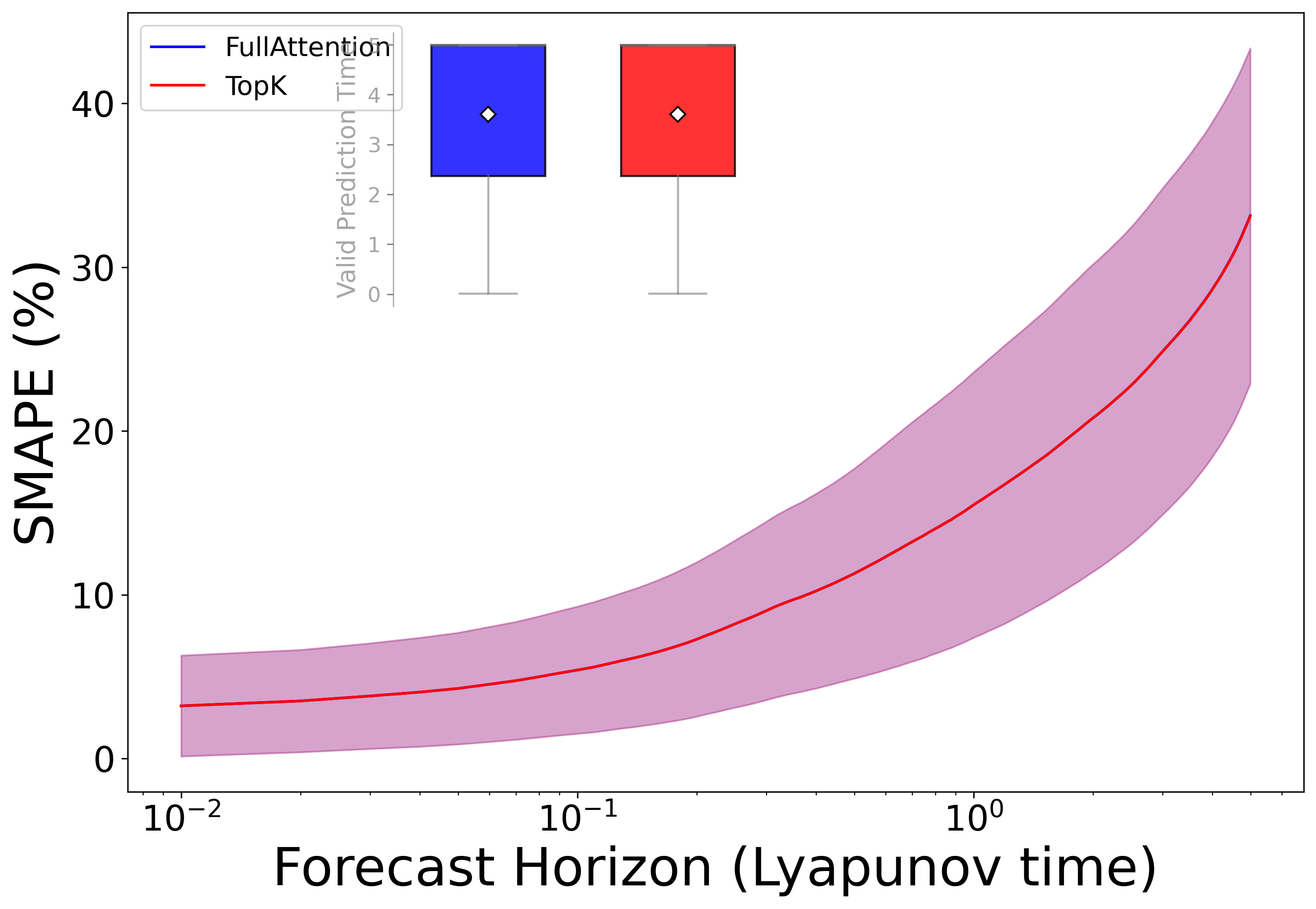}
        \caption{Ablation Study for Truncation Model}
        \label{fig:AblationStudySub3}
    \end{subfigure}
    \caption{\textbf{Ablation Study.} (a) Ablation study for ODE integration steps: $[30,50,100]$. (b) Ablation study for 3 different ODE solver: $[\text{Euler},\text{Runge–Kutta},\text{Exponential Euler}]$. (c) Ablation study for truncation model, truncation number $R=256$. Shaded regions from Fig (a)-(c) indicate ±0.5 standard error over 135 dynamical systems, each with 20 trajectories originating from randomly sampled initial conditions.}
    \label{fig:AblationStudy}
\end{figure}

\section{Additional Empirical Results on Real-World Datasets} \label{app_realworld}

Although this work focuses on scientific datasets arising in nonlinear dynamics, we provide a broader evaluation and  compare FreeFM with several representative models (using default-config baselines) in terms of forecasting performance, measured by Mean Squared Error (MSE) and Continuous Ranked Probability Score (CRPS), on several real-world datasets of varying dimensionality.

The real-world benchmarks comprise Electricity, Exchange Rate, Solar Energy, and Traffic from the multivariate time series repository\footnote{\url{https://github.com/laiguokun/multivariate-time series-data}} of \cite{lai2018modeling}, the Bitcoin dataset and the Australian electricity demand dataset from the Monash Time Series Forecasting Repository\footnote{\url{https://forecastingdata.org/}} \cite{godahewa2021monash}.
Detailed descriptions of the real-world dataset we used are as follows:

\begin{itemize}
    \item Australian Electricity: half-hourly electricity demand across Australian states
    \item Exchange Rate: daily exchange rates for 8 countries
    \item  Bitcoin: 18-dimensional daily financial time series
    \item Solar: 10-minute solar power production measurements
    \item Electricity: hourly electricity consumption from the UCI Electricity Load Diagrams dataset
    \item Traffic: hourly road occupancy from the California PeMS Bay Area system
\end{itemize}
These datasets span diverse real-world domains and exhibit noise, heterogeneity, and potential non-stationarity (e.g., the Bitcoin dataset).

\subsection{Short-Term Forecasting} \label{app_shortreal}
All experiments follow a unified protocol. For each dataset, we retain the last 1005 time points of the full trajectory. The first 1000 points are treated as the observed segment, and we evaluate 5-step-ahead forecasts on the final 5 held-out points. The observed segment is further split chronologically, with 70\% used for training and 30\% for validation. All variables are standardized using z-score normalization computed from the training split only. The use of the last 1000 points is a task-design choice (for consistency across all datasets) rather than an attempt to ignore longer-term heterogeneity.

For model configuration, FreeFM performs model selection over a small grid of hyperparameters and then generates forecasts autoregressively using top-$R$ mode (with $R=16$ for all except Solar, where we use $R=256$), Euler integration with 100 ODE steps, and 50 particles. VanillaFM uses a validation-based search over context length, hidden dimension, number of layers, learning rate, and noise scale, followed by retraining with the selected configuration. For the benchmark baselines, we use implementations from the Darts library \cite{herzen2022darts}, including Transformer, LSTM, N-BEATS, and TiDE.


\begin{sidewaystable}[!t]
\centering
\caption{Forecasting performance in terms of MSE on real-world datasets for horizon 5.}
\label{tab:mse_results}
\small
\setlength{\tabcolsep}{1pt}
\begin{tabular}{lcccccc}
\toprule
Model & Aus. E. ($d=5$) & Exchange ($d=8$) & Bitcoin ($d=18$) & Solar ($d=137$) & Elec. ($d=321$) & Traffic ($d=862$) \\
\midrule
FreeFM      & $\underline{0.0248 \pm 0.0025}$ & $\underline{0.0085 \pm 0.0002}$ & $\underline{5.1263 \pm 0.0202}$ & $\underline{0.0004 \pm 0.0003}$ & $\underline{1.0900 \pm 0.0009}$ & $0.1090 \pm 0.0003$ \\
VanillaFM   & $0.0343 \pm 0.0079$ & $1.4640 \pm 0.3683$ & $12.5831 \pm 1.2431$ & $0.0055 \pm 0.0006$ & $1.8024 \pm 0.1189$ & $\underline{0.0764 \pm 0.0017}$ \\
LSTM        & $0.0611 \pm 0.0362$ & $1.1282 \pm 0.1050$ & $11.4140 \pm 0.7479$ & $0.0016 \pm 0.0015$ & $1.7666 \pm 0.3259$ & $0.9175 \pm 0.3364$ \\
NBEATS      & $0.0620 \pm 0.0128$ & $0.9466 \pm 0.2604$ & $13.0375 \pm 4.1430$ & $0.0090 \pm 0.0121$ & $1.9242 \pm 0.0523$ & $0.0766 \pm 0.0051$ \\
TiDE        & $0.0402 \pm 0.0120$ & $0.3209 \pm 0.2017$ & $7.5923 \pm 2.5347$ & $0.0091 \pm 0.0105$ & $1.4096 \pm 0.0681$ & $0.0819 \pm 0.0267$ \\
Transformer & $0.0680 \pm 0.0295$ & $1.3253 \pm 0.0697$ & $14.3725 \pm 0.8058$ & $0.0010 \pm 0.0007$ & $1.3642 \pm 0.1202$ & $0.1124 \pm 0.0408$ \\
\bottomrule
\end{tabular}
\vspace{1cm}
\centering
\caption{Forecasting performance in terms of CRPS on real-world datasets for horizon 5.}
\label{tab:crps_results}
\small
\setlength{\tabcolsep}{1pt}
\begin{tabular}{lcccccc}
\toprule
Model & Aus. E. ($d=5$) & Exchange ($d=8$) & Bitcoin ($d=18$) & Solar ($d=137$) & Elec. ($d=321$) & Traffic ($d=862$) \\
\midrule
FreeFM      & $\underline{0.0846 \pm 0.0057}$ & $\underline{0.0532 \pm 0.0012}$ & $\underline{1.6801 \pm 0.0038}$ & $\underline{0.0128 \pm 0.0006}$ & $\underline{0.7552 \pm 0.0004}$ & $0.1860 \pm 0.0009$ \\
VanillaFM   & $0.1058 \pm 0.0160$ & $0.8286 \pm 0.1267$ & $2.2955 \pm 0.1405$ & $0.1225 \pm 0.0013$ & $0.8680 \pm 0.0405$ & $\underline{0.1675 \pm 0.0011}$ \\
LSTM        & $0.1898 \pm 0.0688$ & $0.7259 \pm 0.0295$ & $2.4040 \pm 0.1325$ & $0.0260 \pm 0.0120$ & $1.0704 \pm 0.1206$ & $0.7540 \pm 0.1486$ \\
NBEATS      & $0.2001 \pm 0.0204$ & $0.8398 \pm 0.1259$ & $2.9510 \pm 0.4986$ & $0.0545 \pm 0.0262$ & $1.1138 \pm 0.0166$ & $0.1968 \pm 0.0100$ \\
TiDE        & $0.1566 \pm 0.0230$ & $0.4151 \pm 0.1594$ & $2.1409 \pm 0.4213$ & $0.0531 \pm 0.0307$ & $0.9907 \pm 0.0238$ & $0.2062 \pm 0.0420$ \\
Transformer & $0.2064 \pm 0.0584$ & $0.8239 \pm 0.0350$ & $2.6231 \pm 0.0692$ & $0.0259 \pm 0.0107$ & $0.8854 \pm 0.0405$ & $0.2341 \pm 0.0614$ \\
\bottomrule
\end{tabular}
\end{sidewaystable}

\begin{figure}
    \centering
    \includegraphics[width=0.95\linewidth]{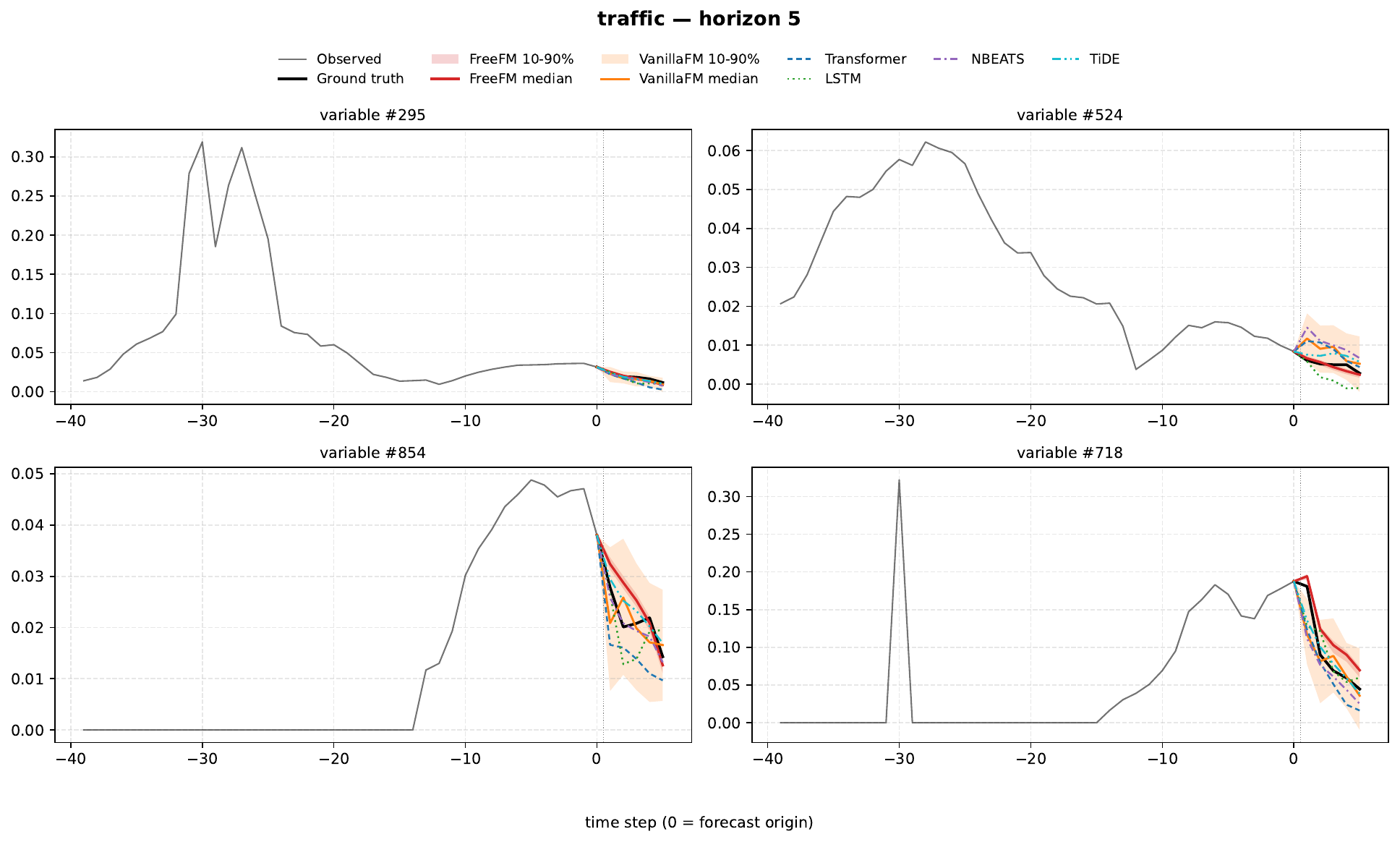}
    \caption{\textbf{Real-World Trajectory Forecasting Examples (Horizon 5).} Predicted trajectories on the Traffic dataset. We show four coordinates forecast by FreeFM and the baseline models over a prediction horizon of five time steps. Shaded regions indicate the 10–90th percentile interval, computed at each future time step from 50 Monte Carlo samples drawn from each model.}
    \label{fig:traffic_trajh5}
\end{figure}

Each experiment is repeated over 5 random seeds, and we report the mean $\pm$ standard deviation. Figure \ref{fig:traffic_trajh5} gives visualizations of the forecasted trajectories for the Traffic dataset. Table \ref{tab:mse_results}-\ref{tab:crps_results} shows that FreeFM consistently outperforms the considered sequence models in low to moderately high-dimensional settings, highlighting the effectiveness of its underlying structure. In very high-dimensional regimes (e.g., the Traffic dataset with $d=862$), the performance becomes more mixed. While still competitive, it is less consistently dominant, which aligns with the known limitations of nonparametric, kernel-based, or nearest-neighbor-type methods. In high dimensions, distance-based similarity measures become less informative, and significantly more data may be required to maintain accurate local approximations. This behavior of FreeFM is also consistent with the closed-form diffusion models in \cite{scarvelis2023closed}, where smoothing/regularization is likewise
needed to mitigate  memorization but does not by itself remove the
high-dimensional limitations of nonparametric estimators.

Overall, these results suggest that FreeFM remains competitive for short-term forecasting even in high-dimensional settings, indicating practical robustness despite the challenges posed by increasing dimensionality.

\subsection{Beyond Short-Term Forecasting} \label{app_longreal}

We further test the limit of FreeFM and the baselines in a longer-horizon  forecasting setting. For each dataset, we retain instead the last 2500 time points of the full trajectory. The first 2450 points are treated as the observed segment, and we evaluate 50-step-ahead forecasts on the final 50 held-out points. The observed segment is further split chronologically as before. All variables are standardized using z-score normalization computed from the training split only. The model configuration for FreeFM and VanillaFM is the same as before. 

\begin{sidewaystable}[!t]
\centering
\caption{Forecasting performance in terms of MSE on real-world datasets for horizon $50$.}
\label{tab:mse_results_h50}
\small
\setlength{\tabcolsep}{1pt}
\begin{tabular}{lcccccc}
\toprule
Model & Aus. E. ($d=5$) & Exchange ($d=8$) & Bitcoin ($d=18$) & Solar ($d=137$) & Elec. ($d=321$) & Traffic ($d=862$) \\
\midrule
FreeFM      & $0.5600 \pm 0.0682$ & $\underline{0.4706 \pm 0.0024}$ & $\underline{10.5514 \pm 1.0760}$ & $0.8675 \pm 0.7373$ & $2.1937 \pm 0.0592$ & $0.3087 \pm 0.0123$ \\
VanillaFM   & $\underline{0.3017 \pm 0.1273}$ & $13.6606 \pm 1.5202$ & $15.5929 \pm 2.1633$ & $\underline{0.1497 \pm 0.1485}$ & $\underline{0.8512 \pm 0.0575}$ & $\underline{0.2238 \pm 0.0189}$ \\
LSTM        & $0.5842 \pm 0.3684$ & $10.7276 \pm 3.1165$ & $19.5922 \pm 2.0108$ & $0.5650 \pm 0.7364$ & $2.0271 \pm 0.7627$ & $1.5010 \pm 0.7156$ \\
NBEATS      & $0.5358 \pm 0.1991$ & $9.6630 \pm 1.5681$ & $11.1885 \pm 3.4449$ & $0.6536 \pm 0.7858$ & -- & $0.2298 \pm 0.0239$ \\
TiDE        & $0.8860 \pm 0.5535$ & $10.3629 \pm 7.4060$ & $27.6160 \pm 35.7030$ & $0.1664 \pm 0.2195$ & $1.0334 \pm 0.1614$ & $0.4046 \pm 0.2179$ \\
Transformer & $0.5309 \pm 0.3648$ & $7.5731 \pm 0.9196$ & $18.4517 \pm 3.3422$ & $1.1303 \pm 2.1847$ & $1.1184 \pm 0.2967$ & $0.9943 \pm 0.7531$ \\
\bottomrule
\end{tabular}
\vspace{1cm}
\centering
\caption{Forecasting performance in terms of CRPS on real-world datasets for horizon $50$.}
\label{tab:crps_results_h50}
\small
\setlength{\tabcolsep}{1pt}
\begin{tabular}{lcccccc}
\toprule
Model & Aus. E. ($d=5$) & Exchange ($d=8$) & Bitcoin ($d=18$) & Solar ($d=137$) & Elec. ($d=321$) & Traffic ($d=862$) \\
\midrule
FreeFM      & $0.4681 \pm 0.0298$ & $\underline{0.4811 \pm 0.0014}$ & $\underline{1.9399 \pm 0.0809}$ & $0.3814 \pm 0.1132$ & $0.9836 \pm 0.0210$ & $0.3027 \pm 0.0119$ \\
VanillaFM   & $\underline{0.3440 \pm 0.0880}$ & $3.1322 \pm 0.2022$ & $2.5521 \pm 0.2497$ & $\underline{0.2018 \pm 0.0781}$ & $\underline{0.5031 \pm 0.0199}$ & $\underline{0.2345 \pm 0.0121}$ \\
LSTM        & $0.5252 \pm 0.1883$ & $2.7034 \pm 0.5403$ & $3.1461 \pm 0.2806$ & $0.4158 \pm 0.4416$ & $1.0460 \pm 0.2496$ & $0.9456 \pm 0.2516$ \\
NBEATS      & $0.4930 \pm 0.1055$ & $2.5811 \pm 0.2892$ & $2.5826 \pm 0.5024$ & $0.4861 \pm 0.4625$ & -- & $0.2842 \pm 0.0271$ \\
TiDE        & $0.7122 \pm 0.2999$ & $2.3743 \pm 0.8921$ & $3.1367 \pm 1.7973$ & $0.2700 \pm 0.2080$ & $0.7184 \pm 0.0760$ & $0.4277 \pm 0.1514$ \\
Transformer & $0.5056 \pm 0.1865$ & $2.0540 \pm 0.1397$ & $3.1877 \pm 0.3875$ & $0.4908 \pm 0.8088$ & $0.7303 \pm 0.1072$ & $0.6764 \pm 0.3228$ \\
\bottomrule
\end{tabular}
\end{sidewaystable}

\begin{figure}
    \centering
    \includegraphics[width=0.95\linewidth]{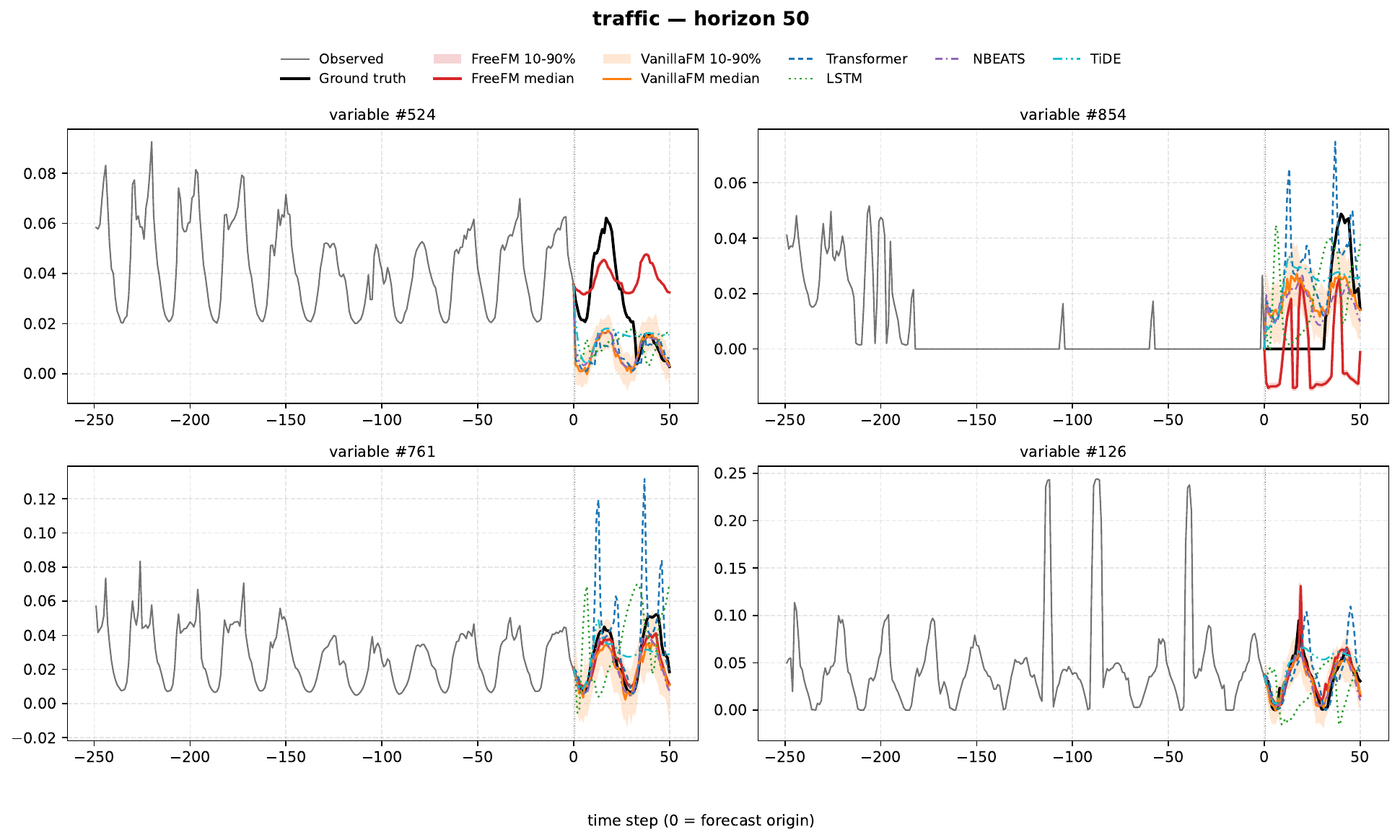}
    \caption{\textbf{Real-World Trajectory Forecasting Examples (Horizon 50).} Predicted trajectories on the Traffic dataset. We show four coordinates forecast by FreeFM and the baseline models over a prediction horizon of five time steps. Shaded regions indicate the 10–90th percentile interval, computed at each future time step from 50 Monte Carlo samples drawn from each model.}\label{fig:traffic_trajh50}
\end{figure}

Figure \ref{fig:traffic_trajh50} gives visualizations of the forecasted trajectories for the Traffic dataset. Tables \ref{tab:mse_results_h50}-\ref{tab:crps_results_h50} report the forecasting results for horizon $50$. In this setting, all the considered methods struggle, and the performance pattern is mixed rather than uniformly favorable to a single method. VanillaFM achieves the best results on four of the six datasets, while FreeFM performs best on Exchange and Bitcoin in both MSE and CRPS. These results indicate that FreeFM is competitive, but not uniformly dominant, in medium-horizon forecasting on real-world multivariate series. Note that we omit the N-BEATS result on Electricity at horizon $50$, since the saved seeded runs were numerically unstable: only two of five runs were finite, and those finite runs exhibited overflow-scale errors.

A clearer trend emerges in relation to dimensionality. On very high-dimensional datasets, such as Traffic ($d=862$), FreeFM remains competitive but no longer attains the best performance, which is consistent with the limitations of nonparametric nearest-neighbor type mechanisms in high dimensions, where local distance-based similarity becomes less informative. At the same time, the strong results on Exchange and Bitcoin show that FreeFM can still be highly effective on selected datasets, suggesting that its performance depends not only on dimension but also on the structure and regularity of the underlying dynamics.

Finally, we note that results on these high-dimensional noisy datasets should not necessarily be expected to mirror those on the low-dimensional chaotic benchmarks, as the two settings differ substantially in dimensionality, complexity, and forecasting difficulty.

\end{document}